\newcommand{\mytitle}
{TempLe: Learning Template of Transitions for Sample Efficient Multi-task RL}
\def\fighome{figures}
\pgfplotsset{compat=newest}
\newcommand*{\myvector}[1]{\bm{#1}}
\newcommand{\modname}{{TempLe}\xspace}
\newcommand{\ourmod}{{O-TempLe}\xspace}
\newcommand{\ourmodfull}{{Online Template Learning}\xspace}
\newcommand{\ourmodtwo}{{FM-TempLe}\xspace}
\newcommand{\ourmodtwofull}{{Finite-Model Template Learning}\xspace}
\newcommand{\patnamefull}{{Transition Template}\xspace}
\newcommand{\patnamefulls}{{Transition Templates}\xspace}
\newcommand{\patnamelowercase}{{transition template}\xspace}
\newcommand{\patname}{\textsf{TT}\xspace}
\newcommand{\patnames}{\textsf{TT}s\xspace}
\newcommand{\probgapname}{{ranking gap}\xspace}
\newcommand{\states}{\mathcal{S}}
\newcommand{\actions}{\mathcal{A}}
\newcommand{\pattern}{\mathbf{g}}
\newcommand{\patp}{\mathbf{g}^{(p)}}
\newcommand{\patr}{g^{(r)}}
\newcommand{\patsp}{g^{(p)}}
\newcommand{\patset}{\mathcal{G}}
\newcommand{\patvisset}{\mathcal{O}}
\newcommand{\patvis}{\mathbf{o}}
\newcommand{\patvisp}{\mathbf{o}^{(N)}}
\newcommand{\patvisr}{o^{(R)}}
\newcommand{\mdpset}{\mathcal{C}}
\newcommand{\permut}{\mathbf{\sigma}}
\newcommand{\thressm}{m_s}
\newcommand{\threslg}{m}
\newcommand{\mingap}{\omega}
\newcommand{\kset}{\mathcal{K}}
\def\tha{{\mbox{\tiny th}}}
\def\beq{\begin{equation}}
\def\eeq{\end{equation}\noindent}
\newcommand{\bp}{\begin{psfrags}}
\newcommand{\ep}{\end{psfrags}}
\newcommand{\bc}{\begin{center}}
\newcommand{\ec}{\end{center}}
\newtheorem{theorem}{Theorem}
\newtheorem{lemma}[theorem]{Lemma}
\newtheorem{definition}[theorem]{Definition}
\title{\mytitle}
\author{
    Yanchao Sun, \textsuperscript{\rm 1} 
    Xiangyu Yin,  \textsuperscript{\rm 2} 
    Furong Huang \textsuperscript{\rm 1}
    \\
}
\begin{document}

\maketitle

\begin{abstract}
Transferring knowledge among various environments is important for efficiently learning multiple tasks online. Most existing methods directly use the previously learned models or previously learned optimal policies to learn new tasks. However, these methods may be inefficient when the underlying models or optimal policies are substantially different across tasks. In this paper, we propose Template Learning (TempLe), a PAC-MDP method for multi-task reinforcement learning that could be applied to tasks with varying state/action space without prior knowledge of inter-task mappings. TempLe gains sample efficiency by extracting similarities of the transition dynamics across tasks even when their underlying models or optimal policies have limited commonalities. We present two algorithms for an ``online'' and a ``finite-model'' setting respectively. We prove that our proposed TempLe algorithms achieve much lower sample complexity than single-task learners or state-of-the-art multi-task methods. We show via systematically designed experiments that our TempLe method universally outperforms the state-of-the-art multi-task methods (PAC-MDP or not) in various settings and regimes.
\end{abstract}


\section{Introduction}\label{sec:intro}

\textit{Multi-task reinforcement learning (MTRL)}~\cite{wilson2007multi,brunskill2013sample,modi2018markov} 
requires the agent to efficiently tackle a series of tasks.
A key goal of MTRL is to improve per-task learning efficiency compared against single-task learners, by using the knowledge obtained from previous tasks to learn new tasks. 
Despite the recent rapid progress in MTRL
, some issues remain unsettled. 
 \emph{(1) Guaranteed sample efficiency.} 
Only a few existing methods have guarantees on sample efficiency, the most common bottleneck of RL algorithms. 
\emph{(2) Correctness v.s. efficiency.} 
An overly aggressive application of previous knowledge may transfer incorrect knowledge and deteriorate the performance on new tasks,  resulting in a ``negative transfer''~\cite{taylor2009transfer}. 
However, if an agent is overly conservative in applying previously learned knowledge, much of the similarities between tasks will be ignored, resulting in an ``inefficient transfer''.
It is nontrivial to balance between the correctness and efficiency or achieve both. 
\emph{(3) Varying state/action space across tasks.} 
In practice, transferring knowledge learned from smaller environments to learning in larger environments is extremely useful.  
However, most existing works on MTRL assume 
the state/action space is shared across tasks.



In an effort to provide guaranteed sample efficiency for MTRL, \citet{brunskill2013sample} 
propose an algorithm that clusters the underlying Markov Decision Processes (MDPs) of tasks into groups and identifies new tasks as learned groups.
However, transferring knowledge from the clustered MDP models could be an ``inefficient transfer'' if the underlying models are too different to be clustered into a small number of groups. 
Similarly, most existing model-based approaches~\cite{liu2016pac,modi2018markov} only exploit model-level similarities, which also makes it difficult to transfer knowledge among different-sized tasks.

We remedy the aforementioned three issues by 
extraction of more commonalities in tasks without suffering from ``negative transfer''. 
A motivating example is the navigation problem in mazes with slippery floors which result in stochastic transitions. For instance, the agent taking an action of going \emph{up} on ice could slip to the \emph{left}, \emph{right} or \emph{down} (instead of \emph{up}) 
with a certain probability determined by the slipperiness of ice.  
The slipperiness of the floor 
depends on the landform of the location, such as sand, marble and ice. 
We show some examples of different combinations/distributions of the landforms in the maze in Figure~\ref{fig:landforms_eg}; the MDP models are drastically different across different mazes, therefore transferring knowledge using similarity of models is inefficient. 

\begin{figure}[!hbp]
  \centering
    \begin{subfigure}[t]{0.24\columnwidth}
    \centering
    \includegraphics[width=0.8\textwidth, bb=0 0 170 170]{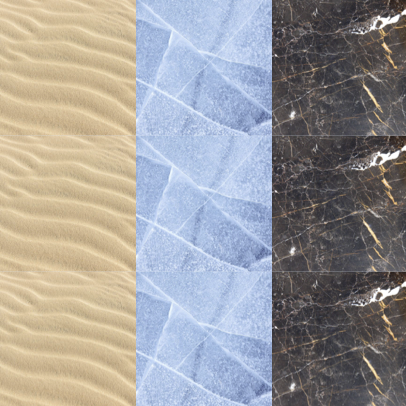}
    \end{subfigure}
    \hfill
    \begin{subfigure}[t]{0.24\columnwidth}
    \centering
    \includegraphics[width=0.8\textwidth, bb=0 0 170 170]{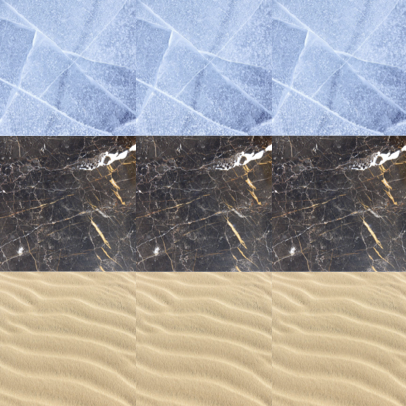}
    \end{subfigure}
    \hfill
    \begin{subfigure}[t]{0.24\columnwidth}
    \centering
    \includegraphics[width=0.8\textwidth, bb=0 0 170 170]{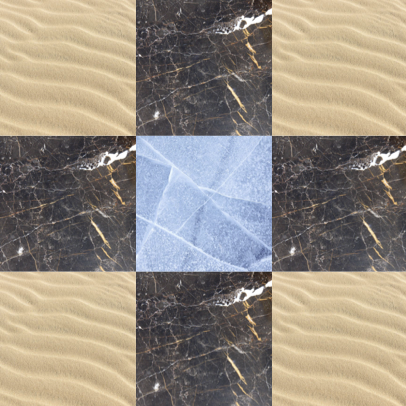}
    \end{subfigure}
    \hfill
    \begin{subfigure}[t]{0.24\columnwidth}
    \centering
    \includegraphics[width=0.8\textwidth, bb=0 0 340 340]{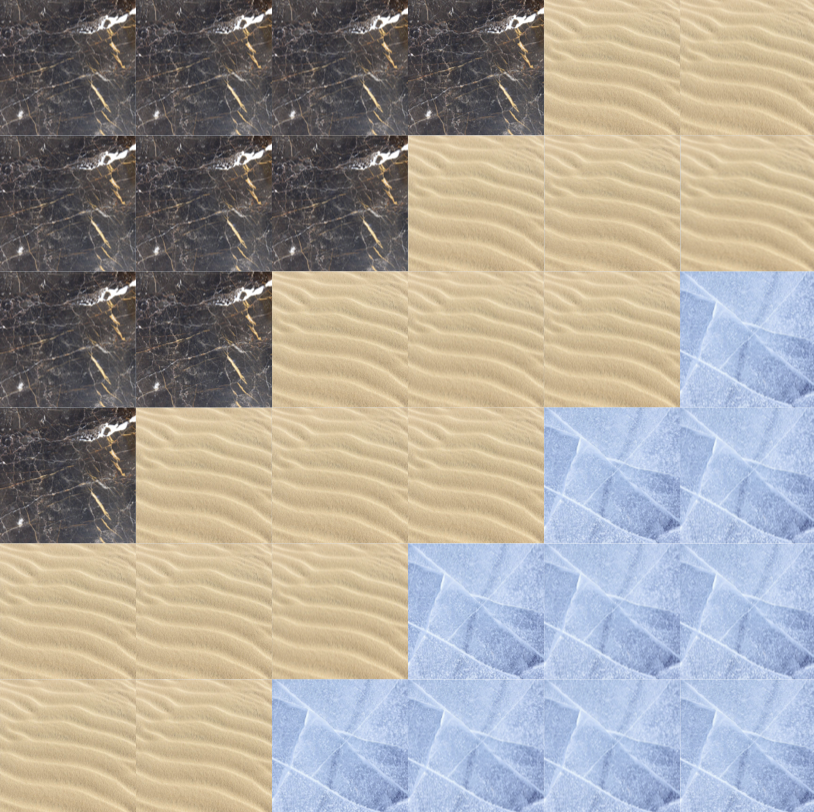}
    \end{subfigure}
   \vspace*{-0.5em}
    \caption{Examples of landform combinations in Maze, where \protect\includegraphics[height=0.7em, bb=0 0 500 500]{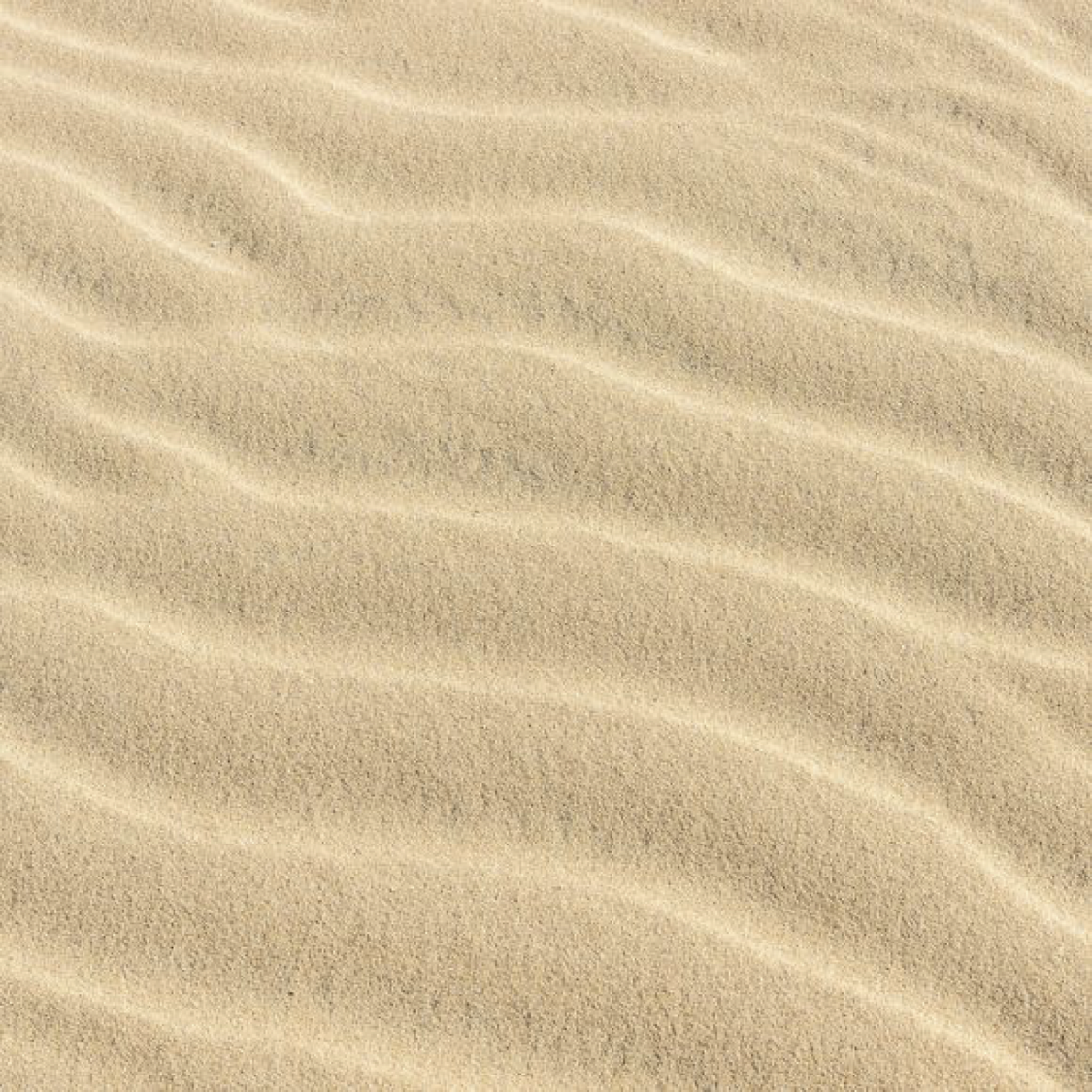} stands for sand, \protect\includegraphics[height=0.7em, bb=0 0 500 500]{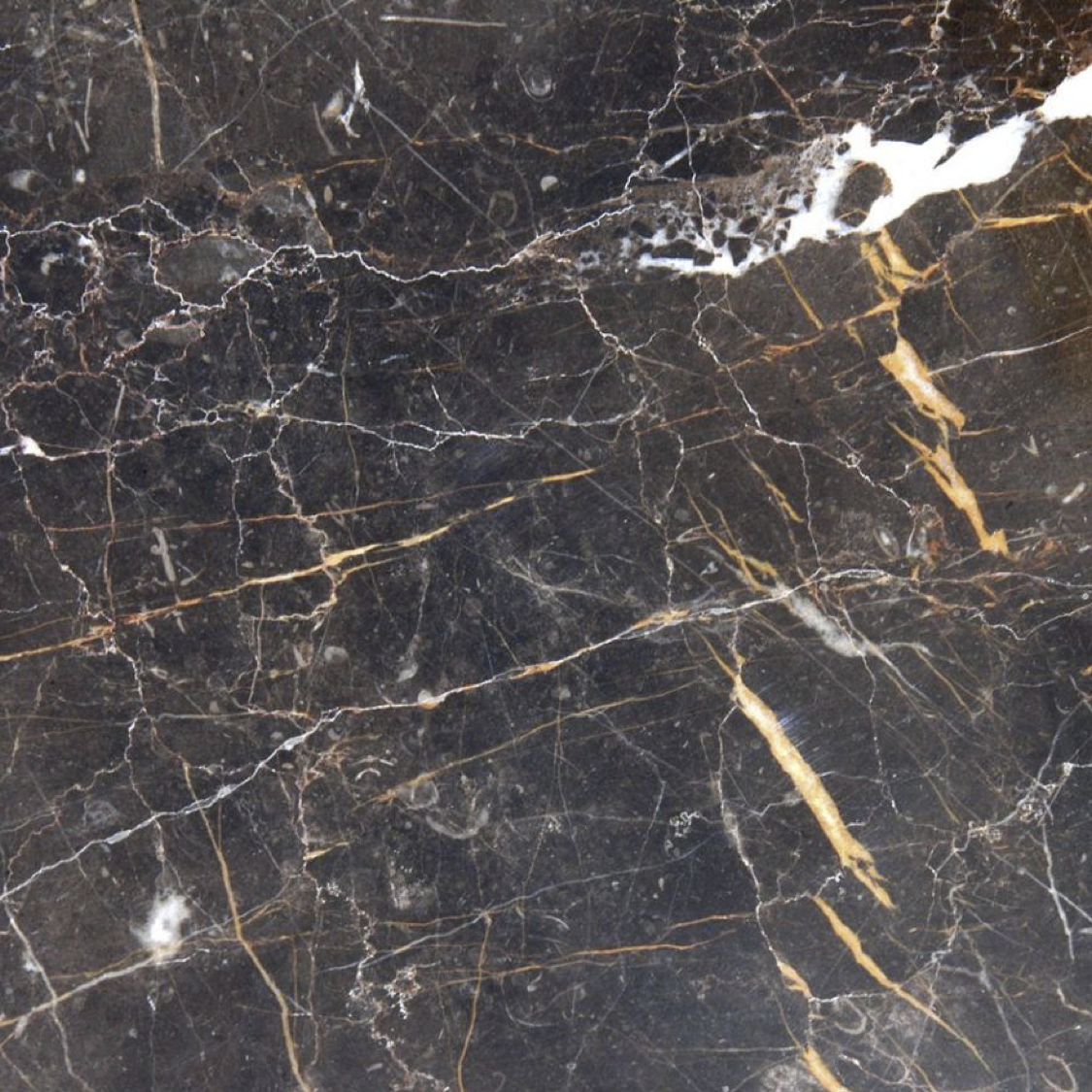} stands for marble and \protect\includegraphics[height=0.7em, bb=0 0 500 500]{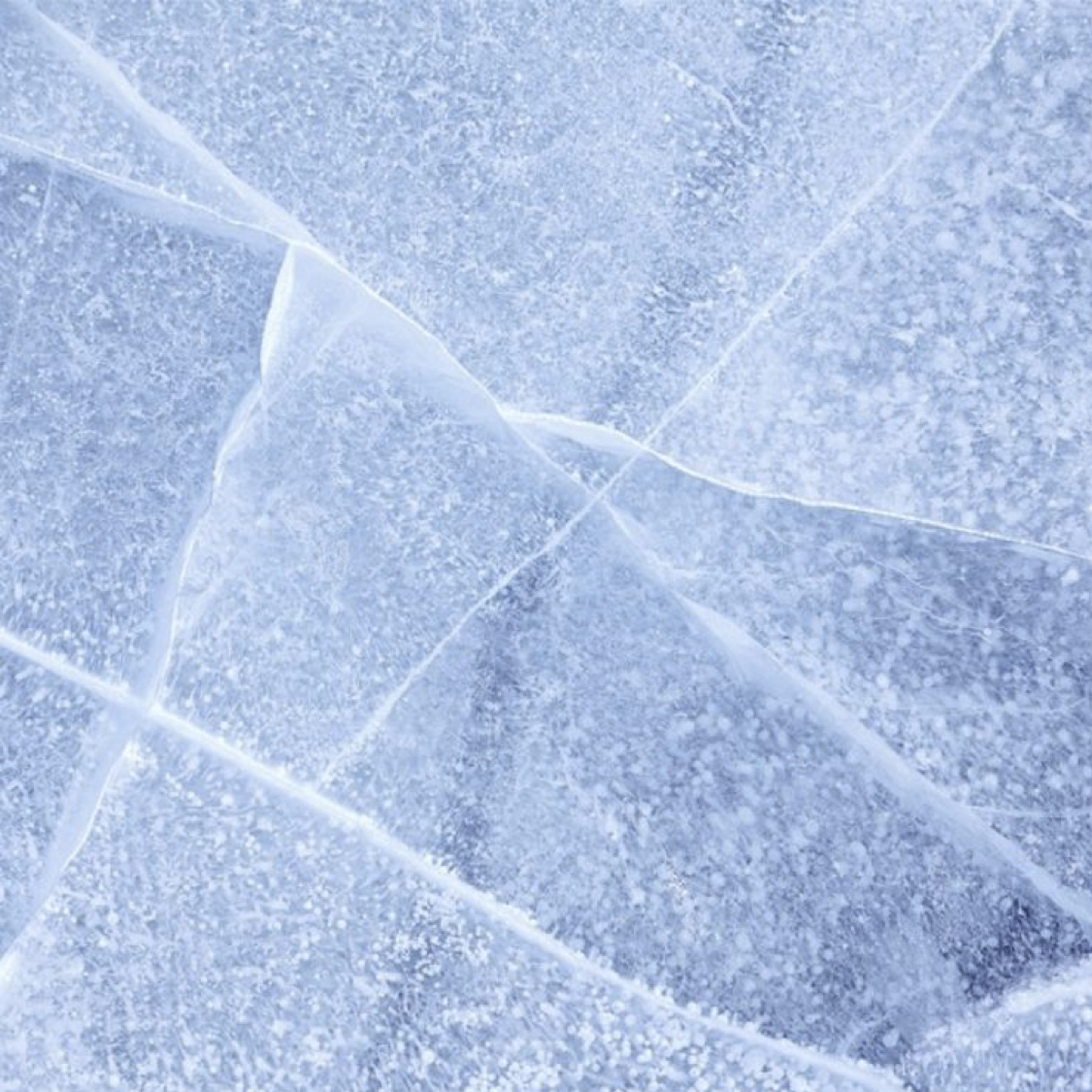} stands for ice. Different landforms have different slippery probability, thus different transition dynamics. Consider a $\sqrt{S} \times \sqrt{S}$ maze with $G$ types of landforms. 
    There could be up to $G^S$ different MDP models, making it prohibitive to extract similarities from the models. 
    However, the types of underlying transition dynamics associated with each state/location are governed by the number of distinct landforms $G$. }
    \label{fig:landforms_eg}
\end{figure}

However, our key observation is that the same landforms share the transition dynamics, and knowledge could be transferred from sand to sand, marble to marble, and ice to ice.
More importantly, we can extend the knowledge learned from a maze to any-sized mazes consisting of these same types of landforms (e.g., the 4th example in Figure~\ref{fig:landforms_eg}). 
With this idea, we achieve more effective and efficient {knowledge transfer by exploiting similarities at the level of} \emph{state-action transition dynamics} \textbf{instead of} \emph{MDP model dynamics}, allowing knowledge transfer between tasks with \textbf{varying state/action space without prior knowledge of inter-task mappings.}
The challenge of learning is now reduced to extracting such ``landforms'' without prior knowledge of the tasks.

We propose a novel method called \emph{Template Learning ({TempLe})} for MTRL, which provably guarantees sample efficiency and achieves efficient transfer learning for multi-task reinforcement learning with varying state/action space.
We extract templates for similar state-action transition dynamics (landforms in the example above), called \emph{\patnamefulls}, and confidently improve the efficiency of transition dynamics estimation in new tasks. 
By sharing experience among state-action pairs associated with similar templates, the learning process is expedited. 
We introduce two versions of TempLe: one is for online MTRL without prior knowledge about models, named \emph{\ourmodfull (\ourmod)}, the other further improves the learning efficiency based on a finite-model assumption, named \emph{\ourmodtwofull (\ourmodtwo)}.


\noindent \textbf{Summary of Contributions:}
\textbf{(1)} TempLe achieves a significant \emph{reduction of sample complexity} compared with state-of-the-art PAC-MDP (Probably Approximately Correct in Markov Decision Processes) algorithms. 
\textbf{(2)} TempLe covers two realistic settings, solving MTRL problems in different regimes -- \emph{with or without prior knowledge of models}. 
\textbf{(3)} To the best of our knowledge, TempLe is the \emph{first PAC-MDP algorithm} that is able to learn tasks with \emph{varying state/action spaces} without any prior knowledge of inter-task mappings.

\section{Related Work}

\noindent \textbf{PAC-MDP MTRL Algorithms.}
\citet{brunskill2013sample} present the first formal analysis of the sample complexity for MTRL. 
They propose a two-phase algorithm and prove that per-task sample complexity is reduced compared with single-task learners. 
However, they require all tasks coming from a small number of models, and when the number of distinct models is large, their algorithm becomes similar to single-task learning. 
In this paper, we show our proposed methods outperform the method provided by~\citet{brunskill2013sample} both in theory and in experiments.
There are other PAC-MDP algorithms for multi-task RL, considering the problem from different perspectives. 
For example, \citet{brunskill2014pac} discuss lifelong learning in semi-Markov decision processes (SMDPs), where options are involved. 
\citet{liu2016pac} extend the finite-model method~\cite{brunskill2013sample} to continuous state space. 
\citet{feng2019does} and \citet{tirinzoni2020sequential} significantly reduce the sample complexity, but are under the assumption of generative models.
\citet{modi2018markov} improve the learning efficiency through the assistance of side informations. 
\citet{abel2018policy} propose MaxQInit, which transfers the maximum Q values across tasks. We empirically compare with MaxQInit in this paper.

\noindent \textbf{Reducing MDPs to Compact Ones.} There is a line of research that reduces the original MDPs to compact ones to achieve sample efficiency, including Relocatable Action Model (RAM)~\cite{leffler2007efficient}, homomorphism~\cite{ravindran2003smdp}, and $\epsilon$-equivalent MDP~\cite{even2003approximate}. However, since learning such compact structures is usually difficult (e.g., learning homomorphism is NP-hard as noted by~\citet{soni2006using}), most of the previous works require some prior knowledge.
To give a detailed comparison, \textbf{our algorithm}
\textbf{(1) requires no prior knowledge about the MDP structure.}
RAM~\cite{leffler2007efficient} requires knowledge of the ``type'' of all states (walls, pits, etc) and the next-state function of all states and type-action outcomes. Its continuous extension~\cite{brunskill2012corl} also needs knowledge of the types. Homomorphism works~\cite{ravindran2004algebraic,ravindran2003smdp,soni2006using} require knowledge of (candidate) homomorphisms to compress an SMDP or transfer knowledge between MDPs. 
\textbf{(2) works for general RL problems with PAC guarantee.}
Although~\citet{leffler2005efficient} (learns latent structure by clustering) and~\citet{sorg2009transfer} (learns soft homomorphisms) provide methods that do not require knowledge of the structure, \citet{leffler2005efficient} study a simplified non-MDP problem where actions do not influence state transitions, and~\citet{sorg2009transfer} do not provide theoretical guarantees when the target model is not known in advance.

Overall, our method is different from the above works, as we do not pre-define the compact structure. Instead, we observe that the transition dynamics, if permuted into descending order, could be naturally grouped to some template. Notably, we \emph{learn} the similarities rather than assuming knowledge of them. Our method could be more practical than the above works~\cite{leffler2007efficient,leffler2005efficient,brunskill2012corl,ravindran2004algebraic,ravindran2003smdp,soni2006using,sorg2009transfer} in multi-task RL, since a new task is often drawn randomly and knowing its structure in advance could be unrealistic.

\noindent \textbf{Comparison with C-UCRL~\cite{asadi2019model}.} C-UCRL learns a single task by leveraging a state-action equivalence structure that is similar with our proposed templates. They provide an improved regret bound in the case of a known equivalence structure. However, in the more challenging case of an unknown equivalence structure, as is the setting of our paper, no regret bound is provided. 
In contrast, our work provides a sample complexity guarantee under the unknown equivalence structure scenario. 
In addition, C-UCRL does not extend trivially to multi-task setting since it find a coarse partition of all state-action pairs at every step, while in MTRL, new state-action pairs come with new tasks, and negative transfer problem may exist when the equivalence structure is unknown.



\section{Preliminaries and Notations}
\label{sec:prelim}
\noindent \textbf{Standard RL Notations.} 
An MDP is defined as a tuple $ \langle \mathcal{S},\mathcal{A},p(\cdot|\cdot,\cdot),r(\cdot, \cdot),\mu,\gamma \rangle$, where $\mathcal{S}$ is the state space (with cardinality $S$); $\mathcal{A}$ is the action space (with cardinality $A$); $p(\cdot|\cdot,\cdot)$ is the transition probability function with $p(s^\prime | s, a)$ representing the probability of transiting to state $s^\prime$ from state $s$ by taking action $a$; $r(\cdot, \cdot)$ is the reward function with $r(s, a)$ recording the reward achieved by taking action $a$ in state $s$; $\mu$ is the initial state distribution; $\gamma$ is the discount factor. 
Denote the maximum value of $r$ as $R_{max}$. 
Without loss of generality, suppose $0\leq r(s,a) \leq 1$ for all $(s,a)$, so $R_{\max}=1$.
Here $p(\cdot|\cdot,\cdot)$ and $r(\cdot, \cdot)$ together are the \textbf{model dynamics} of the MDP.

At every step, the agent selects an action based on the current \textit{policy} $\pi$.  
The value function of a policy $V^\pi(s) $, which evaluates the performance of a policy $\pi$ , is the expected future reward gained by following $\pi$ starting from $s$.
Similarly, the action value $Q^\pi(s,a)$ is the expected future reward starting from pair $(s,a)$. 
In an RL task, an agent searches for the optimal policy by interacting with the MDP.  
We use $V_{\max}$ to denote the upper bound of $V$. 
In the discounted setting $V_{\max}=\frac{R_{\max}}{1-\gamma}=\frac{1}{1-\gamma}$.

\noindent \textbf{Sample Complexity.} The general goal of RL algorithms is to learn an optimal policy for an MDP with as few interactions as possible. 
For any $\epsilon>0$ and any step $h > 0$, if the policy $\pi_h$ generated by an RL algorithm $L$ satisfies $V^*-V^{\pi_h} \leq \epsilon$, we say $L$ is near-optimal at step $h$. If for any $0 < \delta < 1$, the total number of steps that $L$ is not near-optimal is upper bounded by a function $\zeta(\epsilon, \delta)$ with probability at least $1-\delta$, then $\zeta$ is called the \textit{sample complexity}~\cite{kakade2003sample} of $L$.


\section{Learning with Templates}




As motivated in the example described in Section~\ref{sec:intro}, the main idea of this work is to boost the learning process by aggregating similar state-action transition dynamics (see Definition~\ref{def:dynamics}). 
We permute the elements of transition dynamics/probability vectors to be in descending order, and aggregate these permuted transition probabilities to obtain ``templates of transition'' defined in Definition~\ref{def:pat}. 
We show that the templates are effective abstractions of the environment. 

\subsection{\patnamefull: An Abstraction of Dynamics}
\label{sec:model}

In this section, we introduce a more compact way to represent the model dynamics of an MDP.
We first formally define the transition dynamics of a state-action (s-a) pair.


\begin{definition} [State-Action (s-a) Transition Dynamics]
\label{def:dynamics}
    For any state-action pair $(s,a)$, its \textbf{transition dynamics} is defined as a length-$(S+1)$ vector $\theta(s,a) = [p(s_1|s,a), p(s_2|s,a), \cdots, p(s_S|s,a), r(s,a)]$, where $S$ is the number of states.
\end{definition}

Note that s-a transition dynamics are different from the model dynamics, which characterize the transitions for all s-a pairs. 
In s-a transition dynamics, the first $S$ elements form the transition probability vector $p(\cdot|s,a)$. 
As defined in most RL literatures~\cite{kakade2003sample,brunskill2013sample}, the order of elements in $p(\cdot|s,a)$ is the natural order of the states. 
In contrast, we re-order the elements of $p(\cdot|s,a)$ by their values, and obtain a more compact representation of the transition dynamics called \emph{\patnamefull}. 

\begin{definition} [\patnamefull]
\label{def:pat}
    A \textbf{\patnamefull(\patname)} $\pattern$ is defined as a tuple $(\patp, \patr)$, where $\patp \in \mathbb{R}^{S}$ is a transition probability vector with non-increasingly ordered elements,
    i.e., $\sum_{i=1}^{S}\patsp_i = 1$ and $\patsp_i \geq \patsp_j \geq 0, \forall 1 \leq i \leq j \leq S$; $0 \leq \patr \leq 1$ is a scalar representing the reward. 
\end{definition}

Any s-a transition dynamics can be permuted to an unique \patname by re-arranging the transition probability vector $p(\cdot|s,a)$ in a decreasing order and maintaining the reward $r(s,a)$ to $\patr$, i.e., $\pattern_{(s,a)}=(\mathrm{desc}(p(\cdot|s,a)),r(s,a))$, where $\mathrm{desc}$ orders the elements of $p(\cdot|s,a)$ from the largest value to the smallest value.
For example, if $\theta(s_1,a_1)=[0.3,0.7,0,1]$, and $\theta(s_2,a_2)=[0,0.3,0.7,1]$, then $(s_1,a_1)$ and $(s_2,a_2)$ have the same \patname $([0.7,0.3,0],1)$, although their s-a transition dynamics are different.

A \patname is a representation of multiple s-a transition dynamics with some similarities.
It ignores how the s-a pair transits to a specific next state, but only considers the  
patterns of transition probabilities, allowing more efficient exploitation of similarities. 
An intuitive example is given in Figure 4 
in Appendix A\footnote{Appendix can be found on \url{https://arxiv.org/abs/2002.06659} }.,
 where there are 100 distinct s-a transition dynamics, but only 2 distinct \patnames. Appendix F.5 
 further discusses the universal existence of such similarities.

\subsection{Empirical Estimation of \patnamefulls}
\label{sec:estimation}
Section~\ref{sec:model} defines \patname based on the underlying s-a transition dynamics. However, in reality, we do not have access to the underlying dynamics. In model-based RL, a key step is to estimate the dynamics and to build a model of the environment. 
We now illustrate the estimation of \patnames, as well as how \patnames augments the learning process.

\noindent\textbf{The conventional estimation of s-a transition dynamics. }
A direct estimate of $\theta(s,a)$ is obtained through experience,
$
\hat{\theta}(s,a) = [\frac{n(s,a,s_1)}{n(s,a)}, \frac{n(s,a,s_2)}{n(s,a)},\cdots, \frac{n(s,a,s_S)}{n(s,a)},\frac{R(s,a)}{n(s,a)}],
$
where $n(s,a,s^\prime)$ is the number of observations of transitioning from $s$ to $s^\prime$ by taking action $a$, $n(s,a)$ is the total number of observations of $(s,a)$, and $R(s,a)$ is the cumulative rewards obtained by $(s,a)$.
An accurate estimate of the transition dynamics $\theta(s,a)$ requires a large enough number of observations $n(s,a)$ according to the theory of concentration bounds. 
Therefore, it is sample-consuming to accurately estimate the transition dynamics of each s-a pair in this way.

\noindent\textbf{Augmented estimation of s-a transition dynamics.}
As discussed in Section~\ref{sec:model}, different s-a pairs may share the same \patnames. 
Our goal is then to aggregate the estimations of s-a transition dynamics associated with the same \patnames. 
We introduce the following process to obtain estimates of all s-a transition dynamics: \\
\emph{(1) rough estimation}: obtain $\widehat{\theta}(s,a)=[\frac{\mathbf{n}(s,a,\cdot);R(s,a)}{n(s,a)}]$ for each $(s,a)$ with a small $n$; \\
\emph{(2) permutation}: permute each $\widehat{\theta}(s,a)$ to its corresponding permuted estimates $\tilde{\myvector{g}}_{(s,a)}$; \\
\emph{(3) template identification}: identify the group of the permuted estimate $\tilde{\myvector{g}}_{(s,a)}$ such that permuted estimates are similar within the group, and obtain a more confident estimate of \patname $\widehat{\pattern}$ aggregating within-group statistics.\\
\emph{(4) augmentation}: for every $(s,a)$, obtain a more confident estimate of the transition dynamics by permuting back its corresponding \patname with accumulated knowledge.

The noisy estimate of transition dynamics will not render error other than the smaller amount of noise in estimated transition templates if it is identified into the right group. To guarantee accurate identification, the ordering of the elements in the noisy estimate should be consistent with the ground truth. 
Therefore, the consistency of our estimation depends on \patname gap as defined in Definition 5 
and ``\probgapname'' as defined in Definition 8 
(see Appendix D 
for details). An example in Appendix A.1 
shows how augmented estimation helps save a large number of samples compared against the conventional estimation.


Now we are ready to formally introduce our algorithms in two settings, Online MTRL and Finite-Model MTRL.

\begin{algorithm}[!tbp]
   \caption{\ourmodfull (\ourmod)}
   \label{alg:main}
   \hspace*{\algorithmicindent} \textbf{Input:} 
   user-specified \patname gap $\hat{\tau}$; error tolerance $\epsilon$; discount factor $\gamma$; regular known threshold $\threslg$; small known threshold $\thressm$ \\
   \hspace*{\algorithmicindent} \textbf{Output} 
   Near-optimal policies $\{\pi_t\}_{t=1,2,\cdots}$
\begin{algorithmic}[1]
   \State Initialize an empty \patname group set $\patset$ and \patname visit set $\patvisset$    
   \For{$t \gets 1,2,\cdots$}
    \State Receive a task $M_t$ 
    \State Initialize visits $\bm{n}(s,a,\cdot)\gets\bm{0}$, accumulative rewards $R(s,a)\gets 0$, $\forall (s,a) \in (\states, \actions)$, an empty known state-action set $\kset$, and an initial policy $\pi$
    \For{$h \gets 1,2,\cdots,H$}
      \State Take action $a_h \gets \pi(s_h)$, get $s_{h+1}$ and $r_h$
      \State Update visits $n(s_h,a_h,s_{h+1})$ and $R(s_h,a_h)$
      \If{ $(s_h,a_h)\! \notin\! \kset $ \textbf{and}  $ \lVert \bm{n}(s_h,a_h,\cdot) \rVert_{\ell_1}\! =\! \thressm $}  \Comment{{\small{\patname identification with the small threshold}}}
        \State {\small{$\tilde{\pattern}, \patvis_{\tilde{\pattern}}, \sigma \gets$ \Call{gen-TT}{$\bm{n}(s_h,a_h,\cdot),R(s_h,a_h)$} }} 
        \If{no $\pattern \in \patset$ is $\hat{\tau}$-close to $\tilde{\pattern}$}
          \State Add $\tilde{\pattern}$ to $\patset$,
              $\patvis_{\tilde{\pattern}}$ to $\patvisset$
        \Else
          \State Find the closest \patname $\pattern^*$ to $\tilde{\pattern}$ 
          \State {\small{\Call{\patname-update}{$\pattern^*, \patvis_{\pattern^*}, \bm{n}(s_h,a_h,\cdot), R(s_h,a_h)$}}} 
          \State {\Call{augment}{$\patvis_\pattern^*$,$\bm{n}(s_h,a_h,\cdot)$,$R(s_h,a_h)$,$\sigma$}} 
        \EndIf
      \EndIf
      \If{ $ (s_h,a_h)\! \notin\! \kset $ \textbf{and}  $ \lVert \bm{n}(s_h,a_h,\cdot) \rVert_{\ell_1}\! \geq\! \threslg $} \Comment{{\small{policy update with the regular threshold}}}
        \State Update $\pi$ using visits $\bm{n}$ and $R$ by RMax
        \State add $(s_h,a_h)$ to $\kset$
      \EndIf
    \EndFor
    \For{all $(s,a) \in (\states, \actions)$ with identified \patname $\pattern_{(s,a)}$}
      \State \Call{\patname-update}{$\pattern_{(s,a)}, \patvis_{\pattern_{(s,a)}}, \bm{n}(s,a,\cdot), R(s,a)$} 
    \EndFor 
    \EndFor
\end{algorithmic}
\end{algorithm}

\begin{algorithm}[!ht]
   \caption{\patname Functions}
   \label{alg:tt_func}
\begin{algorithmic}[1]
    \Function{gen-TT}{$\bm{n}, R$} \Comment{{\small{generate \patname}}}
    	\State find permutation $\sigma$ s.t. $\sigma(\bm{n})$ is in descending order
        \State ordered visits {\small{$\patvisp_{\pattern}$$\gets$$\sigma(\bm{n})$,$\patvisr_{\pattern}$$\gets$$R$,$\patvis_\pattern$$\gets$($\patvisp_{\pattern}$,$\patvisr_{\pattern}$)}} 
        \State \patnamelowercase $\pattern \gets (\frac{\patvisp_{\pattern}}{\lVert \bm{n} \rVert_{\ell_1}}, \frac{\patvisr_{\pattern}}{\lVert \bm{n} \rVert_{\ell_1}})$
      \State \textbf{return} $\pattern, \patvis_\pattern, \sigma$
    \EndFunction
    \Function{\patname-update}{$\pattern, \patvis_\pattern, \bm{n}, R$} \Comment{{\small{add visits to \patname}}}
      \State $\patvis_{\pattern} \gets \patvis_{\pattern} + (descending(\bm{n}),R)$
      \State $\pattern \gets (\frac{\patvisp_{\pattern}}{\lVert \patvisp_{\pattern} \rVert_{\ell_1}}, \frac{\patvisr_{\pattern}}{\lVert \patvisp_{\pattern} \rVert_{\ell_1}})$
    \EndFunction
    \Function{augment}{$\patvis_\pattern, \bm{n}, R,\sigma$} \Comment{{\small{augment visits by \patname}}}
      \State $\bm{n} \gets \bm{n} + \permut^{-1}(\patvisp_{\pattern})$
      \State $R \gets R + \patvisr_{\pattern}$ 
    \EndFunction
\end{algorithmic}
\end{algorithm}
\subsection{\ourmod: \ourmodfull}
\label{sec:alg1}
In the \textit{online MTRL setting}, an agent interacts with multiple tasks streaming-in, each of which corresponding to a specific MDP. The tasks are i.i.d. drawn from a set $\mathcal{M}$ of MDPs (models). MDPs in $\mathcal{M}$ may have different 
state/action spaces. The number of MDPs $|\mathcal{M}|$ can be arbitrarily large.


We introduce \ourmodfull (\ourmod) for the online MTRL setting. 
\ourmod is a meta-learning algorithm with model-based ``\emph{base learners}'' which compute policies for the current task. 
We use RMax~\cite{brafman2003rmax} as the base learner, and it can be replaced by other model-based methods such as $E^3$~\cite{kearns2002near} and MBIE~\cite{strehl2005a}. 
The principle of RMax algorithm on an MDP $M$ is to build an induced MDP based on a known threshold $m$. 
A state-action pair is said to be $m$-known if the number of visits/observations $n(s,a) \geq m$. A state is $m$-known if $n(s,a) \geq m, \forall a\in \actions$. The set of all $m$-known states induces an MDP $M_k$, where for any $m$-known state $s$, $p(s^\prime|s,a) = \frac{n(s,a,s^\prime)}{n(s,a)}$, $r(s,a) = \frac{R(s,a)}{n(s,a)}$ 
and for any non-$m$-known state $s$, $p(s^\prime|s,a) = \mathbb{I}\{s^\prime=s\}$, $r(s,a) = R_{\max}$.
Then, RMax computes an optimal policy based on the optimistic model by dynamic programming.

In contrast, \ourmod uses augmented estimation introduced in Section~\ref{sec:estimation}. to reduce the required number of visits to every single s-a pair. 
Instead of aggregating the estimates of all s-a transition dynamics at once, \ourmod asynchronously identifies the \patnames of s-a pairs and updates the template groups in an online manner, through measuring the distances among \patnames.



Algorithm~\ref{alg:main} illustrates how \ourmod works.
In addition to the regular known threshold $\threslg$ used in RMax, we design a smaller known threshold $\thressm$, which is the smallest number of visits to ensure identifying the \patnames of all s-a pairs.
If for any $(s,a)$, the total number of visits ($\lVert \bm{n}(s,a,\cdot) \rVert_{\ell_1}$) reaches $\thressm$, then the estimated \patname $\tilde{\pattern}$ of $(s,a)$ will be generated by function \textsc{GEN-TT}. If $\tilde{\pattern}$ has at least $\hat{\tau}$-distance with all existing \patnames, we regard it as a new \patname and append it to set $\patset$ (Line 10-11); otherwise (Line 12-15), we find the closest \patname to $\tilde{\pattern}$, then synchronize the experience of $(s,a)$ in the current task and the accumulated experience that its \patname holds by calling functions \textsc{TT-update} and \textsc{augment}, which respectively send the current visits of $(s,a)$ to the corresponding \patname, and feed the accumulative visits of the \patname to the current $(s,a)$.
\textsc{GEN-TT}, \textsc{TT-update} and \textsc{augment} involve the permutation operations, and are given by Algorithm~\ref{alg:tt_func}. 
Accumulated experience of each \patname is stored in a tuple $\patvis_\pattern=(\patvisp_\pattern, \patvisr_\pattern)$, where $\patvisp_\pattern$ is the total visits accumulated by permuted $\textbf{n}(s,a,s^\prime)$ of all $(s,a)$'s with \patname $\pattern$.
When $(s,a)$ is $m$-known, the policy is updated (Line 16-18).
Overall, our \ourmod allows grouped s-a transition dynamics to share their visit counts, making it much easier for them to reach $\threslg$ visits than in regular RMax. 

Note that Algorithm~\ref{alg:main} also works for tasks with varying state/action space, since the comparison of \patnames considers the non-zero elements of the transition vectors only. 
One can compute the difference between two different-sized \patnames by simply padding zeros to the end of the shorter \patname.

\subsection{\ourmodtwo: \ourmodtwofull}
\label{sec:alg2}
Online MTRL setting requires no prior knowledge of the types of underlying MDPs and improves the sample efficiency by accumulating knowledge with \patname groups. 
However, under a more restrictive assumption that the number of possible MDPs $C=|\mathcal{M}|$ is known and small, it is possible to get rid of the dependence on the size of state-action space and achieve \emph{more efficient learning}.




We propose \ourmodtwofull(\ourmodtwo), an extension of our \ourmod, under the \textit{finite-model MTRL setting}, where the agent still interacts with streaming-in tasks drawn from a set $\mathcal{M}$ of MDPs, but the number of MDPs in the set $\mathcal{M}$ is small and known. 

In contrast with \ourmod, \ourmodtwo is able to correctly identify the \patnames of some s-a pairs before they are visited for $\thressm$ times. 
This is because the number of underlying models is small, and thus identifying the model is easy and inexpensive.  
It is possible to obtain the \patnames for all s-a pairs immediately after identifying the model, since the way how \patnames are distributed over all s-a pairs is fixed for each MDP model.


The main steps of \ourmodtwo are stated below, and the details are illustrated in Algorithm 3
 in Appendix C.
\textit{(1) Collecting Models:} for the first $T_1$ tasks, the agent acts in the same way as \ourmod, but also stores the \patname structure of each model.
\textit{(2) Grouping Models:} the first $T_1$ tasks are clustered into finite groups of models based on their \patname structures.
\textit{(3) Identifying Models:} for any new task, the agent still follows \ourmod, but also seeks the true model for the current task from all the model groups, by ruling out the groups of models that have different \patname structures.



\citet{brunskill2013sample} make the same finite-model assumption and propose an algorithm FMRL which extracts model similarities. However, FMRL can not transfer knowledge between two models which are the same except for one state-action pair.
In contrast, our \ourmodtwo extracts state-action dynamics similarities and thus transferring happens among any state-action pairs that have similar dynamics.
Compared with FMRL, \ourmodtwo not only has lower sample complexity as proved in Section~\ref{sec:sc_group}, but also saves computations due to the direct comparison of \patnames.


\section{Theoretical Analysis}
\label{sec:analysis}
This section provides sample complexity analysis of the proposed two algorithms \ourmod and \ourmodtwo. 
Although \ourmod and \ourmodtwo can be applied to tasks with varying state/action spaces, we assume all tasks have the same $\states$ and $\actions$ for simplicity of notations, and the analysis extends to varying state/action spaces trivially.


We first assume there is a diameter $D$ such that any state $s^\prime$ is reachable from any states $s$ in at most $D$ steps on average. This assumption is commonly used in RL~\cite{jaksch2010near}, and it ensures the reachability of all state from any state on average.

We further define the underlying minimal $\ell_2$-distance among \patnames as $\tau$, namely \patname gap.
We also define $\nu$ as the ranking gap; a large ranking gap implies that for any s-a pair, the probabilities of transitioning to any two states are substantially different.
For any $\pattern \in \patset$, if $\patp_i > \patp_j$ are two adjacent elements in $\patp$, then either $\patp_i - \patp_j \geq \nu$, or $\patp_i - \patp_j \leq \mathcal{\tilde{O}}(\frac{\epsilon (1-\gamma)}{\sqrt{S}V_{\max}})$(logarithmic terms are hided in $\mathcal{\tilde{O}}(\cdot)$). The ranking gap implies that for any s-a pair, the probabilities of transitioning to any two states are either very close, or substantially different. 
Note that the algorithms take a user-specified $\tau$, but do not require input of $\nu$.
See Appendix D
 for formal definitions of \patname gap and ranking gap.
For notation simplicity, let $\mingap$ denote $\max \{ \min (\tau,\nu), \mathcal{O}(\frac{\epsilon (1-\gamma)}{\sqrt{S}V_{\max}})\}$.

\begin{theorem}[\textbf{Sample Complexity of \ourmod}]
\label{thm:main}
For any given $\epsilon > 0$, $1 > \delta > 0$, running Algorithm~\ref{alg:main} on $T$ tasks, each for at least $\mathcal{O}(\frac{DSA}{\mingap^2} \ln\frac{1}{\delta})$ steps, generates 
at most
$\mathcal{\tilde{O}} \Big(\frac{S G V^3_{\max}}{\epsilon^3 (1-\gamma)^3} + \frac{TSA V_{\max}}{\mingap^2 \epsilon (1-\gamma)} \Big)$
non-$\epsilon$-optimal steps, with probability at least $1-\delta$, where $G$ is the total number of \patnames.
\end{theorem}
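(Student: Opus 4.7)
The plan is to extend the standard PAC-MDP analysis of RMax \citep{brafman2003rmax,kakade2003sample} by bookkeeping two distinct visit thresholds: the small one $\thressm$ used only to identify the template of a state-action pair, and the regular one $\threslg$ used as in RMax to deem a pair ``known'' for planning. Once a pair's template has been correctly identified, its subsequent visits can be pooled with visits of all other pairs sharing the same template, so the expensive threshold $\threslg$ need only be reached $G$ times in total across all tasks, rather than $SAT$ times. The two thresholds account for the two summands in the bound.

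First I would fix $\thressm = \tilde{O}\!\left(\frac{1}{\mingap^2}\ln\frac{1}{\delta}\right)$ and argue, via the $\ell_1$/$\ell_2$ concentration inequality for empirical multinomials, that after $\thressm$ visits the rough estimate $\widehat{\theta}(s,a)$ lies within $\mingap/4$ of the true $\theta(s,a)$. Combined with the \patname gap $\tau$ and the ranking gap $\nu$ (Definitions~5 and~8 in Appendix~D), this resolution simultaneously (i) preserves the descending ordering used in the permutation $\sigma$, and (ii) separates distinct templates by at least $\tau/2$ in the empirical metric, so the cluster assignment made by \textsc{gen-TT} is correct for every visited pair after a union bound. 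Second, I would set $\threslg = \tilde{O}\!\left(\frac{S V_{\max}^2}{\epsilon^2(1-\gamma)^2}\ln\frac{1}{\delta}\right)$, the standard RMax choice, so that after aggregating $\threslg$ within-template visits and permuting back, each augmented estimate of $\theta(s,a)$ is $\frac{\epsilon(1-\gamma)}{V_{\max}}$-accurate in $\ell_1$; the Simulation Lemma then ensures the RMax-induced greedy policy is $\epsilon$-optimal on the known set.

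Next I would count exploration. The hypothesis that each task runs for at least $\mathcal{O}(\frac{DSA}{\mingap^2}\ln\frac{1}{\delta})$ steps, together with the diameter assumption, guarantees by a standard mixing/hitting-time argument that every $(s,a)$ accumulates at least $\thressm$ visits within the task, so its template is identified before the task ends. The non-$\epsilon$-optimal steps then split into (i) ``template-identification escapes'' of at most $SA\cdot\thressm$ per task, which through the Implicit Explore-Exploit Lemma contribute $\mathcal{\tilde{O}}\!\left(\frac{TSA V_{\max}}{\mingap^2 \epsilon(1-\gamma)}\right)$ overall; and (ii) ``expensive escapes'' — those incurred while a template has fewer than $\threslg$ total pooled visits — of which at most $G\cdot\threslg$ can occur across the whole sequence of tasks, contributing $\mathcal{\tilde{O}}\!\left(\frac{SG V_{\max}^3}{\epsilon^3(1-\gamma)^3}\right)$ after the same horizon factor. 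Summing gives the claimed bound, and a union bound over the $\mathcal{O}(TSA+G)$ concentration and identification events absorbs all failure probabilities into $\delta$.

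The main obstacle is the identification step: with only $\thressm$ samples, we must rule out \emph{negative transfer} by showing that both the empirical permutation $\sigma$ matches the true one and no template is mis-assigned. This is where \emph{both} $\tau$ and $\nu$ are essential — $\nu$ keeps the ordering stable against the $\tilde{O}(1/\sqrt{\thressm})$ sampling error, while $\tau$ keeps empirical proximity to a stored template equivalent to true equality. The borderline case of adjacent probabilities with gap at most $\tilde{O}(\epsilon(1-\gamma)/(\sqrt{S}V_{\max}))$ is handled separately by noting, via the Simulation Lemma again, that swapping such entries perturbs the value of the induced policy by at most $\epsilon$, so misidentification on this regime is harmless. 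Once template correctness is secured, the remaining work — escape counting, horizon truncation, and union bounds — is routine.
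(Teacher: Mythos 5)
Your proposal is correct and follows essentially the same route as the paper's proof: the same two-threshold bookkeeping ($\thressm=\tilde{\mathcal{O}}(1/\mingap^2)$ for template identification, $\threslg=\tilde{\mathcal{O}}(SV_{\max}^2/\epsilon^2(1-\gamma)^2)$ for knownness), the same decomposition of unknown visits into $TSA\cdot\thressm$ identification visits plus $G\cdot\threslg$ template-learning visits, the same diameter-based argument that every pair reaches $\thressm$ within a task, the same permuted simulation lemma tolerating swaps of near-tied entries, and the same final appeal to the generic PAC-MDP escape-counting theorem with the $V_{\max}/(\epsilon(1-\gamma))$ horizon factor. The only cosmetic difference is that the paper establishes the $\thressm$ concentration via a vector-valued martingale inequality split across two lemmas (one for the $\tau$ gap, one for the $\nu$ ordering), whereas you fold both into a single $\mingap/4$-accuracy argument.
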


\noindent\textbf{Remark.} 
(1) Our provided bound achieves \emph{state-of-the-art} dependence on the environment size $T,S,A$ for general MTRL, given that $G$ is independent of $T,S,A$. 
(2) When $\epsilon$ is small, the sample complexity only has a linear dependence on the number of states $S$ and the number of templates $G$, because the first term dominates. 
By definition, $G$ is always no larger than $TSA$, the number of all s-a pairs. And in most environments, we have $G \ll TSA$, as discussed in Appendix F.5.
(3) When $\epsilon$ is not small or $T$ is very large, the sample complexity has linear dependences on $T$, $S$ and $A$ since the second term dominates.

\ourmod \emph{does not necessarily require the number of templates $G$ to be small}.
A large $G$ suggests the environment is highly stochastic, e.g., the slipping probabilities of every grid in maze is sampled from a Gaussian distribution. In this case, we can still cluster s-a pairs with adequately close templates, as verified in experiments (see Section~\ref{sec:gaussian}).

\noindent\textbf{Proof Sketch. } We first show that for any s-a pair, $\thressm=\mathcal{\tilde{O}}(\frac{1}{\mingap^2})$ samples would guarantee correct template identification and aggregation, and $\threslg = \mathcal{\tilde{O}}(\frac{S V^2_{\max}}{\epsilon^2 (1-\gamma)^2})$ samples are sufficient for estimating the s-a transition dynamics. Then we prove that all s-a pairs reach $\thressm$ within finite steps. Finally, by computing the number of visits to unknown s-a pairs and applying the PAC-MDP theorem proposed by~\citet{strehl2012incremental}, we get the sample complexity result. 
Proof details are in Appendix E.

\noindent\textbf{Comparison with a single-task learner.} If RMax is sequentially run for every task, the total sample complexity for $T$ tasks is $\mathcal{\tilde{O}}\left(\frac{T S^2 A V_{\max }^{3}}{\epsilon^{3}(1-\gamma)^{3}}\right)$. \\
(1) When precision is high, i.e., $\epsilon$ is small, a significant improvement is achieved, if $\mathcal{O}(SG) \ll \mathcal{O}(TS^2A)$. \\
(2) When $T$ is large, as long as $\mathcal{\tilde{O}}(\frac{SV_{\max}^2}{\epsilon^2 (1-\gamma)^2}) 
\gg \mathcal{\tilde{O}} (\frac{1}{\mingap^2})$, our \ourmod gains improved sample efficiency.\\ 
(3) \ourmod will not cause negative transfer among tasks. In the worst case, $G=TSA$ (there is no similarity among all s-a transition dynamics) or $\mingap^2=\mathcal{\tilde{O}}(\frac{SV_{\max}^2}{\epsilon^2 (1-\gamma)^2})$, \ourmod has the same-order sample complexity with RMax.

\label{sec:sc_group}



\begin{theorem}[\textbf{Sample Complexity of \ourmodtwo}]
\label{thm:main_2}
Under the finite-model assumption of there are at most $C$ MDPs for all tasks, 
for any given $\epsilon>0, 1 > \delta > 0$, Algorithm 3
 on $T$ tasks follows $\epsilon$-optimal policies for all but
\setlength\abovedisplayskip{1pt}
\setlength\belowdisplayskip{1pt}
\begin{equation}
\label{eq:main_2}
\mathcal{\tilde{O}}\Big( \frac{SGV^3_{\max}}{\epsilon^3(1-\gamma)^3} + \frac{T_1 SA V_{\max}}{\mingap^2 \epsilon (1-\gamma)}  
+ \frac{(T-T_1)DC^2 V_{\max}}{\mingap^2 \epsilon (1-\gamma)} \Big)
\end{equation}
steps with probability at least $1-\delta$, where $G$ is the total number of \patnames, 
$T_1 = \Omega (\frac{1}{p_{\min}} \ln \frac{C}{\delta})$
is the number of tasks in the first phase, where $p_{\min}$ is the minimal probability for a task to be drawn from $\mathcal{M}$.
\end{theorem}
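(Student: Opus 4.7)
The plan is to decompose the total non-$\epsilon$-optimal budget into three additive pieces that mirror the three phases of Algorithm 3, and then glue them together by a union bound. Concretely, I would write the bound as the sum of (i) the cost of running \ourmod on the first $T_1$ tasks, (ii) the one-time cost of learning the $G$ distinct \patnames accurately enough to drive the RMax base learner, and (iii) the per-task cost of model identification for each of the remaining $T-T_1$ tasks. Piece (i) and (ii) together are exactly the content of Theorem~\ref{thm:main} with $T$ replaced by $T_1$, so the first two summands in the bound drop out by invoking it directly; the real work is to prove that (iii) contributes only $\tilde{\mathcal{O}}\!\left(\frac{(T-T_1)DC^2 V_{\max}}{\mingap^2 \epsilon(1-\gamma)}\right)$ and that $T_1$ is large enough for the identification machinery to be correctly initialized.

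First I would handle the collection-phase sizing. Since each new task draws an MDP i.i.d.\ from $\mathcal{M}$ with probability at least $p_{\min}$ per distinct model, a standard coupon-collector / Chernoff argument gives that after $T_1 = \Omega(\tfrac{1}{p_{\min}}\ln\tfrac{C}{\delta})$ tasks, all $C$ models have appeared at least once with probability $\geq 1-\delta/3$. Conditioned on this event, the grouping step at the end of Phase 1 produces exactly $C$ candidate ``\patname structures'' (one per model), each already having the template-level transition estimates learned by \ourmod to within the precision guaranteed in the proof of Theorem~\ref{thm:main}.

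Next I would analyze the identification phase. For a new task $t > T_1$, let $\candset_h \subseteq \mathcal{M}$ be the candidate set still consistent with the observed visits up through step $h$. The agent reduces $\candset_h$ by visiting any s-a pair on which two surviving models disagree in their permuted transition vector by at least $\mingap$; by the concentration argument already used for Theorem~\ref{thm:main}, $\tilde{\mathcal{O}}(1/\mingap^2)$ samples at such a pair suffice to rule out at least one model with probability $\geq 1-\delta/(3(T-T_1)C)$. Because at most $C-1$ eliminations are needed and each surviving pair of models induces up to $C$ distinguishing criteria (one for every other candidate it must be compared against), the total number of required visits to identification-relevant s-a pairs per task is $\tilde{\mathcal{O}}(C^2/\mingap^2)$. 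The diameter assumption implies that the agent can reach any chosen s-a pair in $O(D)$ steps on average, so the number of visits to unknown-for-identification s-a pairs per task is $\tilde{\mathcal{O}}(DC^2/\mingap^2)$. Applying the PAC-MDP reduction of Strehl et al.\ (as already used for Theorem~\ref{thm:main}) to translate visit counts to non-$\epsilon$-optimal steps multiplies this by $\tilde{\mathcal{O}}(V_{\max}/(\epsilon(1-\gamma)))$, producing the third summand. Summing over the $T-T_1$ Phase-3 tasks and combining with the Theorem~\ref{thm:main} bound for Phase 1 yields the claim; a final union bound over the $T_1$-sizing event, the $T-T_1$ per-task identification events, and the PAC-MDP concentration events keeps the total failure probability $\leq \delta$.

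The main obstacle I expect is the identification-phase visit count, specifically justifying the $DC^2$ rather than the naive $DC$. The subtlety is that when the agent is at a state $s$, only the s-a pairs reachable from $s$ can be used for that round of elimination, and the same physical visit may simultaneously discriminate multiple pairs of candidates; a careful accounting must show that in the worst case each of the $O(C)$ eliminations may require sampling at a new, potentially distant s-a pair and that each such pair may need $\tilde{\mathcal{O}}(C/\mingap^2)$ samples because several residual candidates may coincide on its dominant entries. Once that worst-case analysis is in place via the \patname gap and \probgapname definitions, the rest of the assembly is routine.
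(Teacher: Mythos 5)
Your proposal is correct and follows essentially the same route as the paper's proof: decompose the visits to unknown s-a pairs into the Phase-1 cost (Theorem~\ref{thm:main} with $T$ replaced by $T_1$), the one-time template-learning cost of $G\threslg$ shared visits, and a per-task identification cost of $\tilde{\mathcal{O}}(DC^2/\mingap^2)$ in Phase 2, then convert visit counts to non-$\epsilon$-optimal steps via the same PAC-MDP reduction. The paper obtains the $C^2$ factor by directly citing the ``informative s-a pairs'' argument of Brunskill and Li (one discriminating pair per pair of candidate models, each needing $\tilde{\mathcal{O}}(1/\mingap^2)$ samples and $O(D)$ steps to reach), which is the cleaner version of the worst-case accounting you sketch in your final paragraph.
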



\noindent\textbf{Remark. } 
(1) When $C$ is very large, or $p_{\min}$ is very small, $T_1 \to T$ and \ourmodtwo degenerates to \ourmod. (2) If $DC^2 < SA$ and $T \gg T_1$, \ourmodtwo requires fewer samples than \ourmod.



\noindent\textbf{Comparison with FMRL~\cite{brunskill2013sample}}
\ourmodtwo has a large improvement over FMRL in most cases. 
The sample complexity of FMRL for $T$ tasks in our notation is
\setlength\abovedisplayskip{2pt}
\setlength\belowdisplayskip{2pt} 
\begin{equation}
\begin{aligned}
\label{eq:fmrl}
\mathcal{\tilde{O}}\Big( & \frac{C S^2AV^3_{\max}}{\epsilon^3 (1-\gamma)^3} + \frac{T_1 S^2AV^3_{\max}}{\epsilon^3 (1-\gamma)^3} \\
& + (T-T_1) \big( \frac{DC^2V_{\max}}{\Gamma^2 \epsilon (1-\gamma)} + \frac{SCV^3_{\max}}{\epsilon^3 (1-\gamma)^3} \big) \Big).
\end{aligned}
\end{equation}
where $T_1 = \Omega (\frac{1}{p_{\min}} \ln \frac{C}{\delta})$, and $\Gamma$ is the model difference gap defined by~\citet{brunskill2013sample}. We organize Equation~\ref{eq:main_2} and Equation~\ref{eq:fmrl} both as three-term forms. The first term is for learning of all \patnames or all models, where \ourmodtwo reduces the dependence on $S$ and gets rid of the dependence on $A$. The second term is for the first phase, where FMRL performs the same with a single-task RMax learner, while \ourmodtwo requires much fewer samples to get optimal policies. Finally, the last term is for the second phase. FMRL needs an additional model elimination step for each task, while \ourmodtwo does not. 
\ourmodtwo is worse than FMRL only in extreme cases where there are few MDP models with large model gaps, and a large number of \patnames with small \patname gaps or ranking gaps.

\section{Experiments}
In this section, we demonstrate empirical results to show \ourmod and \ourmodtwo outperform existing state-of-the-art algorithms both in the finite-model setting and in the more realistic online setting. 
TempLe is able to transfer knowledge between tasks with different sized environments.
More importantly, TempLe has a high tolerance to model perturbations; it implements efficient transfer even when the underlying number of \patnames is infinite. 
Our code is available at \url{https://github.com/umd-huang-lab/template-reinforcement-learning}.

\begin{figure*}[!htbp]
\centering
	\begin{subfigure}[t]{0.24\textwidth}
		\centering
\begin{tikzpicture}[scale=0.5]

\definecolor{color0}{rgb}{0.462745098039216,0.654901960784314,0.490196078431373}
\definecolor{color1}{rgb}{0.4,0.470588235294118,0.67843137254902}
\definecolor{color2}{rgb}{0.776470588235294,0.443137254901961,0.443137254901961}
\definecolor{color3}{rgb}{0.662745098039216,0.756862745098039,0.835294117647059}
\definecolor{color4}{rgb}{0.901960784313726,0.662745098039216,0.517647058823529}

\begin{axis}[
legend cell align={left},
legend columns=2,
legend style={fill opacity=0.3, draw opacity=1, text opacity=1, at={(0.3,0.4)}, anchor=north west, draw=white!80!black, font=\large},
tick align=outside,
tick pos=left,
x grid style={white!69.0196078431373!black},
xlabel={\LARGE{Tasks}},
xmajorgrids,
xmin=-1.45, xmax=52.45,
xtick style={color=black},
y grid style={white!69.0196078431373!black},
ylabel={\LARGE{Per-task Reward}},
ymajorgrids,
ymin=6984.06581161696, ymax=22003.4846227104,
ytick style={color=black}
]
\addplot [thick, line width=1.5, color2, mark=diamond*, mark size=2, mark repeat=3, mark options={solid}]
table {%
1 17180.6333333333
2 17531.3133333333
3 17878.9153333333
4 17956.7304666667
5 18486.2007533333
6 18524.3173446667
7 18509.9889435333
8 18751.8533825133
9 18796.7947109287
10 19078.3485731691
11 19345.5003825189
12 19429.890344267
13 19670.7213098403
14 19695.2725121896
15 19826.4952609706
16 20090.3024015402
17 20099.2188280529
18 20184.8436119143
19 20250.5292507228
20 20375.5896589839
21 20607.0940264188
22 20649.4012904436
23 20850.0578280659
24 20821.8920452593
25 20871.6828407334
26 21107.2745566601
27 21320.7837676607
28 21132.008724228
29 21044.0011851385
30 21040.5410666247
31 21246.7069599622
32 20768.7829306326
33 20781.931304236
34 20553.7448404791
35 20705.5003564312
36 20689.1403207881
37 20549.3462887093
38 20607.1049931717
39 20495.0644938545
40 20468.6747111357
41 20184.4505733555
42 20228.6055160199
43 20253.3082977513
44 20197.1041346428
45 20062.6103878452
46 19985.1560157273
47 20176.3737474879
48 20204.0263727392
49 20326.4404021319
50 20375.1496952521
};
\addlegendentry{\ourmodtwo}
\addplot [thick, line width=1.5, color0, mark=*, mark size=2, mark repeat=3, mark options={solid}]
table {%
1 15561.3333333333
2 15806.7533333333
3 16099.1646666667
4 16087.5748666667
5 16526.2640466667
6 16506.4443086667
7 16414.1732111333
8 16535.5692233533
9 16498.5656343513
10 16668.4690709162
11 16810.8788304912
12 16789.3709474421
13 16906.2305193646
14 16860.9408007615
15 16944.1867206853
16 17108.4547152834
17 17074.1559104218
18 17084.9536527129
19 17071.4782874416
20 17154.8971253641
21 17308.2740794944
22 17296.5700048783
23 17440.0063377238
24 17339.7490372847
25 17347.2441335563
26 17534.463053534
27 17723.3400815139
28 17615.1094066959
29 17532.4417993596
30 17536.2542860903
31 17693.5421908146
32 17423.2113050665
33 17381.0435078932
34 17253.4924904372
35 17335.0899080601
36 17270.1775839208
37 17164.133158862
38 17196.6898429758
39 17184.0175253449
40 17249.6391061438
41 17088.941862196
42 17135.7976759764
43 17209.6245750455
44 17215.9787842076
45 17222.5342391202
46 17212.1808152081
47 17342.4527336873
48 17404.5741269853
49 17457.1467142867
50 17591.5820428581
};
\addlegendentry{Q-learning}
\addplot [thick, line width=1.5, color1, mark=square*, mark size=2, mark repeat=3, mark options={solid}]
table {%
1 7666.76666666667
2 7815.18
3 7974.232
4 7962.47546666667
5 8237.71125333333
6 8197.06346133333
7 8149.12044853333
8 8257.18840368
9 8232.83622997867
10 8365.87594031413
11 8461.29834628272
12 8458.72517832111
13 8553.48599382234
14 8501.7073944401
15 8538.39665499609
16 8654.66032282982
17 8608.01762388017
18 8612.20919482549
19 8607.56827534294
20 8652.33811447531
21 8772.66096969445
22 8768.564872725
23 8860.06505211917
24 8803.82854690725
25 8803.60569221653
26 8917.57178966154
27 9102.28127736205
28 9045.18648295918
29 8977.97783466326
30 8959.78338453027
31 9072.48171274391
32 8875.10354146952
33 8875.25318732257
34 8740.85453525698
35 8824.00908173128
36 8777.39817355815
37 8720.38168953567
38 8744.04685391543
39 8712.29550185722
40 8757.07261833817
41 8637.69535650435
42 8658.72915418725
43 8716.76290543519
44 8713.71328155834
45 8682.66528673584
46 8651.99875806225
47 8730.74554892269
48 8791.80099403042
49 8839.10756129405
50 8926.15680516464
};
\addlegendentry{RMax}
\addplot [thick, line width=1.5, white!36.8627450980392!black, mark=triangle*, mark size=2, mark repeat=3, mark options={solid}]
table {%
1 7720.03333333333
2 7867.98
3 8018.45866666667
4 7996.44613333333
5 8268.04485333333
6 8220.08370133333
7 8167.16199786667
8 8270.99246474667
9 8252.68988493867
10 8386.75756311147
11 8481.25514013365
12 8962.16295945362
13 9091.32999684159
14 9771.2836638241
15 9596.48529744169
16 10081.5301010309
17 9896.05375759443
18 10625.0017151683
19 10538.3582103182
20 11493.269055953
21 11385.1421503577
22 11762.1812686553
23 12271.7598084564
24 12507.5738276108
25 12846.3164448497
26 12846.6748003647
27 13712.5039869949
28 13539.1869216288
29 13465.2248961326
30 13892.519073186
31 14616.473832534
32 14706.3964492806
33 14740.7234710192
34 14528.097790584
35 14744.4080115256
36 14659.7705437064
37 14939.4534893357
38 14772.7081404021
39 14611.9606596953
40 14716.4545937257
41 14719.1758010198
42 14789.8548875845
43 14909.4293988261
44 14631.5764589435
45 14787.9854797158
46 15045.7102650775
47 15148.4125719031
48 15289.2579813795
49 15246.0055165749
50 14942.5849649174
};
\addlegendentry{FMRL}
\addplot [thick, line width=1.5, color3, mark=asterisk, mark size=2, mark repeat=3, mark options={solid}]
table {%
1 14760.2666666667
2 15035.35
3 15386.235
4 15343.8315
5 15602.7316833333
6 15706.1051816667
7 15635.2379968333
8 15896.3975304833
9 15897.947777435
10 16317.5963330248
11 16452.723366389
12 16433.3776964168
13 16678.2532601084
14 16713.1979340976
15 16827.6914740212
16 17063.492326619
17 17079.0464272905
18 17087.7117845614
19 17011.1606061053
20 17137.4245454948
21 17409.9154242786
22 17365.6272151841
23 17515.1144936657
24 17411.2630442991
25 17462.6834065359
26 17626.2150658823
27 18021.5635592941
28 17876.470536698
29 17810.4134830282
30 17748.978801392
31 17926.9575879195
32 17702.2451624609
33 17829.0639795481
34 17599.0275815933
35 17713.574823434
36 17513.6873410906
37 17384.4952736482
38 17529.8157462834
39 17618.244171655
40 17601.1864211562
41 17419.8377790406
42 17531.3973344698
43 17511.3142676895
44 17395.8795075872
45 17263.1582234952
46 17330.0690678123
47 17249.9788276978
48 17337.5676115947
49 17465.4441837685
50 17415.9464320583
};
\addlegendentry{Abs-RL}
\addplot [thick, line width=1.5, color4, mark=x, mark size=2, mark repeat=3, mark options={solid}]
table {%
1 12892.4666666667
2 13126.8766666667
3 13279.8123333333
4 13213.1244333333
5 13575.9386566667
6 13614.7381243333
7 13462.4643119
8 13585.75788071
9 13523.7354259723
10 13779.3985500418
11 13844.0053617043
12 14104.0748255338
13 14613.5606763138
14 14867.7879420157
15 15152.5491478142
16 15514.8975663661
17 15705.7678097295
18 15904.4910287565
19 16034.3852592142
20 16257.5133999594
21 16621.9220599635
22 16795.4365206338
23 17130.6995352371
24 17150.2862483801
25 17376.7309568754
26 17543.7178611878
27 17781.9394084024
28 17791.2821342288
29 17818.0172541393
30 17818.952195392
31 18102.2236425195
32 17895.5479449342
33 18021.2798171074
34 17775.2685020634
35 17978.5149851904
36 17743.3568200047
37 17670.0778046709
38 17682.7233575371
39 17783.4410217834
40 17720.6635862717
41 17570.1105609779
42 17583.1861715468
43 17738.9675543921
44 17796.7874656195
45 17844.3220523909
46 17949.4631804852
47 17981.47019577
48 18111.7298428597
49 18261.3835252404
50 18232.7651727163
};
\addlegendentry{MaxQInit}
\end{axis}

\end{tikzpicture}
		\vspace{-1.4em}
		\caption{{Finite-Model MTRL}}
		\label{sfig:group}
	\end{subfigure}
	\hfill
	\begin{subfigure}[t]{0.24\textwidth}
		\centering
\begin{tikzpicture}[scale=0.5]

\definecolor{color0}{rgb}{0.462745098039216,0.654901960784314,0.490196078431373}
\definecolor{color1}{rgb}{0.4,0.470588235294118,0.67843137254902}
\definecolor{color2}{rgb}{0.776470588235294,0.443137254901961,0.443137254901961}
\definecolor{color3}{rgb}{0.662745098039216,0.756862745098039,0.835294117647059}

\begin{axis}[
legend cell align={left},
legend style={at={(0.91,0.5)}, anchor=east, draw=white!80.0!black, font=\Large},
tick align=outside,
tick pos=left,
x grid style={white!69.01960784313725!black},
xlabel={\LARGE{Tasks}},
xmajorgrids,
xmin=-3.95, xmax=104.95,
xtick style={color=black},
y grid style={white!69.01960784313725!black},
ylabel={\LARGE{Per-task Reward}},
ymajorgrids,
ymin=25062.4232533439, ymax=53726.2332664562,
ytick style={color=black}
]
\addplot [thick, line width=1.5, color2, mark=diamond*, mark size=2, mark repeat=5, mark options={solid}]
table {%
1 40015.15
2 40907.92
3 41595.684
4 42454.6766
5 43343.27694
6 44265.852246
7 44988.2820214
8 45413.77181926
9 45947.639637334
10 46396.7566736006
11 46699.5740062405
12 47226.9596056165
13 47681.9986450548
14 48144.2947805493
15 48573.9063024944
16 48796.134672245
17 48542.1672050205
18 48993.1104845184
19 48969.4094360665
20 49324.7674924599
21 49553.6147432139
22 49816.8662688925
23 49649.5206420032
24 50066.5855778029
25 49707.6880200226
26 50065.2482180203
27 50057.5383962183
28 50226.5235565964
29 50372.1442009368
30 50538.2067808431
31 50592.2221027588
32 50884.4238924829
33 51023.5815032346
34 51064.5843529112
35 51261.98891762
36 51279.316025858
37 51015.6604232722
38 51212.165380945
39 51017.7598428505
40 51167.6258585654
41 51504.2932727089
42 51342.070945438
43 51468.1288508942
44 51655.9349658048
45 51591.4264692243
46 51421.7758223019
47 51195.4642400717
48 51160.4868160645
49 51598.787134458
50 51590.8424210122
51 51779.115178911
52 51507.6886610199
53 51550.5247949179
54 51550.9843154261
55 51547.3688838835
56 51258.6819954951
57 51282.0117959456
58 51393.9506163511
59 51645.264554716
60 51717.5500992443
61 51780.4080893199
62 52092.0862803879
63 51962.3246523491
64 51977.9501871142
65 51650.0411684028
66 51637.0340515625
67 51592.0146464062
68 51396.2621817656
69 51175.841963589
70 51247.6107672301
71 51411.0216905071
72 51472.7785214564
73 51280.0366693107
74 51146.1430023797
75 51446.5567021417
76 51309.9410319275
77 51129.7019287348
78 51300.7397358613
79 51308.4937622752
80 51398.4753860476
81 51314.0198474429
82 51450.1338626986
83 51499.9614764287
84 51731.4333287858
85 51927.8199959072
86 52132.5659963165
87 51945.0693966849
88 52294.8964570164
89 52423.3328113147
90 52382.4695301832
91 52230.7225771649
92 51821.8863194484
93 51608.8256875036
94 51859.5391187532
95 51763.4982068779
96 52009.2483861901
97 52359.1085475711
98 52417.088692814
99 52218.9768235326
100 52174.8781411793
};
\addlegendentry{\ourmod}
\addplot [thick, line width=1.5, color0, mark=*, mark size=2, mark repeat=5, mark options={solid}]
table {%
1 48526.0699999999
2 48502.1989999999
3 48172.8270999999
4 48192.90439
5 48185.169951
6 48390.6339559
7 48382.50156031
8 48099.145404279
9 47912.5348638511
10 47882.314377466
11 47730.5519397194
12 47800.2867457474
13 47868.1860711727
14 47893.6374640554
15 47932.1677176499
16 47887.9529458849
17 47357.8036512964
18 47605.0562861667
19 47268.8016575501
20 47446.0094917951
21 47537.3355426156
22 47643.828988354
23 47295.7490895186
24 47693.9581805667
25 47189.6083625101
26 47452.4685262591
27 47451.2096736332
28 47457.0587062698
29 47436.7128356428
30 47492.6105520786
31 47482.1614968707
32 47701.0623471836
33 47790.7741124653
34 47756.8687012187
35 47931.0138310969
36 47907.6754479872
37 47591.4989031885
38 47696.7780128696
39 47465.7972115827
40 47531.3524904244
41 47762.596241382
42 47611.7846172438
43 47729.5091555194
44 47887.5032399674
45 47852.5859159707
46 47720.2403243736
47 47499.9642919363
48 47450.9208627426
49 47761.4387764684
50 47695.1418988215
51 47920.8987089394
52 47559.8148380454
53 47608.2583542409
54 47671.2245188168
55 47707.8370669351
56 47409.1443602416
57 47318.8209242174
58 47422.5988317957
59 47665.8269486161
60 47682.3812537545
61 47666.2911283791
62 47977.1110155412
63 47825.656913987
64 47977.5672225883
65 47623.6545003295
66 47679.7030502965
67 47591.7947452669
68 47390.6702707402
69 47222.2942436662
70 47247.6648192996
71 47356.8123373696
72 47449.4901036326
73 47290.6450932694
74 47150.3635839424
75 47453.7322255482
76 47265.8080029934
77 47098.694202694
78 47292.1987824246
79 47354.3439041822
80 47486.8775137639
81 47398.1147623875
82 47483.4132861488
83 47485.2479575339
84 47737.4571617805
85 47909.7734456025
86 48082.0781010422
87 47805.378290938
88 48196.1784618442
89 48391.9006156598
90 48335.7295540938
91 48147.1855986844
92 47635.752038816
93 47492.7198349344
94 47751.1958514409
95 47622.9892662969
96 47818.8853396672
97 48145.0248057005
98 48158.1783251304
99 47983.4714926174
100 47979.0813433556
};
\addlegendentry{Q-learning}
\addplot [thick, line width=1.5, color1, mark=square*, mark size=2, mark repeat=5, mark options={solid}]
table {%
1 27743.83
2 27650.031
3 27447.2189
4 27411.74501
5 27385.613509
6 27432.1581581
7 27359.78034229
8 27167.262308061
9 27067.3170772549
10 27036.6013695294
11 26889.1352325765
12 26916.7827093188
13 26963.6274383869
14 26970.1546945482
15 26992.1672250934
16 26920.6485025841
17 26604.3376523257
18 26746.4058870931
19 26525.6422983838
20 26650.6490685454
21 26694.3341616909
22 26738.6257455218
23 26489.1571709696
24 26698.2874538726
25 26365.3237084854
26 26535.0973376368
27 26499.5316038732
28 26496.7004434858
29 26512.7223991373
30 26525.8211592235
31 26524.6070433012
32 26694.2573389711
33 26757.335605074
34 26766.3030445666
35 26878.6707401099
36 26837.9516660989
37 26673.063499489
38 26785.3261495401
39 26639.2125345861
40 26670.4142811275
41 26821.8808530147
42 26694.7697677133
43 26758.6577909419
44 26871.1330118477
45 26816.639710663
46 26713.2087395967
47 26574.317865637
48 26524.7870790733
49 26804.781371166
50 26740.9712340494
51 26911.6711106444
52 26723.15199958
53 26736.718799622
54 26755.2709196598
55 26801.1168276938
56 26587.2071449244
57 26580.007430432
58 26641.1616873888
59 26788.7545186499
60 26803.2500667849
61 26804.0580601064
62 27001.0232540958
63 26908.2709286862
64 26974.1728358176
65 26735.1205522358
66 26733.2354970122
67 26662.025947311
68 26519.7963525799
69 26378.7517173219
70 26424.7305455897
71 26570.7794910308
72 26620.4685419277
73 26482.7636877349
74 26409.0383189614
75 26587.7994870653
76 26475.8945383587
77 26415.6880845229
78 26522.5242760706
79 26559.0338484635
80 26656.6414636172
81 26540.9053172555
82 26594.4697855299
83 26593.3108069769
84 26715.8377262792
85 26866.9279536513
86 26976.0591582862
87 26872.9382424576
88 27118.3114182118
89 27191.9452763906
90 27200.2647487516
91 27126.1922738764
92 26820.0180464888
93 26715.5322418399
94 26895.8280176559
95 26842.2182158903
96 27001.3953943013
97 27185.1118548711
98 27204.775669384
99 27042.5761024456
100 27003.9954922011
};
\addlegendentry{RMax}
\addplot [thick, line width=1.5, color3, mark=asterisk, mark size=2, mark repeat=5, mark options={solid}]
table {%
1 46748.58
2 47007.336
3 46828.6134
4 47261.47006
5 47331.425054
6 47435.8065486
7 47372.10989374
8 47194.270904366
9 47479.1548139294
10 47596.1493325364
11 47517.5333992828
12 47658.5920593545
13 47613.4678534191
14 47851.8760680772
15 48005.3484612694
16 47698.9936151425
17 47412.3162536282
18 47470.4716282654
19 47300.0454654389
20 47279.081918895
21 47338.7547270055
22 47500.4132543049
23 47624.4559288744
24 47830.296335987
25 47446.1707023883
26 47346.4066321495
27 47457.0239689345
28 47487.7765720411
29 47427.9639148369
30 47650.7185233533
31 47443.7436710179
32 47481.9803039161
33 47428.9172735245
34 47246.7405461721
35 47493.3284915549
36 47213.6276423994
37 47122.8108781594
38 47489.3157903435
39 47416.2272113091
40 47595.4414901782
41 47717.1983411604
42 47315.4705070444
43 47422.6824563399
44 47602.7232107059
45 47635.9668896353
46 47614.0302006718
47 47295.4661806046
48 47320.4175625441
49 47516.4548062897
50 47390.3763256607
51 47328.2176930947
52 47176.1209237852
53 47221.8508314067
54 47253.872748266
55 47182.2064734394
56 47027.3448260955
57 47147.4083434859
58 46833.0245091373
59 46981.4200582236
60 47296.8420524012
61 47231.8178471611
62 47525.657062445
63 47368.6633562005
64 47480.8750205805
65 47823.7255185224
66 47782.5069666702
67 47676.0482700031
68 47491.5594430028
69 47318.4404987025
70 47208.6864488323
71 47352.7458039491
72 47496.7582235541
73 47291.0304011987
74 47043.6893610789
75 47308.456424971
76 47245.7677824739
77 47162.5470042265
78 47208.8713038038
79 47361.7241734234
80 47491.2027560811
81 47290.163480473
82 47530.3771324257
83 47726.7944191831
84 47403.3969772648
85 47455.8242795383
86 47512.9828515845
87 47509.2365664261
88 47454.5919097834
89 47593.5287188051
90 47645.9338469246
91 47542.6034622321
92 47138.6611160089
93 47097.257004408
94 47201.4483039672
95 47190.1914735705
96 47279.7443262134
97 47519.0438935921
98 47521.1685042329
99 47352.0106538096
100 47428.7185884286
};
\addlegendentry{Abs-RL}
\end{axis}

\end{tikzpicture}
	\vspace{-1.4em}
	\caption{{Online MTRL}}
	\label{sfig:maze}
	\end{subfigure}
	\hfill
	\begin{subfigure}[t]{0.24\textwidth}
		\centering
\begin{tikzpicture}[scale=0.49]

\definecolor{color0}{rgb}{0.462745098039216,0.654901960784314,0.490196078431373}
\definecolor{color1}{rgb}{0.4,0.470588235294118,0.67843137254902}
\definecolor{color2}{rgb}{0.776470588235294,0.443137254901961,0.443137254901961}

\begin{axis}[
legend cell align={left},
legend style={at={(0.03,0.97)}, anchor=north west, draw=white!80.0!black, font=\Large},
tick align=outside,
tick pos=left,
x grid style={white!69.01960784313725!black},
xlabel={\LARGE{Tasks}},
xmajorgrids,
xmin=-2.95, xmax=83.95,
xtick style={color=black},
y grid style={white!69.01960784313725!black},
ylabel style={align=center, font=\LARGE}, ylabel=Advantage Per-task Reward\\Compared with RMax,
ymajorgrids,
ymin=-1910.62190790987, ymax=40123.0600661073,
ytick style={color=black}
]
\addplot [thick, line width=1.5, color2, mark=diamond*, mark size=2, mark repeat=5, mark options={solid}]
table {%
1 6134.70000000001
2 6601.52000000001
3 6896.39800000001
4 7284.08820000001
5 7713.93938000001
6 8036.50544200001
7 8183.45489780001
8 8396.56940802
9 8569.722467218
10 8766.92022049621
11 8955.60819844659
12 9152.08737860193
13 9329.13864074173
14 9349.16477666756
15 9488.0482990008
16 9465.89346910072
17 9443.34412219065
18 9622.83970997159
19 9726.83573897443
20 9771.04216507699
21 10564.7079485693
22 11551.5871537124
23 12394.6284383411
24 13318.465594507
25 13875.4090350563
26 14316.9081315507
27 14485.0573183956
28 15134.291586556
29 15384.5224279004
30 15656.2801851104
31 15802.2621665994
32 16121.0759499394
33 16308.4083549455
34 16410.9375194509
35 16277.4737675058
36 16657.2063907553
37 16875.6457516797
38 17178.6411765118
39 17225.5670588606
40 17063.4103529745
41 17919.4293176771
42 19418.4363859094
43 19633.3127473184
44 20464.4114725866
45 21237.3503253279
46 22075.2852927951
47 22434.8567635156
48 23086.1010871641
49 23389.2809784477
50 23765.4328806029
51 24026.8595925426
52 24311.0136332883
53 24997.6922699595
54 25287.4930429636
55 25619.8937386672
56 26392.6343648005
57 26670.8709283204
58 26650.0738354884
59 26570.1964519395
60 26874.1168067456
61 28379.895126071
62 29447.5256134639
63 30036.4030521175
64 30657.8327469058
65 31709.5494722152
66 32635.2845249937
67 33754.2260724943
68 35055.4534652449
69 34941.6481187204
70 35668.0733068484
71 35735.6459761635
72 35723.2113785472
73 35850.2502406924
74 36371.3752166232
75 36144.6676949609
76 36573.8109254648
77 36446.8198329183
78 36590.8878496265
79 37685.3090646638
80 38212.4381581975
};
\addlegendentry{\ourmod}
\addplot [thick, line width=1.5, color1, mark=square*, mark size=2, mark repeat=5, mark options={solid}]
table {%
1 0
2 0
3 0
4 0
5 0
6 0
7 0
8 0
9 0
10 0
11 0
12 0
13 0
14 0
15 0
16 0
17 0
18 0
19 0
20 0
21 0
22 0
23 0
24 0
25 0
26 0
27 0
28 0
29 0
30 0
31 0
32 0
33 0
34 0
35 0
36 0
37 0
38 0
39 0
40 0
41 0
42 0
43 0
44 0
45 0
46 0
47 0
48 0
49 0
50 0
51 0
52 0
53 0
54 0
55 0
56 0
57 0
58 0
59 0
60 0
61 0
62 0
63 0
64 0
65 0
66 0
67 0
68 0
69 0
70 0
71 0
72 0
73 0
74 0
75 0
76 0
77 0
78 0
79 0
80 0
};
\addlegendentry{RMax}
\end{axis}

\end{tikzpicture}
		\vspace{-1.4em}
		\caption{{Varying-sized MTRL}}
		\label{sfig:var}
	\end{subfigure}
	\hfill
	\begin{subfigure}[t]{0.24\textwidth}
		\centering
\begin{tikzpicture}[scale=0.5]

\definecolor{color0}{rgb}{0.462745098039216,0.654901960784314,0.490196078431373}
\definecolor{color1}{rgb}{0.4,0.470588235294118,0.67843137254902}
\definecolor{color2}{rgb}{0.776470588235294,0.443137254901961,0.443137254901961}

\begin{axis}[
legend cell align={left},
legend style={at={(0.91,0.4)}, anchor=east, draw=white!80.0!black, font=\Large},
tick align=outside,
tick pos=left,
x grid style={white!69.01960784313725!black},
xlabel={\LARGE{Tasks}},
xmajorgrids,
xmin=-3.95, xmax=104.95,
xtick style={color=black},
y grid style={white!69.01960784313725!black},
ylabel={\LARGE{Per-task Reward}},
ymajorgrids,
ymin=14095.7043125133, ymax=37245.2086755668,
ytick style={color=black}
]
\addplot [thick, line width=1.5, color2, mark=diamond*, mark size=2, mark repeat=5, mark options={solid}]
table {%
1 28091.9000000001
2 28108.0860000001
3 28735.1014000001
4 28988.5672600001
5 29785.1865340001
6 30367.9078806001
7 30827.0730925401
8 31207.0957832861
9 31340.9662049575
10 31878.4395844617
11 32359.1816260156
12 32605.553463414
13 32830.7121170726
14 33269.0929053654
15 33440.9436148288
16 33522.899253346
17 34136.4513280114
18 34045.4821952103
19 34144.2259756892
20 34403.6793781203
21 34305.4854403083
22 34255.0868962775
23 34398.5442066497
24 34235.3097859848
25 34012.1888073863
26 34361.1159266477
27 34590.0403339829
28 34774.3623005846
29 34756.5480705262
30 34626.7372634736
31 34847.0575371262
32 35500.5117834136
33 35559.4666050723
34 35270.881944565
35 35514.6517501085
36 35789.2025750977
37 36034.9223175879
38 36024.7560858291
39 36192.9584772462
40 35876.7946295216
41 35983.2551665695
42 35606.4696499125
43 35503.9446849213
44 35393.9082164292
45 35930.9513947863
46 35515.1742553076
47 35336.5908297769
48 35286.1057467992
49 35603.9291721193
50 35138.5162549074
51 35170.2026294167
52 34845.446366475
53 34509.3377298275
54 34356.1459568448
55 34228.3193611603
56 33843.6374250443
57 33979.2916825399
58 33898.0285142859
59 33833.0496628573
60 34149.2706965716
61 33738.8776269145
62 34026.173864223
63 34489.3884778007
64 35078.2956300206
65 34806.8320670186
66 34961.1188603167
67 35123.5449742851
68 35285.8564768566
69 34924.1228291709
70 34846.5325462538
71 34209.6732916285
72 33544.5479624656
73 33524.8611662191
74 34012.0010495972
75 34554.2869446375
76 34948.5382501737
77 34763.4944251564
78 34973.8029826408
79 34223.8106843767
80 34305.053615939
81 34025.1882543451
82 34840.1014289106
83 35004.1952860196
84 34554.2157574176
85 34553.4101816759
86 34928.8151635083
87 34757.6616471575
88 35190.9354824417
89 35280.2899341976
90 35011.5149407778
91 34874.3854467
92 35096.0509020301
93 35379.4878118271
94 34739.7110306444
95 34725.3559275799
96 34900.0423348219
97 35351.4021013398
98 35365.7918912058
99 35787.1307020852
100 35743.0076318767
};
\addlegendentry{O-TempLe}
\addplot [thick, line width=1.5, color0, mark=*, mark size=2, mark repeat=5, mark options={solid}]
table {%
1 34589.4400000001
2 33771.5720000001
3 33515.3968000001
4 32890.8371200001
5 32795.9314080001
6 32667.9402672001
7 32389.3142404801
8 32153.0748164321
9 31635.5793347889
10 31549.12140131
11 31655.143261179
12 31582.7449350611
13 31520.524441555
14 31577.7179973996
15 31486.7421976596
16 31290.3979778937
17 31653.1381801043
18 31376.5223620939
19 31326.2781258845
20 31305.3263132961
21 30899.0616819665
22 30671.0895137698
23 30616.5525623929
24 30365.6653061536
25 30157.3747755382
26 30271.0352979844
27 30460.615768186
28 30573.0001913674
29 30603.9261722307
30 30419.6895550076
31 30542.1405995069
32 31182.5345395562
33 31001.2270856006
34 30735.9003770405
35 30800.8263393365
36 31028.0837054029
37 31133.4493348626
38 31087.3904013763
39 31465.9913612387
40 31169.5962251148
41 31183.8586026034
42 30780.328742343
43 30629.7158681087
44 30584.4442812979
45 31138.9158531681
46 30629.9342678513
47 30205.2408410662
48 29990.7447569596
49 30450.5742812637
50 29887.8088531373
51 29955.1399678236
52 29780.3319710412
53 29463.5967739371
54 29296.3710965434
55 29232.3479868891
56 28964.1191882002
57 29005.3332693802
58 28853.4579424422
59 28672.508148198
60 29000.4533333782
61 28536.7340000404
62 28609.8726000364
63 28944.6313400327
64 29582.3382060295
65 29251.0003854265
66 29349.8763468839
67 29509.5567121955
68 29657.319040976
69 29228.4851368784
70 29333.0786231906
71 28699.9747608715
72 28000.0792847844
73 28023.5013563059
74 28495.9692206754
75 28930.1462986078
76 29235.0516687471
77 29153.7405018724
78 29219.5304516852
79 28459.4154065167
80 28558.343865865
81 28293.1434792785
82 28930.9431313507
83 29095.9788182156
84 28750.3849363941
85 28815.6164427547
86 29324.4207984792
87 29202.7227186313
88 29618.0924467682
89 29723.3592020914
90 29484.0592818822
91 29383.495353694
92 29584.4758183246
93 29862.2622364922
94 29324.642012843
95 29282.4218115587
96 29311.2916304028
97 29713.8664673626
98 29618.7498206263
99 30000.1588385637
100 29947.3449547074
};
\addlegendentry{Q-learning}
\addplot [thick, line width=1.5, color1, mark=square*, mark size=2, mark repeat=5, mark options={solid}]
table {%
1 18951.3
2 18497.322
3 18334.3338
4 17942.80042
5 17880.932378
6 17787.4531402
7 17619.18382618
8 17462.839443562
9 17163.5534992058
10 17159.7681492852
11 17168.7133343567
12 17081.466000921
13 17061.7394008289
14 17104.301460746
15 17064.4893146714
16 16939.2803832043
17 17206.1983448838
18 17116.2565103955
19 17087.2768593559
20 17067.3791734203
21 16821.0792560783
22 16693.0993304704
23 16643.1033974234
24 16493.1430576811
25 16284.1087519129
26 16372.9438767217
27 16519.8794890495
28 16535.5775401445
29 16531.6457861301
30 16404.6872075171
31 16532.6484867654
32 16933.1116380888
33 16836.0704742799
34 16632.1954268519
35 16667.5278841667
36 16813.3230957501
37 16910.7947861751
38 16938.0153075576
39 17088.6277768018
40 16933.2709991216
41 16971.9538992094
42 16743.4105092885
43 16634.7434583596
44 16621.4111125237
45 16900.8820012713
46 16583.0818011442
47 16365.7556210298
48 16282.6420589268
49 16479.7278530341
50 16114.1250677307
51 16131.3965609576
52 15991.7209048618
53 15756.1928143757
54 15615.0995329381
55 15511.6135796443
56 15286.2902216798
57 15370.9011995119
58 15359.6010795607
59 15231.1769716046
60 15440.1372744441
61 15218.6395469997
62 15356.9755922998
63 15568.4560330698
64 15934.9644297628
65 15742.8279867865
66 15821.9451881079
67 15830.6426692971
68 15944.6564023674
69 15772.6887621306
70 15881.4658859176
71 15523.2672973258
72 15160.4005675932
73 15147.9545108339
74 15454.5270597505
75 15701.7903537754
76 15946.0253183979
77 15859.2467865581
78 15983.7061079023
79 15515.5194971121
80 15589.4275474009
81 15330.0207926608
82 15767.8447133947
83 15897.1302420552
84 15595.1172178497
85 15670.7354960647
86 15898.6459464583
87 15787.6313518124
88 16016.7982166312
89 16069.3003949681
90 15826.4423554713
91 15744.9601199241
92 15841.9201079317
93 16011.0240971385
94 15696.1536874247
95 15611.7083186822
96 15628.859486814
97 15853.2135381326
98 15843.0681843193
99 16109.1533658874
100 16126.7660292986
};
\addlegendentry{RMax}
\end{axis}

\end{tikzpicture}
		\vspace{-1.4em}
		\caption{Infinite number of \patnames}
		\label{sfig:noise}
	\end{subfigure}
\vspace{-0.5em}
\caption{Performance of \ourmod and \ourmodtwo compared against state-of-the-art baselines in \textbf{(a)} Online MTRL (to show TempLe's ability to efficiently transfer knowledge), \textbf{(b)} Finite-Model MTRL (to show TempLe outperforms baselines even under environments that the baselines are designed for),\textbf{(c)} varying sized MTRL (to show TempLe extends to varying sized state space) and \textbf{(d)} Online MTRL with Mixture-of-Gaussians distributed landforms (to show TempLe's robustness against noise and model-perturbation). All results are averaged over 20 different random sequences of tasks. Confidence intervals are omitted to reduce overlapping. 
}
\label{fig:all}
\end{figure*}
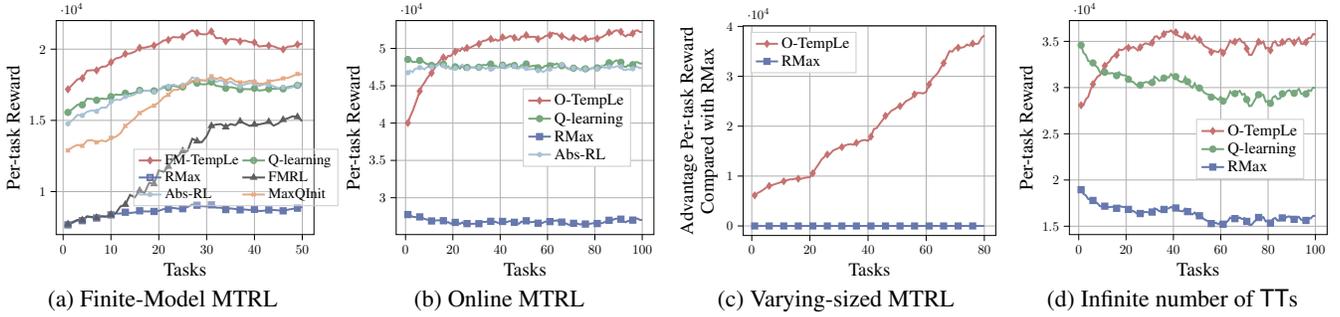

\noindent\textbf{Baselines. }
We choose the state-of-the-art MTRL algorithms, \emph{Abstraction RL (Abs-RL)}~\cite{abel2018state}, MaxQInit~\cite{abel2018policy} and \emph{FMRL}~\cite{brunskill2013sample} as baselines. For Abs-RL and MaxQInit, we use the code provided by authors. Note that Abs-RL and MaxQInit have multiple versions due to the selection of different base learners, we show the ones with their best performance in this section, and other versions in Appendix F.3.
Abs-RL works for both the online and finite-model setting, 
whereas MaxQInit and FMRL work for the finite-model setting only, since they both require the number of tasks to be small and known. 
Meanwhile, to show the effectiveness of our proposed algorithms and other MTRL algorithms, we also run RMax and Q-learning~\cite{watkins1992q} for every single task without knowledge transfer.

\subsection{Finite-Model MTRL}
\noindent\textbf{Environment.}
All the baselines including FMRL, Abs-RL, MaxQInit are designed for the finite-model setting (note that Abs-RL also works in the online setting), where the number of models $C$ is small.
We use a similar maze environment as in FMRL, where MDPs only differ at the goal state. 

\noindent\textbf{Performance.}
We generate two $4 \times 4$ maze tasks with different goal states as the underlying models, and then randomly sample 50 tasks from the two underlying models. 
Figure~\ref{sfig:group} shows the comparison of per-task rewards. 
FMRL has the same performance with RMax in the model-collecting phase, and then achieves increasing rewards in the following tasks after it successfully identifies the underlying two types of MDPs. 
After 30 tasks, all state-actions pairs in the models become known, so the per-task reward converges. Similarly, MaxQInit gains more rewards when it collects adequate knowledge of the Q values.
In contrast, \ourmodtwo has a better start as it learns \patnames from the beginning. 
And model identification further helps with efficient learning. Over all tasks, \ourmodtwo substantially outperforms other agents, despite that baselines are designed for the finite-model case.

\subsection{Online MTRL}
\noindent\textbf{Environment.}
For the more realistic Online MTRL which allows the \emph{number of MDP models to be extremely large}, we generalize the traditional maze environment to have arbitrary combinations of landforms, as shown in Figure~\ref{fig:landforms_eg}. We use 3 types of landforms, sand, marble and ice, respectively with slipping probabilities 0, 0.2, and 0.4. In this scenario, under a certain number of states $S$, the number of possible tasks is exponential in $S$. 

\noindent\textbf{Performance.}
In the online setting, we consider $4\times 4$ mazes with different arrangements of landforms streaming in. 
The per-task rewards of each agent are displayed in Figure~\ref{sfig:maze}.
Among all agents, our \ourmod obtains the highest average reward. 
We see during the first 40 tasks, the performance of \ourmod continuously and rapidly grows by transferring previous knowledge. 
In contrast, the performance of Abs-RL does not increase as more tasks come in and keeps the same with single-task Q-learning, because the maze environment is not efficiently abstracted by Abs-RL.

\noindent\textbf{Performance on Varying State Space.}
To show the feasibility of TempLe for {varying-sized environment tasks}, and its ability to generalize knowledge learned in small tasks to speed up learning in larger tasks, we vary the size of the mazes across tasks. More specifically, the first 20 tasks are $3\times 3$ mazes, followed by 20 $4 \times 4$ mazes, 20 $5 \times 5$ mazes and 20 $6 \times 6$ mazes. 
We show \ourmod's per-task advantage rewards over single task RMax in Figure~\ref{sfig:var}, since other MTRL baselines are not feasible in this setting. 
The performance advantage over RMax increases over more observed tasks, verifying that \ourmod transfers knowledge among different-sized mazes. Experiments on varying action spaces are shown in Appendix F.4.

\subsection{MTRL with Infinite \patnames}
\label{sec:gaussian}
\noindent\textbf{Environment.}
We also conduct experiments to show TempLe's robustness to noise and model perturbations.
which is crucial for its application to real-world settings where ``landforms'' could vary continuously. 
We draw the landforms (slipping probabilities) of each grid from a mixture of Gaussian distributions, which are centered at 0.2, 0.4, and 0.6 with standard derivation 0.05. In this case, the number of \patnames could be infinitely large. 

\noindent\textbf{Performance.}
We show \ourmod's per-task advantage rewards over single task RMax and Q-learning in Figure~\ref{sfig:noise}, in which \ourmod still achieves successful multi-task learning. This result demonstrates \ourmod's ability of tolerating noise and generalizing to real-life applications.


\subsection{Robustness to Hyper-parameters}

\modname requires a user-specified \patname gap $\hat{\tau}$ as input. Also, both FMRL and \ourmodtwo require a user-specified model gap $\Gamma$. 
We test various hyper-parameters to understand how significantly the performance of the algorithms could be affected by inaccurate guesses of $\hat{\tau}$ and $\Gamma$, shown in Figure~\ref{fig:paras}.

\begin{figure}[!htbp]
\centering
\vspace{-1em}
	\begin{subfigure}[t]{0.4\columnwidth}
		\centering
\begin{tikzpicture}[scale=0.5]

\definecolor{color0}{rgb}{0.4,0.470588235294118,0.67843137254902}
\definecolor{color1}{rgb}{0.776470588235294,0.443137254901961,0.443137254901961}
\definecolor{color2}{rgb}{0.901960784313726,0.662745098039216,0.517647058823529}

\begin{axis}[
legend cell align={left},
legend style={draw=white!80.0!black, at={(0.2,0.3)}, anchor=north west, font=\Large},
tick align=outside,
tick pos=left,
x grid style={white!69.01960784313725!black},
xlabel={\LARGE{User-specified \patname Gap $\hat{\tau}$}},
xmajorgrids,
xmin=-0.0845, xmax=1.9945,
xtick style={color=black},
y grid style={white!69.01960784313725!black},
ylabel={\LARGE{Average Reward}},
ymajorgrids,
ymin=-6603.82084870129, ymax=53186.7259555929,
ytick style={color=black}
]
\addplot [thick, line width=1.5, color0, mark=square*, mark size=2, mark options={solid}]
table {%
0.01 26176.508
0.05 26176.508
0.1 26176.508
0.2 26176.508
0.3 26176.508
0.4 26176.508
0.5 26176.508
0.6 26176.508
0.7 26176.508
0.8 26176.508
0.9 26176.508
1 26176.508
1.1 26176.508
1.2 26176.508
1.3 26176.508
1.4 26176.508
1.5 26176.508
1.6 26176.508
1.7 26176.508
1.8 26176.508
1.9 26176.508
};
\addlegendentry{RMax}
\addplot [thick, line width=1.5, color1, mark=diamond*, mark size=2, mark options={solid}]
table {%
0.01 34352.972
0.05 38186.806
0.1 42962.855
0.2 46543.3265
0.3 48419.93325
0.4 49466.022625
0.5 49965.2893125
0.6 50320.45365625
0.7 50468.973828125
0.8 50324.9569140625
0.9 50369.2994570312
1 49708.9857285156
1.1 49785.4683642578
1.2 46871.6376821289
1.3 25935.7168410645
1.4 13075.6769205322
1.5 3793.66446026612
1.6 -873.422769866942
1.7 -2102.31588493347
1.8 -3026.50144246674
1.9 -3886.06872123337
};
\addlegendentry{O-TempLe}
\addplot [thick, line width=2, color2, dashed, forget plot]
table {%
0.15 -6603.82084870129
0.15 53186.7259555929
};
\end{axis}

\end{tikzpicture}
		\vspace{-1em}
		\caption{\patname gap $\hat{\tau}$}
		\label{sfig:paras_tau}
	\end{subfigure}
	\hfill
	\begin{subfigure}[t]{0.48\columnwidth}
		\centering
\begin{tikzpicture}[scale=0.5]

\definecolor{color0}{rgb}{0.4,0.470588235294118,0.67843137254902}
\definecolor{color1}{rgb}{0.776470588235294,0.443137254901961,0.443137254901961}
\definecolor{color2}{rgb}{0.901960784313726,0.662745098039216,0.517647058823529}

\begin{axis}[
legend cell align={left},
legend style={fill opacity=0.8, draw opacity=1, text opacity=1, draw=white!80!black, font=\Large},
tick align=outside,
tick pos=left,
x grid style={white!69.0196078431373!black},
xlabel={\LARGE{User-specified Model Gap $\Gamma$}},
xmajorgrids,
xmin=0.12, xmax=1.88,
xtick style={color=black},
xtick={0,0.2,0.4,0.6,0.8,1,1.2,1.4,1.6,1.8,2},
xticklabels={0.0,0.2,0.4,0.6,0.8,1.0,1.2,1.4,1.6,1.8,2.0},
y grid style={white!69.0196078431373!black},
ylabel={\LARGE{Average Rewards}},
ymajorgrids,
ymin=-1717.10101229568, ymax=24053.2495720141,
ytick style={color=black}
]
\addplot [thick, line width=1.5, color0, mark=square*, mark size=2, mark options={solid}]
table {%
0.2 10279.81
0.4 10279.81
0.6 10279.81
0.8 10279.81
1 10279.81
1.2 10279.81
1.4 10279.81
1.6 10279.81
1.8 10279.81
};
\addlegendentry{RMax}
\addplot [thick, line width=1.5, color1, mark=diamond*, mark size=2, mark options={solid}]
table {%
0.2 22881.87
0.4 22556.414
0.6 22307.4364
0.8 18276.82584
1 10345.399504
1.2 5203.4637024
1.4 1962.45822144
1.6 255.750932864
1.8 -545.7214402816
};
\addlegendentry{FM-TempLe}
\addplot [thick, line width=1.5, white!36.8627450980392!black, mark=triangle*, mark size=2, mark options={solid}]
table {%
0.2 12729.35
0.4 12554.27
0.6 11028.55
0.8 10694.402
1 8185.2772
1.2 6691.18632
1.4 5791.635792
1.6 5249.9574752
1.8 4925.95048512
};
\addlegendentry{FMRL}
\addplot [thick, line width=2, color2, dashed, forget plot]
table {%
0.6 -1717.10101229568
0.6 24053.2495720141
};
\end{axis}

\end{tikzpicture}	
		\vspace{-1em}
		\caption{Model gap $\Gamma$} 
		\label{sfig:paras_gamma}
	\end{subfigure}
\vspace{-0.5em}
\caption{Hyper-parameter test of \patname gap $\hat{\tau}$ and model gap $\Gamma$(the vertical dashed line shows the underlying true value). 
}
\vspace{-0.5em}
\label{fig:paras}
\end{figure}
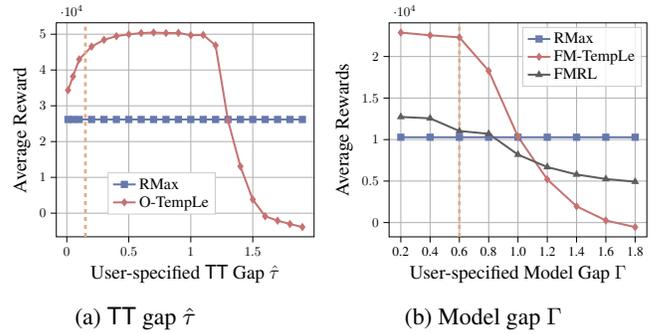

According to Figure~\ref{sfig:paras_tau}, the performance of \ourmod drops when $\hat{\tau}$ is too large. 
However, the rewards remain high for relatively small $\hat{\tau}$. 
Figure~\ref{sfig:paras_gamma} shows that \ourmodtwo gets higher rewards than RMax when setting $\Gamma \leq 1$, although $\Gamma$ has a larger influence on \ourmodtwo compared to FMRL, potentially because the failure of model clustering will cause more inaccurate \patname identification. 
Note that by definition, both $\hat{\tau}$ and $\Gamma$ would not exceed 2 (see Lemma 6). 
So we still have a large chance to get higher rewards than RMax by making an educated guess. 
The results in Figure~\ref{fig:paras} guide the users to specify hyper-parameters when using \modname.
The results with confidence intervals, comparison of cumulative rewards, and experiments on additional environments are shown in Appendix F.
We also provide 
an extension of our work to deep RL is discussed in Appendix F.6.

\section{Conclusion and Discussion}
\label{sec:conclusion}
In this work, we propose \modname, the first PAC-MDP MTRL algorithm that works for tasks with varying state/action space without any inter-task mappings or prior knowledge of the MDP structures. 
This work can be extended in many directions. For example, 
one may benefit from investigating transition probability and reward separately. 
The idea of extracting modular similarities can also be extended to continuous MDP and deep model-based RL.



\section*{Acknowledgements}
Huang is supported by startup fund from Department of Computer Science of University of Maryland, National Science Foundation IIS-1850220 CRII Award 030742- 00001, DOD-DARPA-Defense Advanced Research Projects Agency Guaranteeing AI Robustness against Deception (GARD), Laboratory for Physical Sciences at University of Maryland, and Adobe, Capital One and JP Morgan faculty fellowships.

\section*{Ethical Impact}
Our presented algorithms on multi-task reinforcement learning facilitate the learning of new tasks using knowledge accumulated from previously learned tasks. 
In scenarios where an RL agent needs to sequentially interact with a series of environments, e.g., navigation in various places, our proposed algorithm could be applied to improve the learning efficiency without loss of accuracy. 
More importantly, our algorithms are guaranteed to learn near-optimal policies and avoid negative transfer, which are crucial for high-stakes applications, such as autonomous driving, market making, and health-care systems.

Nowadays, Deep Reinforcement Learning (DRL) has achieved great success in many applications. However, problems like high variance and instability restrict the use of DRL in real-life problems. Thus, it is important to study tabular RL with guarantees, which could potentially benefit DRL and applications involving DRL. Our proposed algorithms, although not in the scope of DRL, could be potentially extended to DRL in the following ways.
\textbf{(1)} Our idea of extracting ``relative'' transition probability similarity could be directly used in model-based DRL. For example, the next-state prediction model usually outputs a Gaussian distribution for every s-a pair, and one can augment the learned derivation by averaging over predicts with close derivations, assuming some similarity about the uncertainty among different states.
\textbf{(2)} It is possible to discretize state space and apply count-based methods, as suggested in by~\citet{tang2016exploration}.

Our work on multi-task reinforcement learning also has the potential to be applied to other transfer learning tasks within and outside of the Reinforcement Learning community. 
Any learning in systems that share modular similarities could potentially benefit from our algorithms to speed up the training process. 


\begin{thebibliography}{42}
\providecommand{\natexlab}[1]{#1}
\providecommand{\url}[1]{\texttt{#1}}
\providecommand{\urlprefix}{URL }
\expandafter\ifx\csname urlstyle\endcsname\relax
  \providecommand{\doi}[1]{doi:\discretionary{}{}{}#1}\else
  \providecommand{\doi}{doi:\discretionary{}{}{}\begingroup
  \urlstyle{rm}\Url}\fi

\bibitem[{Abel et~al.(2018{\natexlab{a}})Abel, Arumugam, Lehnert, and
  Littman}]{abel2018state}
Abel, D.; Arumugam, D.; Lehnert, L.; and Littman, M. 2018{\natexlab{a}}.
\newblock State abstractions for lifelong reinforcement learning.
\newblock In \emph{International Conference on Machine Learning}, 10--19.

\bibitem[{Abel et~al.(2018{\natexlab{b}})Abel, Jinnai, Guo, Konidaris, and
  Littman}]{abel2018policy}
Abel, D.; Jinnai, Y.; Guo, S.~Y.; Konidaris, G.; and Littman, M.
  2018{\natexlab{b}}.
\newblock Policy and value transfer in lifelong reinforcement learning.
\newblock In \emph{International Conference on Machine Learning}, 20--29.

\bibitem[{Ammar et~al.(2015)Ammar, Eaton, Luna, and
  Ruvolo}]{ammar2015autonomous}
Ammar, H.~B.; Eaton, E.; Luna, J.~M.; and Ruvolo, P. 2015.
\newblock Autonomous cross-domain knowledge transfer in lifelong policy
  gradient reinforcement learning.
\newblock In \emph{Twenty-Fourth International Joint Conference on Artificial
  Intelligence}.

\bibitem[{Asadi et~al.(2019)Asadi, Talebi, Bourel, and
  Maillard}]{asadi2019model}
Asadi, M.; Talebi, M.~S.; Bourel, H.; and Maillard, O.-A. 2019.
\newblock Model-Based Reinforcement Learning Exploiting State-Action
  Equivalence.
\newblock \emph{arXiv preprint arXiv:1910.04077} .

\bibitem[{Brafman and Tennenholtz(2003)}]{brafman2003rmax}
Brafman, R.~I.; and Tennenholtz, M. 2003.
\newblock R-max - a General Polynomial Time Algorithm for Near-optimal
  Reinforcement Learning.
\newblock \emph{J. Mach. Learn. Res.} 3: 213--231.
\newblock ISSN 1532-4435.

\bibitem[{Brockman et~al.(2016)Brockman, Cheung, Pettersson, Schneider,
  Schulman, Tang, and Zaremba}]{1606.01540}
Brockman, G.; Cheung, V.; Pettersson, L.; Schneider, J.; Schulman, J.; Tang,
  J.; and Zaremba, W. 2016.
\newblock OpenAI Gym.

\bibitem[{Brunskill et~al.(2008)Brunskill, Leffler, Li, Littman, and
  Roy}]{brunskill2012corl}
Brunskill, E.; Leffler, B.~R.; Li, L.; Littman, M.~L.; and Roy, N. 2008.
\newblock CORL: A Continuous-State Offset-Dynamics Reinforcement Learner.
\newblock UAI'08, 53–61. Arlington, Virginia, USA: AUAI Press.
\newblock ISBN 0974903949.

\bibitem[{Brunskill and Li(2013)}]{brunskill2013sample}
Brunskill, E.; and Li, L. 2013.
\newblock Sample Complexity of Multi-Task Reinforcement Learning.
\newblock In \emph{Proceedings of the Twenty-Ninth Conference on Uncertainty in
  Artificial Intelligence}, UAI'13, 122–131. Arlington, Virginia, USA: AUAI
  Press.

\bibitem[{Brunskill and Li(2014)}]{brunskill2014pac}
Brunskill, E.; and Li, L. 2014.
\newblock PAC-Inspired Option Discovery in Lifelong Reinforcement Learning.
\newblock In \emph{Proceedings of the 31st International Conference on
  International Conference on Machine Learning - Volume 32}, ICML'14,
  II–316–II–324. JMLR.org.

\bibitem[{Even-Dar and Mansour(2003)}]{even2003approximate}
Even-Dar, E.; and Mansour, Y. 2003.
\newblock Approximate equivalence of Markov decision processes.
\newblock In \emph{Learning Theory and Kernel Machines}, 581--594. Springer.

\bibitem[{{Feng}, {Yin}, and {Yang}(2019)}]{feng2019does}
{Feng}, F.; {Yin}, W.; and {Yang}, L.~F. 2019.
\newblock {How Does an Approximate Model Help in Reinforcement Learning?}
\newblock \emph{arXiv e-prints} arXiv:1912.02986.

\bibitem[{Hayes(2005)}]{hayes2005large}
Hayes, T.~P. 2005.
\newblock A large-deviation inequality for vector-valued martingales.
\newblock \emph{Combinatorics, Probability and Computing} .

\bibitem[{Jaksch, Ortner, and Auer(2010)}]{jaksch2010near}
Jaksch, T.; Ortner, R.; and Auer, P. 2010.
\newblock Near-optimal regret bounds for reinforcement learning.
\newblock \emph{Journal of Machine Learning Research} 11(Apr): 1563--1600.

\bibitem[{Kaiser et~al.(2019)Kaiser, Babaeizadeh, Milos, Osinski, Campbell,
  Czechowski, Erhan, Finn, Kozakowski, Levine et~al.}]{kaiser2019model}
Kaiser, L.; Babaeizadeh, M.; Milos, P.; Osinski, B.; Campbell, R.~H.;
  Czechowski, K.; Erhan, D.; Finn, C.; Kozakowski, P.; Levine, S.; et~al. 2019.
\newblock Model-based reinforcement learning for atari.
\newblock \emph{arXiv preprint arXiv:1903.00374} .

\bibitem[{Kakade et~al.(2003)}]{kakade2003sample}
Kakade, S.~M.; et~al. 2003.
\newblock \emph{On the sample complexity of reinforcement learning}.
\newblock Ph.D. thesis, University of London London, England.

\bibitem[{Kearns and Singh(2002)}]{kearns2002near}
Kearns, M.; and Singh, S. 2002.
\newblock Near-optimal reinforcement learning in polynomial time.
\newblock \emph{Machine learning} 49(2-3): 209--232.

\bibitem[{Konidaris and Barto(2006)}]{konidaris2006autonomous}
Konidaris, G.; and Barto, A. 2006.
\newblock Autonomous shaping: Knowledge transfer in reinforcement learning.
\newblock In \emph{Proceedings of the 23rd international conference on Machine
  learning}, 489--496.

\bibitem[{Leffler, Littman, and Edmunds(2007)}]{leffler2007efficient}
Leffler, B.~R.; Littman, M.~L.; and Edmunds, T. 2007.
\newblock Efficient reinforcement learning with relocatable action models.
\newblock In \emph{AAAI}, volume~7, 572--577.

\bibitem[{Leffler et~al.(2005)Leffler, Littman, Strehl, and
  Walsh}]{leffler2005efficient}
Leffler, B.~R.; Littman, M.~L.; Strehl, A.~L.; and Walsh, T.~J. 2005.
\newblock Efficient Exploration With Latent Structure.
\newblock In \emph{Robotics: Science and Systems}, 81--88.

\bibitem[{Li and Zhang(2018)}]{li2018optimal}
Li, S.; and Zhang, C. 2018.
\newblock An optimal online method of selecting source policies for
  reinforcement learning.
\newblock In \emph{Thirty-Second AAAI Conference on Artificial Intelligence}.

\bibitem[{Liu, Guo, and Brunskill(2016)}]{liu2016pac}
Liu, Y.; Guo, Z.; and Brunskill, E. 2016.
\newblock PAC continuous state online multitask reinforcement learning with
  identification.
\newblock In \emph{Proceedings of the 2016 International Conference on
  Autonomous Agents \& Multiagent Systems}, 438--446.

\bibitem[{Mann and Choe(2012)}]{mann2012directed}
Mann, T.~A.; and Choe, Y. 2012.
\newblock Directed Exploration in Reinforcement Learning with Transferred
  Knowledge.
\newblock In \emph{Ewrl}, 59--76.

\bibitem[{Modi et~al.(2018)Modi, Jiang, Singh, and Tewari}]{modi2018markov}
Modi, A.; Jiang, N.; Singh, S.; and Tewari, A. 2018.
\newblock Markov decision processes with continuous side information.
\newblock In \emph{Algorithmic Learning Theory}, 597--618.

\bibitem[{Nagabandi et~al.(2018)Nagabandi, Kahn, Fearing, and
  Levine}]{nagabandi2018neural}
Nagabandi, A.; Kahn, G.; Fearing, R.~S.; and Levine, S. 2018.
\newblock Neural network dynamics for model-based deep reinforcement learning
  with model-free fine-tuning.
\newblock In \emph{2018 IEEE International Conference on Robotics and
  Automation (ICRA)}, 7559--7566. IEEE.

\bibitem[{Ramamoorthy et~al.(2013)Ramamoorthy, Mahmud, Hawasly, and
  Rosman}]{ramamoorthy2013clustering}
Ramamoorthy, S.; Mahmud, M.; Hawasly, M.; and Rosman, B. 2013.
\newblock Clustering markov decision processes for continual transfer.
\newblock \emph{School of Informatics, University of Edinburgh, Tech. Rep} .

\bibitem[{Ravindran and Barto(2003)}]{ravindran2003smdp}
Ravindran, B.; and Barto, A.~G. 2003.
\newblock SMDP Homomorphisms: An Algebraic Approach to Abstraction in
  Semi-Markov Decision Processes.
\newblock In \emph{Proceedings of the 18th International Joint Conference on
  Artificial Intelligence}, IJCAI'03, 1011–1016. San Francisco, CA, USA:
  Morgan Kaufmann Publishers Inc.

\bibitem[{Ravindran and Barto(2004)}]{ravindran2004algebraic}
Ravindran, B.; and Barto, A.~G. 2004.
\newblock \emph{An algebraic approach to abstraction in reinforcement
  learning}.
\newblock Ph.D. thesis, University of Massachusetts at Amherst.

\bibitem[{Sharma et~al.(2007)Sharma, Holmes, Santamar{\'\i}a, Irani, Isbell~Jr,
  and Ram}]{sharma2007transfer}
Sharma, M.; Holmes, M.~P.; Santamar{\'\i}a, J.~C.; Irani, A.; Isbell~Jr, C.~L.;
  and Ram, A. 2007.
\newblock Transfer Learning in Real-Time Strategy Games Using Hybrid CBR/RL.
\newblock In \emph{IJCAI}, volume~7, 1041--1046.

\bibitem[{Soni and Singh(2006)}]{soni2006using}
Soni, V.; and Singh, S. 2006.
\newblock Using homomorphisms to transfer options across continuous
  reinforcement learning domains.
\newblock In \emph{AAAI}, volume~6, 494--499.

\bibitem[{Sorg and Singh(2009)}]{sorg2009transfer}
Sorg, J.; and Singh, S. 2009.
\newblock Transfer via soft homomorphisms.
\newblock In \emph{Proceedings of The 8th International Conference on
  Autonomous Agents and Multiagent Systems-Volume 2}, 741--748.

\bibitem[{Strehl and Littman(2004)}]{strehl2004exploration}
Strehl, A.; and Littman, M. 2004.
\newblock Exploration via model based interval estimation.
\newblock In \emph{International Conference on Machine Learning}. Citeseer.

\bibitem[{Strehl, Li, and Littman(2006)}]{strehl2006incremental}
Strehl, A.~L.; Li, L.; and Littman, M.~L. 2006.
\newblock Incremental Model-Based Learners with Formal Learning-Time
  Guarantees.
\newblock In \emph{Proceedings of the Twenty-Second Conference on Uncertainty
  in Artificial Intelligence}, UAI'06, 485–493. Arlington, Virginia, USA:
  AUAI Press.
\newblock ISBN 0974903922.

\bibitem[{Strehl, Li, and Littman(2012)}]{strehl2012incremental}
Strehl, A.~L.; Li, L.; and Littman, M.~L. 2012.
\newblock Incremental model-based learners with formal learning-time
  guarantees.
\newblock \emph{arXiv preprint arXiv:1206.6870} .

\bibitem[{Strehl and Littman(2005)}]{strehl2005a}
Strehl, A.~L.; and Littman, M.~L. 2005.
\newblock A Theoretical Analysis of Model-Based Interval Estimation.
\newblock In \emph{Proceedings of the 22Nd International Conference on Machine
  Learning}, ICML '05, 856--863. New York, NY, USA: ACM.
\newblock ISBN 1-59593-180-5.
\newblock \doi{10.1145/1102351.1102459}.

\bibitem[{Strehl and Littman(2008)}]{strehl2008analysis}
Strehl, A.~L.; and Littman, M.~L. 2008.
\newblock An analysis of model-based interval estimation for Markov decision
  processes.
\newblock \emph{Journal of Computer and System Sciences} 74(8): 1309--1331.

\bibitem[{{Tang} et~al.(2016){Tang}, {Houthooft}, {Foote}, {Stooke}, {Chen},
  {Duan}, {Schulman}, {De Turck}, and {Abbeel}}]{tang2016exploration}
{Tang}, H.; {Houthooft}, R.; {Foote}, D.; {Stooke}, A.; {Chen}, X.; {Duan}, Y.;
  {Schulman}, J.; {De Turck}, F.; and {Abbeel}, P. 2016.
\newblock {\#Exploration: A Study of Count-Based Exploration for Deep
  Reinforcement Learning}.
\newblock \emph{arXiv e-prints} arXiv:1611.04717.

\bibitem[{Taylor and Stone(2007)}]{taylor2007representation}
Taylor, M.~E.; and Stone, P. 2007.
\newblock Representation Transfer for Reinforcement Learning.
\newblock In \emph{AAAI Fall Symposium: Computational Approaches to
  Representation Change during Learning and Development}, 78--85.

\bibitem[{Taylor and Stone(2009)}]{taylor2009transfer}
Taylor, M.~E.; and Stone, P. 2009.
\newblock Transfer learning for reinforcement learning domains: A survey.
\newblock \emph{Journal of Machine Learning Research} 10(Jul): 1633--1685.

\bibitem[{Tirinzoni, Poiani, and Restelli(2020)}]{tirinzoni2020sequential}
Tirinzoni, A.; Poiani, R.; and Restelli, M. 2020.
\newblock Sequential Transfer in Reinforcement Learning with a Generative
  Model.
\newblock In III, H.~D.; and Singh, A., eds., \emph{Proceedings of the 37th
  International Conference on Machine Learning}, volume 119 of
  \emph{Proceedings of Machine Learning Research}, 9481--9492. PMLR.

\bibitem[{Torrey and Shavlik(2010)}]{torrey2010transfer}
Torrey, L.; and Shavlik, J. 2010.
\newblock Transfer learning.
\newblock In \emph{Handbook of research on machine learning applications and
  trends: algorithms, methods, and techniques}, 242--264. IGI Global.

\bibitem[{Watkins and Dayan(1992)}]{watkins1992q}
Watkins, C.~J.; and Dayan, P. 1992.
\newblock Q-learning.
\newblock \emph{Machine learning} 8(3-4): 279--292.

\bibitem[{Wilson et~al.(2007)Wilson, Fern, Ray, and
  Tadepalli}]{wilson2007multi}
Wilson, A.; Fern, A.; Ray, S.; and Tadepalli, P. 2007.
\newblock Multi-task reinforcement learning: a hierarchical Bayesian approach.
\newblock In \emph{Proceedings of the 24th international conference on Machine
  learning}, 1015--1022. ACM.

\end{thebibliography}

\newpage
\onecolumn
\appendix
{\centering{\Large Appendix: \mytitle}}

\section{Intuitive Examples}
\label{app:examples}

\begin{figure}[!htbp]
  \centering
    \begin{subfigure}[t]{0.24\columnwidth}
    \centering
    \includegraphics[width=0.8\textwidth, bb=0 0 230 230]{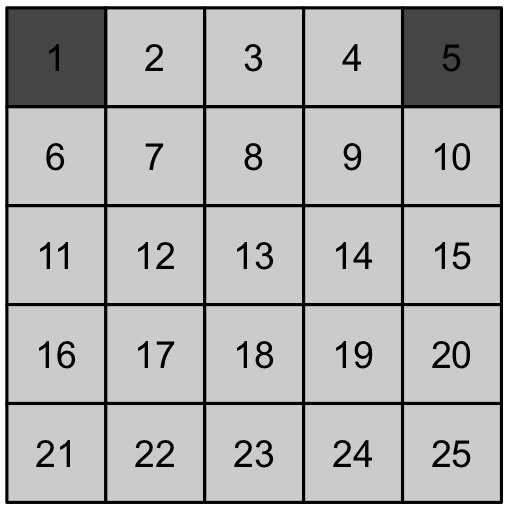}
    \caption{action=`$\uparrow$'}
    \label{sfig:pateg_up}
    \end{subfigure}
    \hfill
    \begin{subfigure}[t]{0.24\columnwidth}
    \centering
    \includegraphics[width=0.8\textwidth, bb=0 0 230 230]{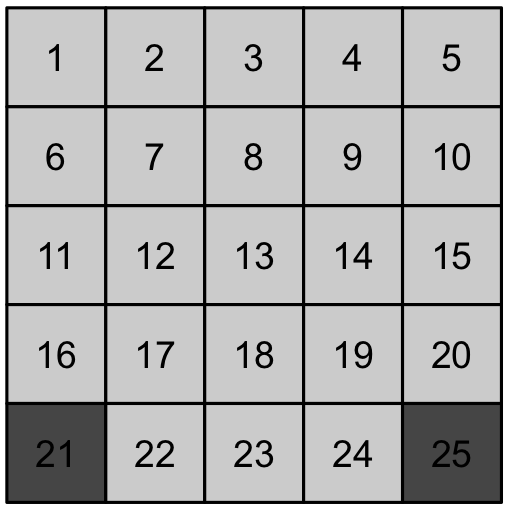}
    \caption{action=`$\downarrow$'}
    \label{sfig:pateg_down}
    \end{subfigure}
    \hfill
    \begin{subfigure}[t]{0.24\columnwidth}
    \centering
    \includegraphics[width=0.8\textwidth, bb=0 0 230 230]{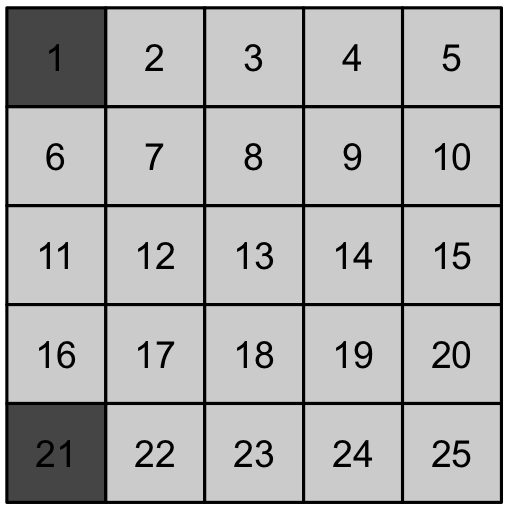}
    \caption{action=`$\leftarrow$'}
    \label{sfig:pateg_left}
    \end{subfigure}
    \hfill
    \begin{subfigure}[t]{0.24\columnwidth}
    \centering
    \includegraphics[width=0.8\textwidth, bb=0 0 230 230]{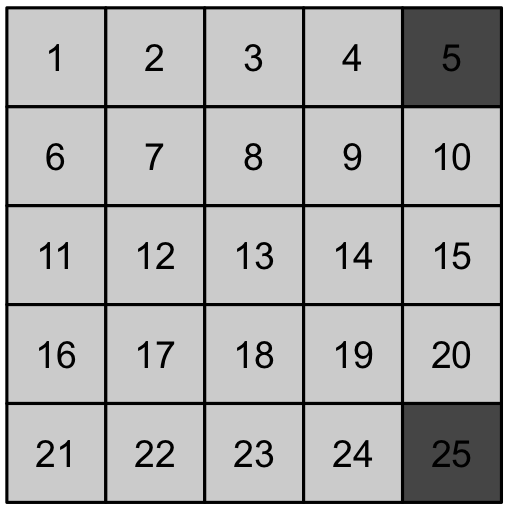}
    \caption{action=`$\rightarrow$'}
    \label{sfig:pateg_right}
    \end{subfigure}
    \caption{An example of \patnames in a $5\times 5$ slippery gridworld with no reward and slipping probability$=$0.4. The template at all \protect\includegraphics[height=0.7em, bb=0 0 550 550]{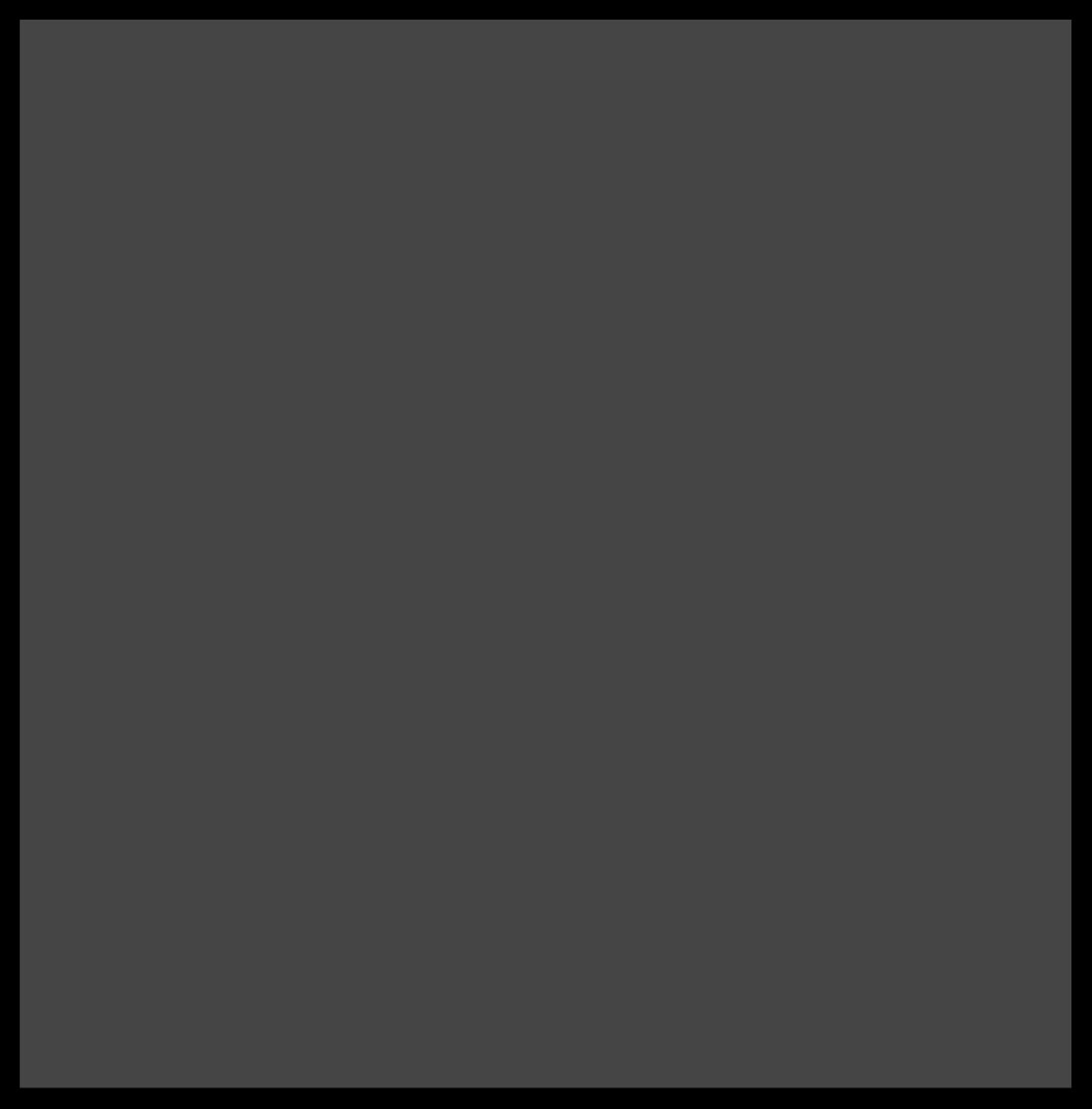}s is $\pattern_1=([0.8,0.2,0,\cdots,0],0)$, and the template at all \protect\includegraphics[height=0.7em, bb=0 0 550 550]{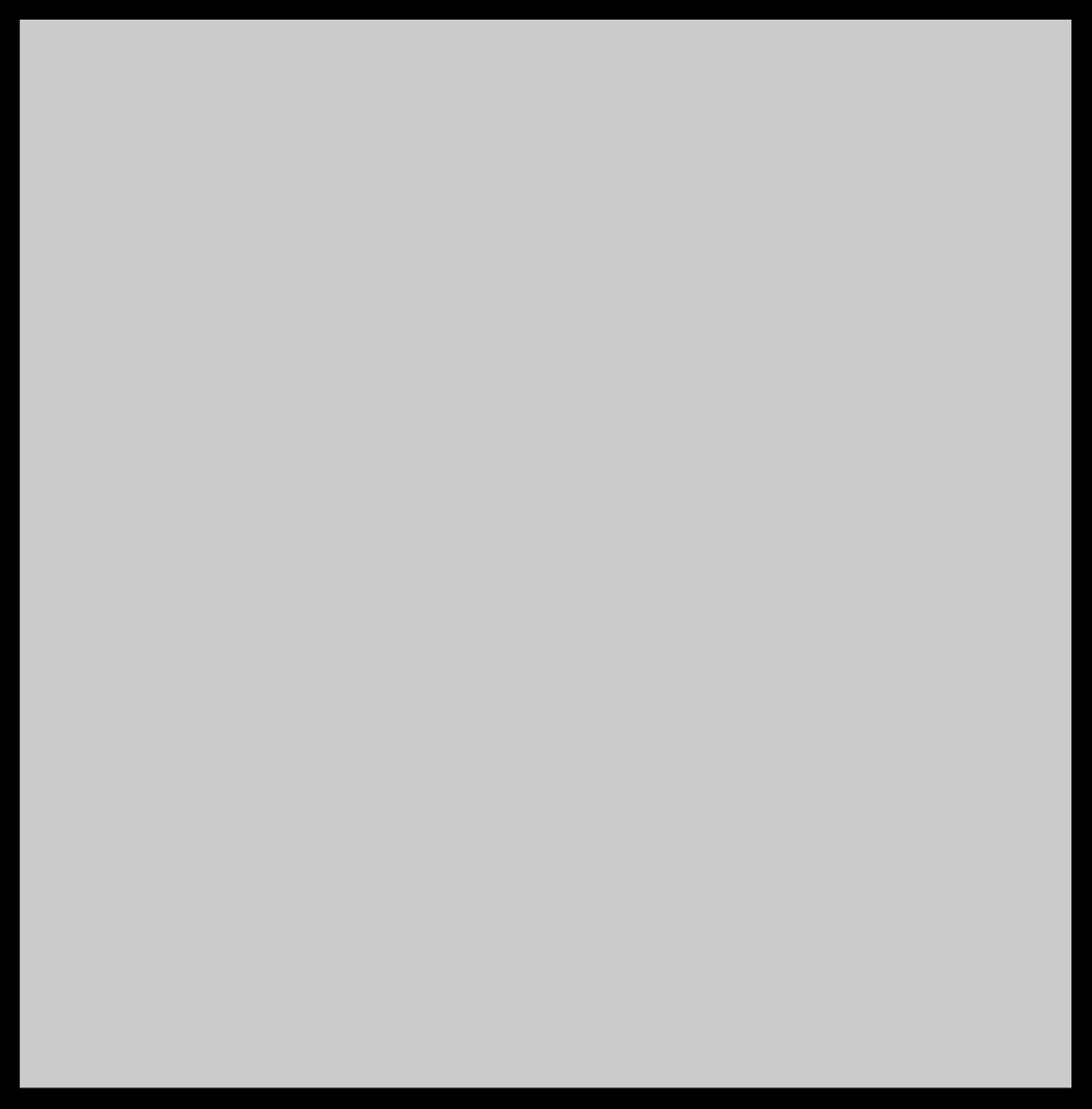}s is $\pattern_2=([0.6,0.2,0.2,0,\cdots,0], 0)$. }
    \label{fig:pat_eg}
\end{figure}
\textbf{Explanations for Figure~\ref{fig:pat_eg}.}
Consider a $5\times 5$ gridworld where the agent has 4 actions: $\uparrow, \downarrow, \leftarrow$ and $\rightarrow$, as well as 25 states, as shown in Figure~\ref{sfig:pateg_up}, ~\ref{sfig:pateg_down}, ~\ref{sfig:pateg_left} and ~\ref{sfig:pateg_right}. Thus there are 100 distinct state-action pairs in total. 
Since the slipping probability is 0.4, which means action $\uparrow$ will become $\leftarrow$ or $\rightarrow$ with probability 0.2 respectively,
we know the transition probability $p(\cdot|s=1, a=\uparrow)$ is 
\begin{equation}
p(s^\prime|s=1, a=\uparrow)=
\begin{cases}
0.8 & \text{if } s^\prime=1 \\
0.2 & \text{if } s^\prime=2 \\
0 & \text{otherwise} 
\end{cases}
\end{equation} 
By re-ordering, its \patname is $([0.8,0.2,0,\cdots,0],0)$.

Similarly, for state 2 and action $\uparrow$,
\begin{equation}
p(s^\prime|s=2, a=\uparrow)=
\begin{cases}
0.6 & \text{if } s^\prime=2 \\
0.2 & \text{if } s^\prime=1 \text{ or } 3 \\
0 & \text{otherwise} 
\end{cases}
\end{equation} 
Its \patname is $([0.6,0.2,0.2,0,\cdots,0], 0)$.

For state 3 and action $\uparrow$,
\begin{equation}
p(s^\prime|s=3, a=\uparrow)=
\begin{cases}
0.6 & \text{if } s^\prime=3 \\
0.2 & \text{if } s^\prime=2 \text{ or } 4 \\
0 & \text{otherwise} 
\end{cases}
\end{equation} 
Its \patname is also $([0.6,0.2,0.2,0,\cdots,0], 0)$.

In this way, we can find that there are only 2 distinct \patnames, like shown in the figure, which is much less than the number of state-action pairs. The state-action pairs with the same \patnames are able to share the ``relative'' probability of transitioning.


\subsection{An Example Illustrating Sample Efficiency of Our Algorithm}
\label{app:example}

We use the gridworld in Figure~\ref{fig:pat_eg} as an example to illustrate how the augmented estimation saves samples by estimating \patnames instead of every s-a pairs. 
For simplicity, we assume the possibility of arbitrarily sampling any s-a pair and observing its transitions. 
The gridworld in Figure~\ref{fig:pat_eg} has 100 distinct s-a transition dynamics, but only 2 distinct \patnames. The distance between the 2 \patnames is $0.28$.

The objective is to estimate $p(s,a)$ such that $\| \hat{p}(s,a)-p(s,a) \| \leq 0.01$ for all $(s,a)$ with probability $95\%$. 
Using the conventional estimation, the total number of samples we need is\footnote{According to Hoeffding's inequality, $\mathcal{O}(\frac{1}{\alpha^2}\ln \frac{1}{\delta})$ samples are needed to achieve an $\alpha$-accurate estimation with probability $1-\delta$.} $\mathcal{O}(100\times \frac{1}{0.01^2} \ln \frac{100}{0.05}) \approx 7.6 \times 10^6$. 
However our proposed augmented estimation only requires  $\sim 1.25\%$ of samples needed by the conventional estimation, 
since it takes $\mathcal{O}(100\times \frac{1}{0.28^2} \ln \frac{100}{0.025}) \approx 1.1 \times 10^4$ samples to correctly identify the \patnames of all s-a pairs with probability $1-\delta/2$,
plus $\mathcal{O}(2 \times \frac{1}{0.01^2} \ln \frac{2}{0.025}) \approx 8.4 \times 10^4$ samples to get $0.01$-accurate estimations of 2 \patnames with probability $1-\delta/2$.

\section{Additional Related Works}\label{app:related}


\noindent \textbf{Non-PAC MTRL Algorithms. }
Besides the above methods with PAC guarantees, there are many interesting approaches aiming to effectively transfer knowledge across tasks. 
Some approaches augment the learning of a new task by reusing the policies learned from previous tasks~\cite{ramamoorthy2013clustering,li2018optimal}, though a library of reliable source policies is usually needed. 
The hierarchical multi-task learning algorithm proposed by~\citet{wilson2007multi} learns a Bayesian mixture model from previous tasks, and use the learned distribution as a prior of new tasks. 
Although good experimental results are shown, there are no theoretical guarantees. \cite{abel2018state} introduce two types of state-abstractions that can reduce the problem complexity and improve the learning of new tasks. 
However, as shown in the paper, the proposed abstractions, when combined with PAC-MDP algorithms such as RMax~\cite{brafman2003rmax}, the PAC guarantee does not hold anymore and the number of mistakes made by the agent can be arbitrarily large. In this paper, we show our proposed method outperforms the abstraction method empirically.

\noindent \textbf{Empirical Studies.}
MTRL/lifelong RL and \textit{Transfer Learning (TL)}~\cite{torrey2010transfer} are closely related, and have been studied for years. 
\citet{taylor2009transfer} survey a wide range of empirical results on transferring knowledge among tasks, and point out some problems of previous works, including negative transfer, partially due to the lack of theoretical analysis.

\noindent\textbf{Cross-Domain Learning.}
Knowledge transfer between tasks with various state/action spaces (task domains) is also an important topic. 
As summarized by \cite{taylor2009transfer}, most early works require hand-coded inter-task mappings~\cite{taylor2007representation}, or only learn from unchanged problem representations~\cite{konidaris2006autonomous,sharma2007transfer}.
Recently, \citet{ammar2015autonomous} propose an algorithm that can perform cross-domain transfer efficiently. 
The authors assume tasks are from a finite set of domains, and parametrize each task's policy by the product of a shared knowledge base and task-specific coefficients. 
However, although convergence guarantee is provided, there is no guarantee for sample efficiency.
\citet{mann2012directed} study cross-domain transfer learning with an inter-task mapping of s-a pairs (with similarly-bounded Q values). However, such inter-task mappings are often not available in multi-task RL.


\section{Algorithm Pseudo-code}\label{app:algo}



Procedure~\ref{alg:group} shows \ourmodtwo introduced in Section~\ref{sec:alg2}, which learns \patnames as Procedure~\ref{alg:main} does, and also expedites learning by clustering models.

\begin{algorithm}[!ht]
   \caption{\mbox{\ourmodtwofull (\ourmodtwo)}}
   \label{alg:group}
   \hspace*{\algorithmicindent} \textbf{Input:} 
   $\hat{\tau}$; $\epsilon$; $\gamma$; $\thressm$; $\threslg$ (same as Procedure~\ref{alg:main}); \\
   number of tasks in the first phase $T_1$; number of models $C$; model error tolerance $\eta$\\
   \hspace*{\algorithmicindent} \textbf{Output} 
   Near-optimal policies $\{\pi_t\}_{t=1,2,\cdots}$
\begin{algorithmic}[1]
   \State Initialize the \patname group set $\patset$, the \patname visit set $\patvisset$, and the MDP group set $\mdpset$ as empty
   \For{$t \gets 1,2,\cdots, T_1$} \Comment{{\scriptsize{Phase 1}}}
    \State Run Procedure~\ref{alg:main} Line 3-18 and get visits $\bm{n}(s,a,\cdot)$ and $R(s,a)$, $ \forall (s,a) \in (\states,\actions)$
   \EndFor
   \State Cluster the past $T_1$ tasks into $C$ groups (store in $\mdpset$). \Comment{{\scriptsize{model clustering}}}
   \State $\pattern_{(s,a,c)}, \permut_{(s,a,c)} \gets$ \Call{gen-TT}{$\bm{n}_c(s,a), R_c(s,a)$} $\forall (s,a,c)$
   \For{$t \gets T_1+1, T_1+2, \cdots$} \Comment{{\scriptsize{Phase 2}}}
    \State Receive a task $M_t$, do initializations as Line 3 in Procedure~\ref{alg:main}
    \State Initialize model score $u(c)\gets \eta, \forall c \in \mdpset$ \Comment{{\scriptsize{$u(c)$ measures how possible $c$ is the true model for the current task}}}
    \For{$h \gets 1,2,\cdots,H$}
      \State Run Procedure~\ref{alg:main} Line 5-16, and get updated $\bm{n}(s_h, a_h,\cdot)$, $R(s_h,a_h)$, $\pattern_{(s_h,a_h)}$, $\permut(s_h,a_h)$ 
      \If{$\lVert \bm{n}(s_h, a_h,\cdot)\rVert_{\ell_1}= \thressm$} \Comment{{\scriptsize{\patname is identified}}}
      \For{$c\in\mdpset$}
      \If{$\pattern_{(s_h,a_h,c)}\neq \pattern_{(s_h,a_h)}$ \textbf{or} $\permut_{(s_h,a_h,c)}\neq \permut_{(s_h,a_h)}$}
      \State $u(c) \gets u(c) - 1$ \Comment{{\scriptsize{group identification}}}
      \EndIf
      \EndFor
      \EndIf
      \If{$\exists$ only 1 group $c^*\in \mdpset$ s.t. $u(c^*)>0$}
        \State {{\Call{augment}{$\patvis_{\pattern_{(s,a,c^*)}}$,$\bm{n}(s,a,\cdot)$,$R(s,a)$,$\sigma_{s,a}$} $\forall(s,a)$}}
      \EndIf
    \EndFor
    \State Run Procedure~\ref{alg:main} Line 17-18
    \EndFor 
\end{algorithmic}
\end{algorithm}


\section{Additional Definitions}
\label{app:def}

\begin{definition} [\patname Gap]
\label{def:pat_gap}
  Define the {\patname distance} between two \patnames ~$\pattern_a$ and $\pattern_b$  as 
  $\rho(\pattern_a,\pattern_b) = \| \patp_a - \patp_b \|_2 + | \patr_a - \patr_b |.$ Suppose there is a minimum \patname distance $\tau$, such that for any two different \patnames $\pattern_a, \pattern_b \in \patset$, $\rho(\pattern_a,\pattern_b) \geq \tau$. We name $\tau$ as \textit{\patname gap}.
\end{definition}
\paragraph{Remark.} According to Lemma~\ref{lem:bound}, the \patname gap between any two \patnames will not exceed 2 (suppose reward is in between 0 and 1). 

\begin{lemma}
\label{lem:bound}
Let $\mathbf{a}=(a_1,\dots, a_n)$ and $\mathbf{b}=(b_1,\dots, b_n)$ be two vectors in $\mathbb{R}^n$ such that $\sum_{i=1}^n a_i=\sum_{j=1}^n b_j=1$. Moreover, assume there hold
\begin{align*}
    &1 \geq a_1\geq a_2\geq\cdots \geq a_n \geq 0,\\
    &1 \geq b_1\geq b_2\geq\cdots \geq b_n \geq 0.
\end{align*}
Then 
\begin{align}
    ||\mathbf{a}-\mathbf{b}||^2_{2}:=\sum_{i=1}^n (a_i-b_i)^2\leq \frac{n-1}{n}.\label{ineq: a and b}
\end{align}
The equality holds when we choose, e.g., $\mathbf{a}=(1,0,\dots, 0)$ and $\mathbf{b}=(\frac{1}{n},\frac{1}{n},\dots, \frac{1}{n})$.
\end{lemma}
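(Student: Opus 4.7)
Let $K := \{\mathbf{x} \in \mathbb{R}^n : \sum_i x_i = 1,\ 1 \geq x_1 \geq \cdots \geq x_n \geq 0\}$, a compact convex polytope. The plan is to recognize that $f(\mathbf{a},\mathbf{b}) := \|\mathbf{a}-\mathbf{b}\|_2^2$ is a convex function of $(\mathbf{a},\mathbf{b}) \in \mathbb{R}^{2n}$ (its Hessian is positive semidefinite), so its maximum over the compact convex polytope $K \times K$ is attained at an extreme point, i.e., at a pair of vertices of $K$. This reduces the problem to a finite search over vertex pairs.

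\textbf{Characterizing the vertices of $K$.} I would next show that the vertices of $K$ are exactly $\mathbf{v}_k := \tfrac{1}{k}(\underbrace{1,\ldots,1}_{k},0,\ldots,0)$ for $k=1,\ldots,n$. The defining inequalities of $K$ are $x_i - x_{i+1} \geq 0$ for $i=1,\ldots,n-1$ together with $x_n \geq 0$ ($n$ inequalities total), combined with the single equality $\sum_i x_i = 1$. At a vertex, at least $n-1$ of these inequalities must be active. A short case analysis on which inequality (if any) is slack yields that every vertex satisfies $x_1 = \cdots = x_k$ and $x_{k+1} = \cdots = x_n = 0$ for some $k$; the sum constraint then forces $x_1 = 1/k$, giving exactly the $\mathbf{v}_k$.

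\textbf{Finishing the bound.} For $1 \leq j \leq k \leq n$, a direct computation yields
\begin{align*}
\|\mathbf{v}_j - \mathbf{v}_k\|_2^2 = j\!\left(\tfrac{1}{j}-\tfrac{1}{k}\right)^{\!2} + (k-j)\cdot\tfrac{1}{k^2} = \tfrac{1}{j}-\tfrac{1}{k}.
\end{align*}
Maximizing $1/j - 1/k$ over $1 \leq j \leq k \leq n$ gives $(j,k) = (1,n)$ and the value $1 - 1/n = (n-1)/n$, which establishes the desired bound and matches the given equality case $(\mathbf{v}_1,\mathbf{v}_n)=((1,0,\ldots,0),(\tfrac{1}{n},\ldots,\tfrac{1}{n}))$.

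\textbf{Main obstacle.} The convexity reduction and the final arithmetic are routine; the only step requiring genuine care is the vertex characterization of $K$. The essential observation is that if an extreme point had two distinct slack inequalities simultaneously (say $x_i > x_{i+1}$ and $x_j > x_{j+1}$ for $i \neq j$, or one such slackness together with $x_n > 0$), one could construct a small feasible perturbation in two linearly independent directions preserving all active constraints and the sum, contradicting extremality. Once this is nailed down, the result follows immediately.
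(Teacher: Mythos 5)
Your proposal is correct, but it takes a genuinely different route from the paper. The paper proves the bound by induction on $n$: given vectors of length $k+1$, it folds the last coordinate uniformly into the first $k$ coordinates to produce vectors $\mathbf{a}'$, $\mathbf{b}'$ satisfying the induction hypothesis, then expands $\|\mathbf{a}'-\mathbf{b}'\|_2^2 = \|\mathbf{a}-\mathbf{b}\|_2^2 - \tfrac{(a_{k+1}-b_{k+1})^2}{k}$ (restricted to the first $k$ coordinates) and uses $a_{k+1}, b_{k+1} \in [0,\tfrac{1}{k+1}]$ to close the induction. You instead observe that $\|\mathbf{a}-\mathbf{b}\|_2^2$ is convex on the product polytope $K\times K$, so the maximum is attained at a pair of vertices; you characterize the vertices of the ordered probability simplex as $\mathbf{v}_k=\tfrac{1}{k}(1,\dots,1,0,\dots,0)$ and compute $\|\mathbf{v}_j-\mathbf{v}_k\|_2^2=\tfrac{1}{j}-\tfrac{1}{k}$ for $j\le k$. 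Your vertex count is right: $K$ sits in an $(n-1)$-dimensional affine subspace cut out by the $n$ inequalities $x_1\ge\cdots\ge x_n\ge 0$, so a vertex has at most one slack inequality, and the case analysis yields exactly the $\mathbf{v}_k$; the distance formula and the maximization over $(j,k)=(1,n)$ check out. Your approach buys more than the paper's: it identifies all extreme points, gives the exact pairwise distances $\tfrac{1}{j}-\tfrac{1}{k}$, and shows the maximizing pair is unique up to the stated example, whereas the induction only delivers the bound and verifies the equality case by substitution. The paper's induction is more elementary and self-contained, requiring no appeal to convex-analytic facts about maxima of convex functions over polytopes. Either argument is acceptable.
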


\begin{proof}
We prove the lemma by induction. 
\begin{itemize}
    \item When $n=1$, $\mathbf{a}=\mathbf{b}=1$. Thus the inequality is trivial.
    \item Assume \eqref{ineq: a and b} holds for $n=k$, $k\geq 1$. We show that \eqref{ineq: a and b} is also true for $n=k+1$. Given vectors $\mathbf{a}=(a_1,\dots, a_k, a_{k+1})$ and $\mathbf{b}=(b_1,\dots, b_k, b_{k+1})$ such that they satisfy the conditions in the lemma, construct two new vectors:
    \begin{align*}
        \mathbf{a}'&:=\left(a_1+\frac{a_{k+1}}{k},\dots, a_k+\frac{a_{k+1}}{k}\right),\\
        \mathbf{b}'&:=\left(b_1+\frac{b_{k+1}}{k},\dots, b_k+\frac{b_{k+1}}{k}\right).
    \end{align*}
    It is obvious that $\mathbf{a}'$ and $\mathbf{b}'$ satisfy the conditions in the induction hypothesis. Thus,
    \begin{align}\label{ineq: a' and b'}
        ||\mathbf{a}'-\mathbf{b}'||^2_2\leq \frac{k-1}{k}.
    \end{align}
    We calculate
    \begin{align}
        ||\mathbf{a}'-\mathbf{b}'||^2_2&=\sum_{i=1}^k (a_i-b_i+\frac{a_{k+1}-b_{k+1}}{k})^2\\
        &=\sum_{i=1}^k (a_i-b_i)^2+\frac{(a_{k+1}-b_{k+1})^2}{k}+\frac{2(a_{k+1}-b_{k+1})}{k}\cdot \sum_{i=1}^k(a_i-b_i)\\
        &=\sum_{i=1}^k (a_i-b_i)^2-\frac{(a_{k+1}-b_{k+1})^2}{k},\label{eq: a' and b'}
    \end{align}
    where the last equality comes from the fact
    \[
    \sum_{i=1}^k (a_i-b_i)=b_{k+1}-a_{k+1}.
    \]
    By assumptions, 
    \[
    \sum_{i=1}^{k+1} a_i=1\;\text{and}\; a_1\geq a_2\geq\cdots \geq a_{k+1} \geq 0,
    \]
     hence
     \begin{align}
     a_{k+1}\in \left[0, \frac{1}{k+1}\right].\label{eq: bound of a_k+1}
     \end{align}
     Similarly, 
     \begin{align}
     b_{k+1}\in \left[0, \frac{1}{k+1}\right].\label{eq: bound of b_k+1}
     \end{align}
     Combine \eqref{ineq: a' and b'}, \eqref{eq: a' and b'}, \eqref{eq: bound of a_k+1} and \eqref{eq: bound of b_k+1},
     \begin{align*}
         ||\mathbf{a}-\mathbf{b}||_{2}^2&=\sum_{i=1}^{k+1} (a_i-b_i)^2\\
         &\leq \frac{k-1}{k}+\frac{k+1}{k}(a_{k+1}-b_{k+1})^2\\
         &\leq \frac{k-1}{k}+\frac{k+1}{k}\cdot \frac{1}{(k+1)^2}\\
         &\leq \frac{k}{k+1}.
     \end{align*}
     Thus \eqref{ineq: a and b} also holds for $n=k+1$. The proof is finished.
\end{itemize}

\end{proof}

The permutation from an s-a transition dynamics to a \patname is recorded by a ranking permutation defined as below.

\begin{definition} [Ranking Permutation]
    For an s-a pair $(s,a)$ with transition probability vector $\mathbf{p} \in \mathbb{R}^S$ where $p_i=p(s_i|s,a)$, by sorting its elements from the largest to the smallest value, we get an ordered vector $\patp$.
    Define function $\permut: \{1,\cdots,S\} \to \{1, \cdots, S\}$ as a mapping from ranking to the indices in $\mathbf{p}$. For example, $\permut(i)$ is the index of the $i$-th largest element of $\mathbf{p}$, i.e., $\patp_i = \mathbf{p}_{\permut(i)}$. The inverse function $\permut^{-1}$ maps indices to ranking. So $\permut^{-1}(j)$ is the ranking of $\mathbf{p}_j$, i.e., $\patp_{\permut^{-1}(j)}=\mathbf{p}_j$.
    The way~{way?} ordering is unique if for any $\mathbf{p}_i = \mathbf{p}_j$ and $i<j$, we put $\mathbf{p}_i$ before $\mathbf{p}_j$ in $\patp$. As a result, $\permut(\cdot)$ is a bijection and can be regarded as a permutation. We call it \textbf{ranking permutation} of $(s,a)$.
\end{definition}

For simplicity, we slightly abuse notation and use $\permut(\mathbf{p})$ to denote the re-ordered vector. Thus $\patp=\permut(\mathbf{p})$ and $\mathbf{p}=\permut^{-1}(\patp)$.


\begin{definition}[Ranking Gap]
\label{def:prob_gap}
    Define $\nu$ as the minimal notable ranking gap, such that for any $\pattern \in \patset$, if $\patp_i > \patp_j$ are two adjacent elements in $\patp$ and $\patp_i - \patp_j \geq \mathcal{O}(\frac{\epsilon (1-\gamma)}{\sqrt{S}V_{\max}})$, then $\patp_i - \patp_j \geq \nu$ holds. In other words, two adjacent elements of $\patp$ satisfy either $\patp_i - \patp_j \geq \nu$ or $\patp_i - \patp_j \leq \mathcal{O}(\frac{\epsilon (1-\gamma)}{\sqrt{S}V_{\max}})$.
\end{definition}

If two adjacent elements are different by no more than $\mathcal{O}(\frac{\epsilon (1-\gamma)}{\sqrt{S}V_{\max}})$, then the corruption can be ignored because it will not influence the value of the policy too much, as proved in Lemma~\ref{lem:simu}. Otherwise we need adequate samples to make sure the ranking of elements will succeed. 


\section{Proofs of Main Theorems}
\label{app:proof}



\subsection{Proof of Theorem~\ref{thm:main}}

To prove Theorem~\ref{thm:main}, we first present Lemma~\ref{lem:concen}, Lemma~\ref{lem:permut}, Lemma~\ref{lem:horizon}, Lemma~\ref{lem:simu} and Lemma~\ref{lem:visits}.

Lemma~\ref{lem:concen} and Lemma~\ref{lem:permut} provide the sample size requirements for correctly identifying the \patname of an s-a pair. 

\begin{lemma}
\label{lem:concen}
For any state-action pair, suppose the ranking permutation of the estimated probability vector is the same with that of the underlying probability vector, then it would be identified to its corresponding \patname group correctly with $\mathcal{O}(\frac{1}{\tau^2}\ln \frac{1}{\delta})$ samples, with probability at least $1-\delta$, where $\tau$ is the \patname gap defined in Definition~\ref{def:pat_gap}. 
\end{lemma}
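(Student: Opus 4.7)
The plan is to reduce the template-identification problem to a standard concentration argument on the empirical transition vector and empirical reward, then invoke the template gap $\tau$ together with a triangle-inequality argument to conclude that the nearest template in $\patset$ must be the correct one.

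First, I would exploit the hypothesis that the ranking permutation of the empirical probability vector $\widehat{\mathbf{p}}(\cdot|s,a)$ coincides with the ranking permutation $\sigma$ of the true $\mathbf{p}(\cdot|s,a)$. Under this hypothesis the empirical template extracted by \textsc{GEN-TT} is simply $\widehat{\pattern}=(\sigma(\widehat{\mathbf{p}}(\cdot|s,a)),\widehat{r}(s,a))$, while the true template of $(s,a)$ is $\pattern=(\sigma(\mathbf{p}(\cdot|s,a)),r(s,a))$. Because permutation preserves $\ell_2$-norms, $\|\widehat{\patp}-\patp\|_2 = \|\widehat{\mathbf{p}}(\cdot|s,a)-\mathbf{p}(\cdot|s,a)\|_2$, so the template distance $\rho(\widehat{\pattern},\pattern)$ is exactly the vector-$\ell_2$ deviation of the empirical transition plus the scalar deviation of the empirical reward.

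Second, I would apply a concentration bound to control $\rho(\widehat{\pattern},\pattern)$ with $n$ samples: a vector-valued Azuma/Hayes inequality (the paper already cites \citet{hayes2005large}) for the probability vector and a Hoeffding bound for the reward, yielding $\rho(\widehat{\pattern},\pattern)\le c\sqrt{\frac{1}{n}\ln\frac{1}{\delta}}$ with probability at least $1-\delta$ for some absolute constant $c$. Choosing $n=\mathcal{O}(\frac{1}{\tau^2}\ln\frac{1}{\delta})$ makes the right-hand side at most $\tau/2$.

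Third, I invoke the template gap: for any $\pattern'\in\patset$ with $\pattern'\neq\pattern$, Definition~\ref{def:pat_gap} gives $\rho(\pattern,\pattern')\ge\tau$, so by the triangle inequality
\begin{equation*}
\rho(\widehat{\pattern},\pattern')\ \ge\ \rho(\pattern,\pattern')-\rho(\widehat{\pattern},\pattern)\ \ge\ \tau-\tfrac{\tau}{2}\ =\ \tfrac{\tau}{2}\ \ge\ \rho(\widehat{\pattern},\pattern).
\end{equation*}
Hence $\pattern$ is the unique nearest template in $\patset$ to $\widehat{\pattern}$ and the identification step in Algorithm~\ref{alg:main} selects the correct group. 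The main obstacle I foresee is making the vector-valued concentration tight enough to absorb the full sample dependence into $\tau^{-2}$ rather than a looser $S\tau^{-2}$ scaling; using a single application of Hayes' martingale inequality (which controls the $\ell_2$-norm of a vector martingale with a dimension-free rate) instead of a union bound over coordinates is the key technical choice that keeps the sample bound free of any factor of $S$.
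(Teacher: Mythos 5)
Your proposal is correct and follows essentially the same route as the paper's proof: a vector-valued martingale concentration bound (Hayes' extended Hoeffding inequality) on the empirical dynamics vector, combined with the template gap to resolve identification. In fact your write-up is slightly more complete than the paper's, which stops at the concentration bound and leaves the permutation-preserves-$\ell_2$ observation and the $\tau/2$ triangle-inequality step implicit.
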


\begin{proof}

For an state-action pair $(s,a)$, define the observation vector of the $i^\tha$ sample as $Z_i = [\mathbb{I}_{s^\prime=s_1},\mathbb{I}_{s^\prime=s_2}, \cdots, \mathbb{I}_{s^\prime=s_S}, r]$. 

Define $X_n = \sum_{i=1}^n Z_i - n \theta(s,a)$, and set $X_0=0$.

We first prove the sequence $\{X_n\}$ is a vector-valued martingale.

\begin{align*}
\mathrm{E}\left(X_{n} | X_{0}, X_{1}, \cdots, X_{n-1}\right) 
&= E\left[\sum_{i=1}^n Z_i - n \theta(s,a)| X_{0}, X_{1}, \ldots, X_{n-1}\right] \\
&= E\left[\sum_{i=1}^{n-1} Z_i - (n-1) \theta(s,a) + Z_n - \theta(s,a)| X_{0}, X_{1}, \ldots, X_{n-1}\right]\\
&= X_{n-1} + E[Z_n-\theta(s,a)] \\
&= X_{n-1}
\end{align*}

Obviously, $E[\|X_n\|] < \infty$ for all $n$. Thus $\{X_n\}$ is a (strong) martingale.

By application of the extended Hoeffding's inequality~\cite{hayes2005large}, we get 
\begin{align*}
	Pr(\|\frac{1}{n}\sum_{i=1}^n Z_i - \theta(s,a)\| \geq \epsilon) \leq 2 \mathrm{e}^{2-\frac{n\epsilon^2}{2}}
\end{align*}

Set the failure probability as $\delta$, we obtain $n \geq \mathcal{O}(\frac{1}{\epsilon^2}\ln \frac{1}{\delta})$.

\end{proof}



Lemma~\ref{lem:concen} assumes perfect permutation, and Lemma~\ref{lem:permut} addresses the problem of how to avoid notable corruptions of permutations. For ease of illustrating, we define the concept \textit{almost the same} and \textit{almost correct} in Definition~\ref{def:almost}.

\begin{definition}[Almost the same and almost correct]
\label{def:almost}
If for two probability vectors $p$ and $p^\prime$, their ranking permutations $\permut$ and $\permut^\prime$ are the same except for elements whose difference is smaller than $\mathcal{O}(\frac{\epsilon (1-\gamma)}{\sqrt{S}V_{\max}})$, then we call $\permut$ and $\permut^\prime$ \textit{almost the same}. If $p^\prime$ is the approximation of the ground truth $p$, then we call $\permut^\prime$ \textit{almost correct}.
\end{definition}

\begin{lemma}
\label{lem:permut}
With $\mathcal{\tilde{O}}(\frac{1}{\nu^2}\ln \frac{S}{\delta})$ samples of a s-a pair, its transition permutation will be almost correct, with probability $1-\delta$. 
\end{lemma}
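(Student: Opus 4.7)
\medskip

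\noindent\textbf{Proof Proposal.} The plan is to reduce the ranking problem to a simultaneous concentration bound on the coordinates of $\hat{p}(\cdot\mid s,a)$ and then invoke the ranking-gap hypothesis of Definition~8 to conclude that the empirical ranking $\hat{\sigma}$ agrees with the true ranking $\sigma$ at every position whose adjacent probabilities differ by at least $\nu$.

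First, I would apply Hoeffding's inequality coordinate-wise. For each next-state index $i\in\{1,\dots,S\}$, the empirical probability $\hat{p}_i$ is an average of $n$ i.i.d.\ Bernoulli$(p_i)$ observations, so $\Pr(|\hat{p}_i-p_i|\geq \nu/4)\leq 2e^{-n\nu^2/8}$. A union bound over the $S$ coordinates shows that with $n=\mathcal{O}(\tfrac{1}{\nu^2}\ln \tfrac{S}{\delta})$ samples we obtain, with probability at least $1-\delta$, the uniform bound
\begin{equation*}
\max_{1\leq i\leq S}|\hat{p}_i-p_i|\;\leq\;\tfrac{\nu}{4}.
\end{equation*}

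Next, I would pass from coordinate accuracy to correctness of the permutation. Fix any two states $s_i,s_j$ with $p_i>p_j$. By Definition~8, either (i) $p_i-p_j\geq \nu$, in which case the above uniform bound yields $\hat{p}_i-\hat{p}_j\geq \nu - 2\cdot\tfrac{\nu}{4} = \tfrac{\nu}{2} > 0$, so the strict inequality is preserved in the empirical ordering; or (ii) $p_i-p_j\leq \mathcal{O}\!\bigl(\tfrac{\epsilon(1-\gamma)}{\sqrt{S}V_{\max}}\bigr)$, in which case the two entries belong to the same ``ignorable cluster'' in the sense of Definition~9, and a swap between them is permitted by the definition of almost correct. (Strictly, the ranking-gap assumption is stated on adjacent entries, but since the same-cluster relation is transitive and the large-gap relation separates clusters, the coordinate-wise bound above preserves the between-cluster order, which is exactly the content of ``almost the same''.)

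Finally, I would combine these two cases to conclude that the empirical ranking permutation $\hat{\sigma}$ is almost correct with probability at least $1-\delta$, using the claimed sample size $n=\tilde{\mathcal{O}}(\tfrac{1}{\nu^2}\ln\tfrac{S}{\delta})$; the tilde absorbs the logarithmic $\ln S$ factor produced by the union bound. The only genuinely delicate point is the second paragraph: one has to check that preserving order for every pair whose true gap exceeds $\nu$ is sufficient to guarantee almost-correctness of the \emph{entire} permutation, which requires noting that the small-gap relation partitions the coordinates into clusters within which arbitrary reshuffling is tolerated and between which the ordering is pinned down by pairs with gap $\geq \nu$. Once this structural observation is in place, the rest is a routine Hoeffding-plus-union-bound argument.
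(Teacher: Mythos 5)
Your proposal follows essentially the same route as the paper's proof: a coordinate-wise Hoeffding bound at accuracy $\Theta(\nu)$, a union bound over the $S$ next-state coordinates producing the $\ln\frac{S}{\delta}$ factor, and the ranking-gap dichotomy to convert uniform coordinate accuracy into an almost-correct permutation. If anything you are more careful than the paper, which asks only for $\nu/2$-closeness (yielding a non-strict empirical ordering of adjacent large-gap pairs) and does not spell out why preserving the order of large-gap pairs while tolerating swaps among small-gap ones yields an almost-correct permutation overall; your $\nu/4$ margin and your explicit remark about the cluster structure address both points.
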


\begin{proof}
	Suppose there is a transition probability vector $p$ with length $l$, as well as transition-difference gap $\nu$. We estimate $p$ by randomly sampling indices $1, \cdots, i, \cdots, l$ according to the probability distribution $p$. Let $\hat{p} = [\frac{n(1)}{n}, \cdots, \frac{n(l)}{n}]$.

	For any two adjacent elements $p_i$ and $p_j$ (adjacent means there is no $p_k$ whose value is between $p_i$ and $p_j$), we want our estimations $\frac{n(i)}{n}$ and $\frac{n(j)}{n}$ to satisfy $\frac{n(i)}{n} > \frac{n(j)}{n}$. 
	It is sufficient if we guarantee $\frac{n(i)}{n}$ and $\frac{n(j)}{n}$ are respectively $\nu/2$-close to $p_i$ and $p_j$. By Hoeffding's inequality,
 	$$
 		P( | \frac{n(i)}{n} - p_i | > \frac{\nu}{2}) \leq 2 \exp(-2n(\nu/2)^2)
 	$$
 	Therefore, $n \geq \mathcal{O}(\frac{1}{\nu^2} \ln \frac{1}{\delta})$ is sufficient. By union bound, we have $n \geq \mathcal{O}(\frac{1}{\nu^2} \ln \frac{S}{\delta})$.
\end{proof}

Lemma~\ref{lem:concen} and Lemma~\ref{lem:permut} imply that the small threshold should satisfy $\thressm=\mathcal{\tilde{O}}(\frac{1}{\min\{\tau^2,\nu^2\}})$.

Then, Lemma~\ref{lem:horizon} claims if horizon is set to be large enough, all s-a pairs will have sufficient samples to be correctly grouped.

\begin{lemma}
\label{lem:horizon}
If $H=\mathcal{\tilde{O}}(\frac{DSA}{\mingap^2})$, all state-action pairs in the task will have at least $\mathcal{\tilde{O}}(\frac{1}{\mingap^2}\ln \frac{TSA}{\delta})$ samples with probability $1-\delta$.
\end{lemma}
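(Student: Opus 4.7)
The plan is to combine the diameter assumption with the optimistic exploration property of the RMax base learner that \ourmod uses. The intuition is: on average, the agent can reach any chosen s-a pair within $O(D)$ steps, so collecting $m := \mathcal{\tilde{O}}(\ln(TSA/\delta)/\mingap^2)$ visits at each of the $SA$ pairs should cost only $\mathcal{\tilde{O}}(DSAm) = \mathcal{\tilde{O}}(DSA/\mingap^2)$ steps in expectation, and a standard concentration/boosting argument converts this into a high-probability bound.

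First I would formalize the notion of ``targeting'' an under-sampled pair. Whenever some $(s,a)$ has been visited fewer than $\threslg$ times in the current task, RMax treats it as having value $V_{\max}$ in its induced optimistic MDP; hence the policy returned by planning either already plays under-sampled actions from the current state or steers the agent toward a state from which one is played. Second, I would invoke the diameter assumption to argue that, from any state, a policy exists reaching a chosen under-sampled pair in at most $D$ expected steps; together with the optimism of RMax, this yields an ``escape-from-known-set'' statement: over any window of $O(D)$ steps during which some $(s,a)$ still has fewer than $m$ visits, with at least constant probability the agent registers a visit to an under-sampled pair. Third, I would repeat this escape step in a coupon-collector-style accounting: iterating the escape $SA \cdot m$ times gives an expected horizon of at most $c \cdot DSAm$ for an absolute constant $c$. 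Finally, I would split the horizon into $\Theta(\log(TSA/\delta))$ independent windows, apply a Chernoff-style boosting to upgrade the constant-probability escape into a $1-\delta/(TSA)$ guarantee, and union-bound over the $SA$ pairs and $T$ tasks; the $\log$ factor is absorbed into the $\mathcal{\tilde{O}}(\cdot)$ notation, giving the claimed horizon.

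The main obstacle is the escape-from-known-set step, i.e., rigorously showing that, during any $O(D)$-step window in which at least one pair has fewer than $m$ visits, the realized trajectory hits an under-sampled pair with at least constant probability. This is precisely the ingredient that drives the classical RMax analysis of Kakade, and the argument must be adapted to the \patname-augmented counts used by \ourmod: the augmentation can only \emph{increase} the effective visit count $n(s,a)$ (because visits from template-sibling pairs are shared in), so the set of pairs flagged as ``known'' under \ourmod is a superset of that under vanilla RMax, and the escape probability is bounded below by the pure-RMax quantity. Once this observation is in place, the coupon-collector bookkeeping and union bound are routine and the stated asymptotics follow.
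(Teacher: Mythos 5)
Your proposal is correct and follows essentially the same route as the paper: the paper's own ``proof'' of this lemma is a one-line deferral to Lemma~2.1 of \citet{brunskill2013sample}, which is precisely the argument you reconstruct --- diameter-based reachability of any under-sampled pair in $O(D)$ expected steps, combined with RMax's optimism toward unknown pairs, a Markov/Chernoff boosting of the constant-probability escape, coupon-collector accounting over the $SA\cdot m$ required visits, and a union bound over pairs and tasks. Your added observation that template-based augmentation cannot prematurely starve a pair of its first $\thressm$ raw visits (since augmentation only fires once the raw count reaches $\thressm$) is a detail the paper leaves implicit, and it is handled correctly.
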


The proof of Lemma~\ref{lem:horizon} is similar to Lemma 2.1 in paper ~\cite{brunskill2013sample}. 

Lemma~\ref{lem:simu} is a variant of the ``simulation lemma''~\cite{kearns2002near,brafman2003rmax} with \patname estimation.

\begin{lemma}
\label{lem:simu}
For any two MDPs $M$ and $\tilde{M}$ with the same $\states, \actions, \mu, \gamma$, if for any s-a pair $(s,a)$, the ranking permutations of $p(s,a)$ and $\tilde{p}(s,a)$ are almost the same, and $\pattern = (desc(p(s,a)), r(s,a))$ as well as $\tilde{\pattern}=(desc(\tilde{p}(s,a)),\tilde{r}(s,a))$ satisfy $\|\patp - \tilde{\pattern}^{(p)}\| \leq \mathcal{O}(\frac{\epsilon (1-\gamma)}{V_{\max}})$ and $| \patr - \tilde{g}^{(r)} | \leq \mathcal{O}(\frac{\epsilon (1-\gamma)}{V_{\max}})$, then for any policy $\pi$, $| V^\pi_M - V^\pi_{\tilde{M}} | \leq \epsilon$.
\end{lemma}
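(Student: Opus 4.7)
The plan is to reduce this claim to the classical simulation lemma by first translating the \patname-level closeness hypotheses into a conventional $\ell_1$ bound on $p(\cdot|s,a) - \tilde{p}(\cdot|s,a)$ together with a pointwise bound on $|r(s,a) - \tilde{r}(s,a)|$, both stated in the original (unpermuted) index ordering. Once those are in hand, the result follows from the standard one-step Bellman telescoping argument used in the proof of the simulation lemma of \citet{kearns2002near}.

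The first step is to translate the sorted-vector closeness into an $\ell_1$ bound in the original coordinates. Let $\permut$ and $\tilde{\permut}$ denote the ranking permutations of $p(\cdot|s,a)$ and $\tilde{p}(\cdot|s,a)$, respectively, so that $\patp = \permut(p(\cdot|s,a))$ and $\tilde{\patp} = \tilde{\permut}(\tilde{p}(\cdot|s,a))$. The triangle inequality gives
\begin{equation*}
\|p - \tilde{p}\|_1 \leq \|\permut^{-1}(\patp) - \permut^{-1}(\tilde{\patp})\|_1 + \|\permut^{-1}(\tilde{\patp}) - \tilde{\permut}^{-1}(\tilde{\patp})\|_1.
\end{equation*}
Because $\permut^{-1}$ is merely a relabeling of coordinates, the first term equals $\|\patp - \tilde{\patp}\|_1$, which is controlled directly by the hypothesis (converting $\ell_2$ to $\ell_1$ via Cauchy--Schwarz with a $\sqrt{S}$ factor if needed). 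The second term captures the cost of applying the ``wrong'' permutation to $\tilde{\patp}$: by the definition of ``almost the same,'' $\permut$ and $\tilde{\permut}$ can only disagree on coordinates $i,j$ satisfying $|p_i - p_j| \leq \mathcal{O}(\frac{\epsilon(1-\gamma)}{\sqrt{S}V_{\max}})$, and each such transposition contributes at most $2|\tilde{p}_i - \tilde{p}_j| \leq 2|p_i - p_j| + 2\|\patp - \tilde{\patp}\|_\infty$ to the $\ell_1$ norm. Summing over at most $S$ disagreements yields $\|p - \tilde{p}\|_1 = \mathcal{O}(\frac{\epsilon(1-\gamma)}{V_{\max}})$.

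The second step applies the standard value-function comparison. Combining the reward hypothesis with the bound above gives $\max_{s,a}|r - \tilde{r}| = \mathcal{O}(\frac{\epsilon(1-\gamma)}{V_{\max}})$ and $\max_{s,a}\|p(\cdot|s,a) - \tilde{p}(\cdot|s,a)\|_1 = \mathcal{O}(\frac{\epsilon(1-\gamma)}{V_{\max}})$. Writing Bellman equations for $V^\pi_M$ and $V^\pi_{\tilde{M}}$, subtracting, and taking sup norms gives
\begin{equation*}
\|V^\pi_M - V^\pi_{\tilde{M}}\|_\infty \leq \frac{1}{1-\gamma}\max_{s,a}|r - \tilde{r}| + \frac{\gamma V_{\max}}{1-\gamma}\max_{s,a}\|p(\cdot|s,a) - \tilde{p}(\cdot|s,a)\|_1,
\end{equation*}
so that both terms become $\mathcal{O}(\epsilon)$ once one substitutes $V_{\max} = \frac{1}{1-\gamma}$ and absorbs the constants hidden inside the hypotheses.

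The main obstacle will be the bookkeeping in the first step: decomposing the mismatch between $\permut$ and $\tilde{\permut}$ into transpositions acting on nearly-tied coordinates, and controlling the resulting cumulative $\ell_1$ cost. The $\frac{1}{\sqrt{S}}$ factor in the definition of the ranking gap is engineered precisely so that $S$ coordinate-wise errors of size $\mathcal{O}(\frac{\epsilon(1-\gamma)}{\sqrt{S}V_{\max}})$ aggregate to the $\mathcal{O}(\frac{\epsilon(1-\gamma)}{V_{\max}})$ tolerance required in the second step. The delicate aspect is verifying that this accounting goes through even when many coordinates are swapped simultaneously and the estimation errors for $\patp - \tilde{\patp}$ and the permutation mismatch interact, rather than just for an isolated swap.
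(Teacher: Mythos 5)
Your proposal follows essentially the same route as the paper's own proof: translate the sorted-space hypotheses into original-coordinate bounds on $\|p(\cdot|s,a)-\tilde{p}(\cdot|s,a)\|$ and $|r-\tilde{r}|$, using the fact that the ranking permutations can only disagree on nearly-tied coordinates, and then invoke the classical simulation lemma. Your triangle-inequality decomposition through the intermediate vector $\permut^{-1}(\tilde{\pattern}^{(p)})$ is a somewhat cleaner way to organize the same bookkeeping that the paper carries out coordinate-by-coordinate (the paper first treats $\permut=\tilde{\permut}$ exactly, then adds the cost of a single transposition of nearly-tied indices). One caveat, which you yourself flag as the delicate point: the aggregation arithmetic in your first step does not close as stated. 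Summing $S$ disagreements each contributing $\mathcal{O}\bigl(\frac{\epsilon(1-\gamma)}{\sqrt{S}V_{\max}}\bigr)$ gives an $\ell_1$ cost of $\mathcal{O}\bigl(\frac{\sqrt{S}\,\epsilon(1-\gamma)}{V_{\max}}\bigr)$, not $\mathcal{O}\bigl(\frac{\epsilon(1-\gamma)}{V_{\max}}\bigr)$; the $1/\sqrt{S}$ factor in the ranking-gap definition cancels the dimension only under an $\ell_2$ aggregation (and the additional $S\cdot\|\patp-\tilde{\pattern}^{(p)}\|_\infty$ term you introduce is potentially worse still). This is not a gap relative to the paper --- the paper's proof rigorously handles only the single-transposition case and then asserts the many-swap case ``by adjusting the constant factor,'' which suffers from exactly the same $\sqrt{S}$ discrepancy --- but neither argument fully justifies the simultaneous-swap case, and a complete proof would need to either work in $\ell_2$ throughout (pairing $\|p-\tilde{p}\|_2$ with $\|V\|_2$ carefully) or tighten the ranking-gap threshold to $\mathcal{O}\bigl(\frac{\epsilon(1-\gamma)}{S\,V_{\max}}\bigr)$.
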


\begin{proof}
	For an s-a pair $(s,a)$, suppose its ranking permutation in $M$ is $\permut$, and the ranking permutation in $\tilde{M}$ is $\tilde{\permut}$. 

	We first assume $\permut$ and $\tilde{\permut}$ are exactly the same. So we have $\patp=\permut(p(s,a))$ and $\tilde{\pattern}^{(p)}=\permut(\tilde{p}(s,a))$.

	Thus $\|\patp - \tilde{\pattern}^{(p)}\| \leq \mathcal{O}(\frac{\epsilon (1-\gamma)}{V_{\max}})$ implies $\| p(s,a) - \tilde{p}(s,a) \| \leq \mathcal{O}(\frac{\epsilon (1-\gamma)}{V_{\max}})$, because of the property of permutation.

	Similarly, $| \patr - \tilde{g}^{(r)} | \leq \mathcal{O}(\frac{\epsilon (1-\gamma)}{V_{\max}})$ implies $| r(s,a) - \tilde{r}(s,a)| \leq \mathcal{O}(\frac{\epsilon (1-\gamma)}{V_{\max}})$. 

	Then, following the standard proof~\cite{strehl2008analysis,strehl2006incremental}, it is easy to show $| V^\pi_M - V^\pi_{\tilde{M}} | \leq \epsilon$.

	Next, we allow $\permut$ and $\tilde{\permut}$ be almost the same (see Definition~\ref{def:almost}), and show that it only causes up to a constant factor increase in the value difference $| V^\pi_M - V^\pi_{\tilde{M}} |$.

	Without loss of generality, assume $\tilde{\permut}$ only reverses $\permut$ in indices $i$ and $j$, i.e., $\permut(i) = \tilde{\permut}(j)$ and $\permut(j) = \tilde{\permut}(i)$. According to the definition of almost the same, $p_{\permut(i)} - p_{\permut(j)} = p_{\tilde{\permut}(j)} - p_{\tilde{\permut}(i)} \leq \mathcal{O}(\frac{\epsilon (1-\gamma)}{\sqrt{S} V_{\max}})$ 
	Then we have
	\begin{align*}
		\| p(s,a) - \tilde{p}(s,a) \| &= \| \permut^{-1} (\patp) - \tilde{\permut}^{-1} (\tilde{\pattern}^{(p)}) \| \\
		&= | g^{(p)}_1 - \tilde{g}^{(p)}_1 | + \cdots + | g^{(p)}_i - \tilde{g}^{(p)}_j | + | g^{(p)}_j - \tilde{g}^{(p)}_i | + \cdots + | g^{(p)}_S - \tilde{g}^{(p)}_S | \\
		&\leq \|\patp - \tilde{\pattern}^{(p)}\| + | g^{(p)}_i - \tilde{g}^{(p)}_j | + | g^{(p)}_j - \tilde{g}^{(p)}_i | \\
		&\leq \mathcal{O}(\frac{\epsilon (1-\gamma)}{V_{\max}}) + \mathcal{O}(\frac{2 \epsilon (1-\gamma)}{\sqrt{S} V_{\max}}) \\
		&\leq (1 + \frac{2}{\sqrt{S}}) \mathcal{O}( \frac{\epsilon (1-\gamma)}{V_{\max}})
	\end{align*}

	Therefore, if $\tilde{\permut}$ differs with $\permut$ in all indices, as long as they are almost the same, $| V^\pi_M - V^\pi_{\tilde{M}}| \leq 2 \epsilon$. By adjusting the constant factor, $| V^\pi_M - V^\pi_{\tilde{M}}| \leq \epsilon$ also holds.
\end{proof}

According to Lemma~\ref{lem:simu}, if each \patname gets $\mathcal{O}(\frac{\epsilon (1-\gamma)}{V_{\max}})$-accurate estimation, then all the s-a transition dynamics associated with the same \patname will be accurate enough to generate an $\epsilon$-optimal policy. Therefore, the regular known threshold $\threslg$ is still the same as in RMax, i.e., $\threslg = \mathcal{\tilde{O}}(\frac{S V^2_{\max}}{\epsilon^2 (1-\gamma)^2})$. 
Note that the small threshold should not exceed the regular threshold, so $\thressm=\mathcal{\tilde{O}}(\frac{1}{\mingap^2})$, where $\mingap = \max \{ \min (\tau,\nu), \mathcal{O}(\frac{\epsilon (1-\gamma)}{\sqrt{S}V_{\max}})\}$. If $\tau$ or $\nu$ is smaller than $\mathcal{O}(\frac{\epsilon (1-\gamma)}{\sqrt{S}V_{\max}})$, then the small threshold becomes the regular threshold and \ourmod degenerates to RMax.

Next, we show in Lemma~\ref{lem:visits} the total number of visits to unknown state-action pairs during $T$ tasks.

\begin{lemma}
\label{lem:visits}
The total number of visits to unknown s-a pairs during the execution of Algorithm~\ref{alg:main} for $T$ tasks is
\begin{equation}
\label{eq:visits}
	\mathcal{\tilde{O}} \Big(\frac{TSA}{\mingap^2 } + \frac{S G V^2_{\max}}{\epsilon^2 (1-\gamma)^2} \Big)
\end{equation}
\end{lemma}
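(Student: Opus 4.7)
The plan is to decompose the total count of visits to unknown s-a pairs into two disjoint contributions: (i) visits that occur before the small threshold $\thressm$ is reached for a given $(s,a)$ in a given task (i.e., before its template has been identified), and (ii) the additional visits that accumulate after template identification until the aggregated template counter reaches $\threslg$. Each piece will be bounded separately and then summed.

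For the first contribution, observe that in every task and for every s-a pair $(s,a)$, the algorithm declares $(s,a)$ eligible for template identification only once $\lVert \mathbf{n}(s,a,\cdot)\rVert_{\ell_1}=\thressm$. By Lemmas 1 and 2, taking $\thressm=\mathcal{\tilde O}(1/\mingap^2)$ suffices so that (with high probability) the rough estimate is placed into the correct template group. Summing this pre-identification budget over all $SA$ pairs and all $T$ tasks yields $\mathcal{\tilde O}(TSA/\mingap^2)$ visits, which is the first term. I would invoke a union bound across the $TSA$ template-identification events to absorb the failure probability into the $\delta$ budget.

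For the second contribution, I would exploit the sharing structure: once $(s,a)$ has been matched to some template $\pattern$, its visit counts are pooled with every other s-a pair sharing $\pattern$ via \textsc{augment} and \textsc{TT-update}. By Lemma 3 (simulation lemma), it suffices that the pooled count for each template reach $\threslg=\mathcal{\tilde O}(SV_{\max}^2/(\epsilon^2(1-\gamma)^2))$ in order to declare every s-a pair in that group known under RMax's induced-MDP construction. Since there are at most $G$ distinct templates across the entire run, the total number of post-identification unknown-pair visits is bounded by $G\cdot \threslg=\mathcal{\tilde O}(SGV_{\max}^2/(\epsilon^2(1-\gamma)^2))$, giving the second term. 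Adding the two bounds yields the stated expression.

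The main subtlety, and the place I would spend the most care, is showing that the two contributions really are additive and do not double count. Concretely, a single physical visit to $(s,a)$ simultaneously increments $\mathbf{n}(s,a,\cdot)$ (which matters for the first term until $\thressm$ is hit) and, once identified, is credited to $\patvis_\pattern$ (which matters for the second term). I would formalize this by partitioning the timeline of visits to each $(s,a)$ into a ``pre-identification'' phase (contributing to the first bound) and a ``post-identification'' phase (whose total across s-a pairs sharing a template is charged to that template and bounded by $\threslg$). A secondary technicality is that a template may receive augmentation from multiple concurrent tasks; this is harmless because augmentation can only accelerate reaching $\threslg$ and the bound $G\cdot\threslg$ is an upper bound regardless of the order in which templates accumulate visits.
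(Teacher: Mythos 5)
Your proposal is correct and follows essentially the same route as the paper's proof: charge $\thressm=\mathcal{\tilde{O}}(1/\mingap^2)$ pre-identification visits to each of the $TSA$ task--pair combinations for the first term, and charge $\threslg=\mathcal{\tilde{O}}(SV^2_{\max}/(\epsilon^2(1-\gamma)^2))$ pooled visits to each of the $G$ templates for the second term. Your explicit partition of each pair's visit timeline into pre- and post-identification phases to rule out double counting is a point the paper states only implicitly, but it does not change the argument.
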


\begin{proof}
	For every task, Algorithm~\ref{alg:main} first uses known threshold $\thressm=\mathcal{\tilde{O}}(\frac{1}{\mingap^2})$ for all s-a pairs. And the first $\thressm$ visits to an s-a pair are all visits to unknowns. So all the $SA$ s-a pairs over $T$ tasks take $\mathcal{O}(\frac{TSA}{\mingap^2 })$ steps of visiting unknowns in total.

	Once an s-a pair is roughly known (having visits more than $\thressm$), the \patname is identified, and the known threshold is changed to $\threslg$ for the s-a pair. If the corresponding \patname is fully known (having visits more than $\threslg$), then the s-a pair immediately becomes fully known by incorporating all visit counts of the \patname. If the corresponding \patname is not fully known yet, visits to the s-a pair are still counted as visits to unknown, until the \patname is known. Therefore, for every possible \patname, there are $\threslg$ unknown visits. And $G$ \patnames result in $G\threslg$ unknown visits, which is the second term in Equation~\ref{eq:visits}
\end{proof}

Now we can proceed to prove the main theorem.

\begin{proof}
(of Theorem~\ref{thm:main})
	We apply the PAC-MDP theorem proposed by~\cite{strehl2006incremental} to get the sample complexity of \ourmod. Proposition 1 in~\cite{strehl2006incremental} claims that any greedy learning algorithm with known set $K$ and known state-action MDP $M_K$ satisfies 3 conditions (optimism, accuracy and learning complexity) will follow a $4\epsilon$-optimal policy on all but $\mathcal{O}\left(\frac{\zeta(\epsilon, \delta)}{\epsilon(1-\gamma)} \ln \frac{1}{\delta} \ln \frac{1}{\epsilon(1-\gamma)}\right)$ timesteps with probability $1-2\delta$, where $\zeta(\epsilon, \delta)$ is the total number of updates of action-value estimates plus the number of visits to unknowns. This proposition, while it focuses on single-task learners, can be easily adapted to work for multi-task learners, as shown in ~\cite{brunskill2013sample}.

	Now we verify that the required 3 conditions all hold for our algorithm.

	\noindent(1) $Q_t(s,a) \geq Q^*(s,a) - \epsilon$ for any timestep $t$ (optimism).

	This condition naturally holds as the single-task learner RMax chooses actions by optimistic value functions. \ourmod does not change the way of choosing actions. It is similar for using $E^3$ or MBIE as the single-task learner.

	\noindent(2) $V_{t}(s)-V_{M_{K_{t}}}^{\pi_{t}}(s) \leq \epsilon$ for any timestep $t$ (accuracy).

	An s-a pair is in $M_K$ if it is fully known, i.e., $n(s,a) \geq \threslg$. A part of $n(s,a)$ may come from the visits to other s-a pairs with the same \patname. According to Lemma~\ref{lem:simu}, condition (2) holds if the estimation of the \patname is within $\mathcal{O}(\frac{\epsilon (1-\gamma)}{V_{\max}})$ accuracy. By Hoeffding's inequality, to achieve this accuracy, $\threslg=\tilde{\mathcal{O}}(\frac{S V^2_{\max}}{\epsilon^2 (1-\gamma)^2})$ samples are required for a \patname. 

	\noindent(3) The total number of updates of action-value estimates plus the number of visits to unknowns is bounded by $\zeta(\epsilon, \delta)$ (learning complexity).

	Lemma~\ref{lem:visits} already gives the number of visits to unknown s-a pairs, and the updates of action-value estimates will happen no more than $TSA$ times for $T$ tasks. Hence, $\zeta(\epsilon, \delta)=\mathcal{\tilde{O}} \Big(\frac{TSA}{\mingap^2 } + \frac{S G V^2_{\max}}{\epsilon^2 (1-\gamma)^2} \Big)$.

	Therefore, the sample complexity of \ourmod is 
	$$
		\mathcal{\tilde{O}} \Big( \big(\frac{TSA}{\mingap^2 } + \frac{S G V^2_{\max}}{\epsilon^2 (1-\gamma)^2}\big) \left(\frac{V_{\max}}{\epsilon(1-\gamma)} \ln \frac{1}{\delta} \ln \frac{1}{\epsilon(1-\gamma)}\right) \Big)
	$$
\end{proof}

\subsection{Proof of Theorem~\ref{thm:main_2}}

\begin{proof}
(of Theorem~\ref{thm:main_2})

The proof of Theorem~\ref{thm:main_2} is similar to the proof of Theorem~\ref{thm:main}, because \ourmodtwo is adapted from \ourmod. The only difference lies in the number of visits to unknown s-a pairs.

In the first phase, \ourmodtwo is the same with \ourmod, so the number of visits to identify \patnames is $\mathcal{\tilde{O}}(\frac{T_1 SA}{\mingap^2})$. 

In the second phase, \ourmodtwo avoids visiting all s-a pairs for at least $\thressm$ times under the help of finite models. As \cite{brunskill2013sample} shows, we need at most $C^2$ informative s-a pairs to fully identify a model, where an s-a pair is ``informative'' if at least two MDP models have sufficient disagreement in its dynamics. Similarly with Lemma~\ref{lem:horizon}, $\mathcal{\tilde{O}}(\frac{DC^2}{\mingap^2})$ samples are enough to let all these $C^2$ informative s-a pairs roughly known. Then the correct model for the current task would be identified. Thus, for every task in the second phase, $\mathcal{\tilde{O}}(\frac{DC^2}{\mingap^2})$ visits to unknowns are needed.

Finally, for each \patname, its visits are shared among s-a pairs and tasks, no matter which phase they are in. Hence there are still $\mathcal{\tilde{O}}(\frac{SGV^3_{\max}}{\mingap^2 \epsilon (1-\gamma)})$ visits to unknowns.

Adding the above three parts of visits to unknowns, and following the proof of Theorem~\ref{thm:main}, we obtain the sample complexity of Theorem~\ref{thm:main_2}.

\end{proof}


\section{Additional Experiment Settings and Results}
\label{app:exp_addition}

\subsection{Setups}
\label{app:exp_setup}

\noindent\textbf{Computing Infrastructure}
All experiments are conducted on a PC equipped with a 3.6 GHz INTEL CPU of 6 cores. 

\noindent\textbf{Hyper-parameter Settings}
In Maze, an agent navigates with actions ``up'', ``down'', ``left'' and ``right''. The reward of the goal state is set to be 1.0, and the step cost is set as 0.2. 

The base learners in FMRL, our \ourmod and our \ourmodtwo are chosen to be RMax (known threshold being 500) without loss of generality.
The threshold $\thressm$ is set to be 50, the number of episodes 3000, and the number of in-episode steps 30.
$\hat{\tau}$ is set to be 0.15 for online MTRL environments, and 0.24 for Finite-Model environments. Model gap $\Gamma$ for FMRL and \ourmodtwo is set to be 0.6.
In Finite-Model MTRL experiments, $T_1$ is set to be 15 for both \ourmodtwo and FMRL.

The results are averaged over 20 runs. The randomization in the multiple runs comes from different task sequences generated across runs, although the comparison in each run is done on the same task sequence. We also provide the generated task sequences in our code to ensure reproducibility.



\subsection{Comparison of Per-task Reward with Confidence Intervals}
\label{app:exp_interval}


\begin{figure}[!htbp]
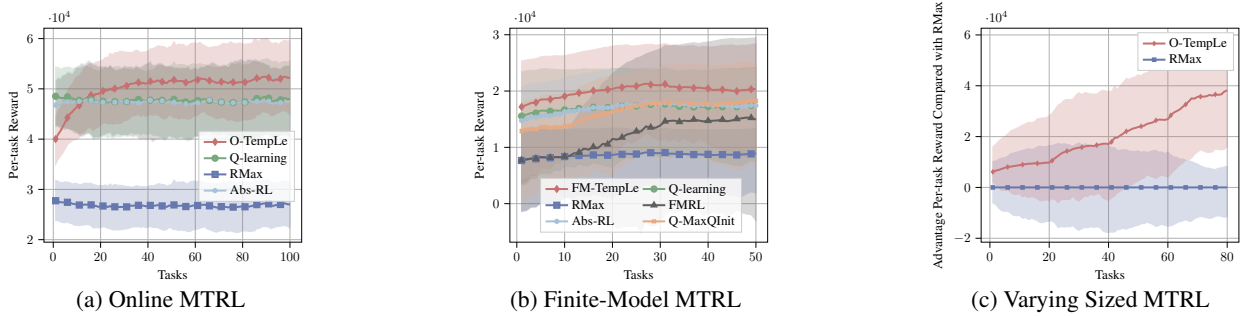

\centering
	\begin{subfigure}[t]{0.3\columnwidth}
		\centering
		\input{\fighome/maze/lifelong-maze_h-4_w-4_mt_runs20_abs_shade.tex}
	\vspace{-0.4em}
	\caption{Online MTRL}
	\label{sfig:maze_interval}
	\end{subfigure}
	\hfill
	\begin{subfigure}[t]{0.3\columnwidth}
		\centering
		\input{\fighome/group/mt_runs30_abs_maxqint_shade.tex}
		\vspace{-0.4em}
		\caption{Finite-Model MTRL}
		\label{sfig:group_interval}
	\end{subfigure}
	\hfill
	\begin{subfigure}[t]{0.3\columnwidth}
		\centering
		\input{\fighome/var/lifelong-varsize_rmax_runs10.tex}
		\vspace{-0.4em}
		\caption{Varying Sized MTRL}
		\label{sfig:var_interval}
	\end{subfigure}
\vspace{-0.5em}
\caption{Performance of \ourmod and \ourmodtwo compared against baselines in (a) Online MTRL, (b) Finite-Model MTRL and (c) varying sized MTRL. }
\label{fig:interval}
\end{figure}

\subsection{Additional Results of Baseline Methods}
\label{app:exp_baseline}


\begin{figure}[!htbp]
\centering
	\begin{subfigure}[t]{0.3\columnwidth}
		\centering
		\input{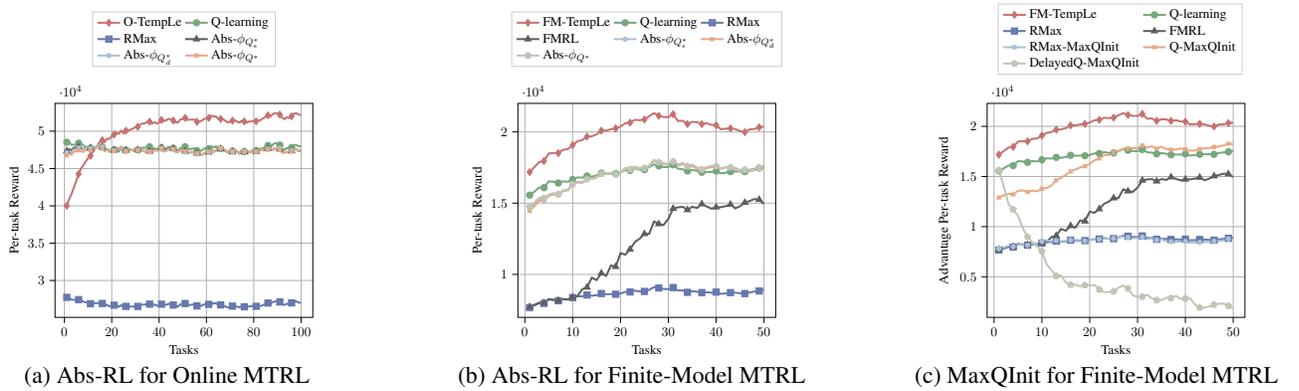}
	\vspace{-0.4em}
	\caption{Abs-RL for Online MTRL}
	\label{sfig:abs_online}
	\end{subfigure}
	\hfill
	\begin{subfigure}[t]{0.3\columnwidth}
		\centering
\begin{tikzpicture}[scale=0.5]

\definecolor{color0}{rgb}{0.462745098039216,0.654901960784314,0.490196078431373}
\definecolor{color1}{rgb}{0.4,0.470588235294118,0.67843137254902}
\definecolor{color2}{rgb}{0.776470588235294,0.443137254901961,0.443137254901961}
\definecolor{color3}{rgb}{0.662745098039216,0.756862745098039,0.835294117647059}
\definecolor{color4}{rgb}{0.901960784313726,0.662745098039216,0.517647058823529}
\definecolor{color5}{rgb}{0.752941176470588,0.772549019607843,0.713725490196078}

\begin{axis}[
legend cell align={left},
legend columns=3,
legend style={fill opacity=0.8, draw opacity=1, text opacity=1, at={(0.5,1.3)}, anchor=center, draw=white!80!black},
tick align=outside,
tick pos=left,
x grid style={white!69.0196078431373!black},
xlabel={Tasks},
xmajorgrids,
xmin=-1.45, xmax=52.45,
xtick style={color=black},
y grid style={white!69.0196078431373!black},
ylabel={ Per-task Reward},
ymajorgrids,
ymin=6984.06581161696, ymax=22003.4846227104,
ytick style={color=black}
]
\addplot [thick, line width=1.5, color2, mark=diamond*, mark size=2, mark repeat=3, mark options={solid}]
table {%
1 17180.6333333333
2 17531.3133333333
3 17878.9153333333
4 17956.7304666667
5 18486.2007533333
6 18524.3173446667
7 18509.9889435333
8 18751.8533825133
9 18796.7947109287
10 19078.3485731691
11 19345.5003825189
12 19429.890344267
13 19670.7213098403
14 19695.2725121896
15 19826.4952609706
16 20090.3024015402
17 20099.2188280529
18 20184.8436119143
19 20250.5292507228
20 20375.5896589839
21 20607.0940264188
22 20649.4012904436
23 20850.0578280659
24 20821.8920452593
25 20871.6828407334
26 21107.2745566601
27 21320.7837676607
28 21132.008724228
29 21044.0011851385
30 21040.5410666247
31 21246.7069599622
32 20768.7829306326
33 20781.931304236
34 20553.7448404791
35 20705.5003564312
36 20689.1403207881
37 20549.3462887093
38 20607.1049931717
39 20495.0644938545
40 20468.6747111357
41 20184.4505733555
42 20228.6055160199
43 20253.3082977513
44 20197.1041346428
45 20062.6103878452
46 19985.1560157273
47 20176.3737474879
48 20204.0263727392
49 20326.4404021319
50 20375.1496952521
};
\addlegendentry{\ourmodtwo}
\addplot [thick, line width=1.5, color0, mark=*, mark size=2, mark repeat=3, mark options={solid}]
table {%
1 15561.3333333333
2 15806.7533333333
3 16099.1646666667
4 16087.5748666667
5 16526.2640466667
6 16506.4443086667
7 16414.1732111333
8 16535.5692233533
9 16498.5656343513
10 16668.4690709162
11 16810.8788304912
12 16789.3709474421
13 16906.2305193646
14 16860.9408007615
15 16944.1867206853
16 17108.4547152834
17 17074.1559104218
18 17084.9536527129
19 17071.4782874416
20 17154.8971253641
21 17308.2740794944
22 17296.5700048783
23 17440.0063377238
24 17339.7490372847
25 17347.2441335563
26 17534.463053534
27 17723.3400815139
28 17615.1094066959
29 17532.4417993596
30 17536.2542860903
31 17693.5421908146
32 17423.2113050665
33 17381.0435078932
34 17253.4924904372
35 17335.0899080601
36 17270.1775839208
37 17164.133158862
38 17196.6898429758
39 17184.0175253449
40 17249.6391061438
41 17088.941862196
42 17135.7976759764
43 17209.6245750455
44 17215.9787842076
45 17222.5342391202
46 17212.1808152081
47 17342.4527336873
48 17404.5741269853
49 17457.1467142867
50 17591.5820428581
};
\addlegendentry{Q-learning}
\addplot [thick, line width=1.5, color1, mark=square*, mark size=2, mark repeat=3, mark options={solid}]
table {%
1 7666.76666666667
2 7815.18
3 7974.232
4 7962.47546666667
5 8237.71125333333
6 8197.06346133333
7 8149.12044853333
8 8257.18840368
9 8232.83622997867
10 8365.87594031413
11 8461.29834628272
12 8458.72517832111
13 8553.48599382234
14 8501.7073944401
15 8538.39665499609
16 8654.66032282982
17 8608.01762388017
18 8612.20919482549
19 8607.56827534294
20 8652.33811447531
21 8772.66096969445
22 8768.564872725
23 8860.06505211917
24 8803.82854690725
25 8803.60569221653
26 8917.57178966154
27 9102.28127736205
28 9045.18648295918
29 8977.97783466326
30 8959.78338453027
31 9072.48171274391
32 8875.10354146952
33 8875.25318732257
34 8740.85453525698
35 8824.00908173128
36 8777.39817355815
37 8720.38168953567
38 8744.04685391543
39 8712.29550185722
40 8757.07261833817
41 8637.69535650435
42 8658.72915418725
43 8716.76290543519
44 8713.71328155834
45 8682.66528673584
46 8651.99875806225
47 8730.74554892269
48 8791.80099403042
49 8839.10756129405
50 8926.15680516464
};
\addlegendentry{RMax}
\addplot [thick, line width=1.5, white!36.8627450980392!black, mark=triangle*, mark size=2, mark repeat=3, mark options={solid}]
table {%
1 7720.03333333333
2 7867.98
3 8018.45866666667
4 7996.44613333333
5 8268.04485333333
6 8220.08370133333
7 8167.16199786667
8 8270.99246474667
9 8252.68988493867
10 8386.75756311147
11 8481.25514013365
12 8962.16295945362
13 9091.32999684159
14 9771.2836638241
15 9596.48529744169
16 10081.5301010309
17 9896.05375759443
18 10625.0017151683
19 10538.3582103182
20 11493.269055953
21 11385.1421503577
22 11762.1812686553
23 12271.7598084564
24 12507.5738276108
25 12846.3164448497
26 12846.6748003647
27 13712.5039869949
28 13539.1869216288
29 13465.2248961326
30 13892.519073186
31 14616.473832534
32 14706.3964492806
33 14740.7234710192
34 14528.097790584
35 14744.4080115256
36 14659.7705437064
37 14939.4534893357
38 14772.7081404021
39 14611.9606596953
40 14716.4545937257
41 14719.1758010198
42 14789.8548875845
43 14909.4293988261
44 14631.5764589435
45 14787.9854797158
46 15045.7102650775
47 15148.4125719031
48 15289.2579813795
49 15246.0055165749
50 14942.5849649174
};
\addlegendentry{FMRL}
\addplot [thick, line width=1.5, color3, mark=asterisk, mark size=2, mark repeat=3, mark options={solid}]
table {%
1 14621.4666666667
2 14947.46
3 15312.494
4 15235.2812666667
5 15523.3464733333
6 15610.7951593333
7 15578.4389767333
8 15855.76507906
9 15920.1352378207
10 16354.1617140386
11 16474.1655426347
12 16461.8656550379
13 16724.4290895341
14 16697.2261805807
15 16793.3602291893
16 17067.9375396037
17 17077.3637856433
18 17072.5107404123
19 17025.8296663711
20 17120.436699734
21 17370.0663630939
22 17353.2430601179
23 17479.5020874394
24 17387.5252120288
25 17407.4860241593
26 17604.7907550767
27 17972.7283462357
28 17848.2121782788
29 17779.8576271176
30 17721.0918644058
31 17878.4560112986
32 17654.593743502
33 17742.1143691518
34 17573.2029322366
35 17657.2993056797
36 17454.2193751117
37 17309.2641042672
38 17524.2476938405
39 17575.3195911231
40 17603.5742986774
41 17425.5035354764
42 17513.4965152621
43 17539.4835304025
44 17470.8218440289
45 17328.349659626
46 17403.9013603301
47 17319.6845576304
48 17390.8994352007
49 17519.7961583473
50 17472.8932091793
};
\addlegendentry{Abs-$\phi_{Q_\epsilon^*}$}
\addplot [thick, line width=1.5, color4, mark=x, mark size=2, mark repeat=3, mark options={solid}]
table {%
1 14463.1666666667
2 14793.62
3 15183.1846666667
4 15184.3695333333
5 15509.5492466667
6 15631.5009886667
7 15572.2942231333
8 15864.9714674867
9 15940.5676540713
10 16382.2942219975
11 16485.2781331311
12 16452.3536531513
13 16720.3449545029
14 16725.0471257192
15 16840.2557464807
16 17075.7901718326
17 17087.0944879827
18 17056.2483725177
19 17007.0168685993
20 17169.0251817394
21 17459.1759968988
22 17421.8117305422
23 17569.4972241547
24 17460.8041684059
25 17457.1970848986
26 17630.4240430754
27 18019.4849721012
28 17835.2364748911
29 17754.8861607353
30 17691.8542113284
31 17864.7087901956
32 17632.527911176
33 17770.9484533918
34 17507.6569413859
35 17597.1745805807
36 17421.5371225226
37 17312.690076937
38 17457.93773591
39 17522.2572956523
40 17579.1548994204
41 17414.106076145
42 17500.5954685305
43 17485.9725883441
44 17380.3453295097
45 17222.9007965588
46 17310.6673835696
47 17232.2906452126
48 17309.108247358
49 17432.0240892889
50 17444.0583470267
};
\addlegendentry{Abs-$\phi_{Q_d^*}$}
\addplot [thick, line width=1.5, color5, mark=pentagon*, mark size=2, mark repeat=3, mark options={solid}]
table {%
1 14760.2666666667
2 15035.35
3 15386.235
4 15343.8315
5 15602.7316833333
6 15706.1051816667
7 15635.2379968333
8 15896.3975304833
9 15897.947777435
10 16317.5963330248
11 16452.723366389
12 16433.3776964168
13 16678.2532601084
14 16713.1979340976
15 16827.6914740212
16 17063.492326619
17 17079.0464272905
18 17087.7117845614
19 17011.1606061053
20 17137.4245454948
21 17409.9154242786
22 17365.6272151841
23 17515.1144936657
24 17411.2630442991
25 17462.6834065359
26 17626.2150658823
27 18021.5635592941
28 17876.470536698
29 17810.4134830282
30 17748.978801392
31 17926.9575879195
32 17702.2451624609
33 17829.0639795481
34 17599.0275815933
35 17713.574823434
36 17513.6873410906
37 17384.4952736482
38 17529.8157462834
39 17618.244171655
40 17601.1864211562
41 17419.8377790406
42 17531.3973344698
43 17511.3142676895
44 17395.8795075872
45 17263.1582234952
46 17330.0690678123
47 17249.9788276978
48 17337.5676115947
49 17465.4441837685
50 17415.9464320583
};
\addlegendentry{Abs-$\phi_{Q^*}$}
\end{axis}

\end{tikzpicture}
		\vspace{-0.4em}
		\caption{{Abs-RL for Finite-Model MTRL}}
		\label{sfig:abs_fm}
	\end{subfigure}
	\hfill
	\begin{subfigure}[t]{0.3\columnwidth}
		\centering
\begin{tikzpicture}[scale=0.5]

\definecolor{color0}{rgb}{0.462745098039216,0.654901960784314,0.490196078431373}
\definecolor{color1}{rgb}{0.4,0.470588235294118,0.67843137254902}
\definecolor{color2}{rgb}{0.776470588235294,0.443137254901961,0.443137254901961}
\definecolor{color3}{rgb}{0.662745098039216,0.756862745098039,0.835294117647059}
\definecolor{color4}{rgb}{0.901960784313726,0.662745098039216,0.517647058823529}
\definecolor{color5}{rgb}{0.752941176470588,0.772549019607843,0.713725490196078}

\begin{axis}[
legend cell align={left},
legend columns=2,
legend style={fill opacity=0.8, draw opacity=1, text opacity=1, at={(0.99,1.3)}, anchor=east, draw=white!80!black},
tick align=outside,
tick pos=left,
x grid style={white!69.0196078431373!black},
xlabel={Tasks},
xmajorgrids,
xmin=-1.45, xmax=52.45,
xtick style={color=black},
y grid style={white!69.0196078431373!black},
ylabel={Advantage Per-task Reward},
ymajorgrids,
ymin=957.257769704119, ymax=22290.4754818491,
ytick style={color=black}
]
\addplot [thick, line width=1.5, color2, mark=diamond*, mark size=2, mark repeat=3, mark options={solid}]
table {%
1 17180.6333333333
2 17531.3133333333
3 17878.9153333333
4 17956.7304666667
5 18486.2007533333
6 18524.3173446667
7 18509.9889435333
8 18751.8533825133
9 18796.7947109287
10 19078.3485731691
11 19345.5003825189
12 19429.890344267
13 19670.7213098403
14 19695.2725121896
15 19826.4952609706
16 20090.3024015402
17 20099.2188280529
18 20184.8436119143
19 20250.5292507228
20 20375.5896589839
21 20607.0940264188
22 20649.4012904436
23 20850.0578280659
24 20821.8920452593
25 20871.6828407334
26 21107.2745566601
27 21320.7837676607
28 21132.008724228
29 21044.0011851385
30 21040.5410666247
31 21246.7069599622
32 20768.7829306326
33 20781.931304236
34 20553.7448404791
35 20705.5003564312
36 20689.1403207881
37 20549.3462887093
38 20607.1049931717
39 20495.0644938545
40 20468.6747111357
41 20184.4505733555
42 20228.6055160199
43 20253.3082977513
44 20197.1041346428
45 20062.6103878452
46 19985.1560157273
47 20176.3737474879
48 20204.0263727392
49 20326.4404021319
50 20375.1496952521
};
\addlegendentry{\ourmodtwo}
\addplot [thick, line width=1.5, color0, mark=*, mark size=2, mark repeat=3, mark options={solid}]
table {%
1 15561.3333333333
2 15806.7533333333
3 16099.1646666667
4 16087.5748666667
5 16526.2640466667
6 16506.4443086667
7 16414.1732111333
8 16535.5692233533
9 16498.5656343513
10 16668.4690709162
11 16810.8788304912
12 16789.3709474421
13 16906.2305193646
14 16860.9408007615
15 16944.1867206853
16 17108.4547152834
17 17074.1559104218
18 17084.9536527129
19 17071.4782874416
20 17154.8971253641
21 17308.2740794944
22 17296.5700048783
23 17440.0063377238
24 17339.7490372847
25 17347.2441335563
26 17534.463053534
27 17723.3400815139
28 17615.1094066959
29 17532.4417993596
30 17536.2542860903
31 17693.5421908146
32 17423.2113050665
33 17381.0435078932
34 17253.4924904372
35 17335.0899080601
36 17270.1775839208
37 17164.133158862
38 17196.6898429758
39 17184.0175253449
40 17249.6391061438
41 17088.941862196
42 17135.7976759764
43 17209.6245750455
44 17215.9787842076
45 17222.5342391202
46 17212.1808152081
47 17342.4527336873
48 17404.5741269853
49 17457.1467142867
50 17591.5820428581
};
\addlegendentry{Q-learning}
\addplot [thick, line width=1.5, color1, mark=square*, mark size=2, mark repeat=3, mark options={solid}]
table {%
1 7666.76666666667
2 7815.18
3 7974.232
4 7962.47546666667
5 8237.71125333333
6 8197.06346133333
7 8149.12044853333
8 8257.18840368
9 8232.83622997867
10 8365.87594031413
11 8461.29834628272
12 8458.72517832111
13 8553.48599382234
14 8501.7073944401
15 8538.39665499609
16 8654.66032282982
17 8608.01762388017
18 8612.20919482549
19 8607.56827534294
20 8652.33811447531
21 8772.66096969445
22 8768.564872725
23 8860.06505211917
24 8803.82854690725
25 8803.60569221653
26 8917.57178966154
27 9102.28127736205
28 9045.18648295918
29 8977.97783466326
30 8959.78338453027
31 9072.48171274391
32 8875.10354146952
33 8875.25318732257
34 8740.85453525698
35 8824.00908173128
36 8777.39817355815
37 8720.38168953567
38 8744.04685391543
39 8712.29550185722
40 8757.07261833817
41 8637.69535650435
42 8658.72915418725
43 8716.76290543519
44 8713.71328155834
45 8682.66528673584
46 8651.99875806225
47 8730.74554892269
48 8791.80099403042
49 8839.10756129405
50 8926.15680516464
};
\addlegendentry{RMax}
\addplot [thick, line width=1.5, white!36.8627450980392!black, mark=triangle*, mark size=2, mark repeat=3, mark options={solid}]
table {%
1 7720.03333333333
2 7867.98
3 8018.45866666667
4 7996.44613333333
5 8268.04485333333
6 8220.08370133333
7 8167.16199786667
8 8270.99246474667
9 8252.68988493867
10 8386.75756311147
11 8481.25514013365
12 8962.16295945362
13 9091.32999684159
14 9771.2836638241
15 9596.48529744169
16 10081.5301010309
17 9896.05375759443
18 10625.0017151683
19 10538.3582103182
20 11493.269055953
21 11385.1421503577
22 11762.1812686553
23 12271.7598084564
24 12507.5738276108
25 12846.3164448497
26 12846.6748003647
27 13712.5039869949
28 13539.1869216288
29 13465.2248961326
30 13892.519073186
31 14616.473832534
32 14706.3964492806
33 14740.7234710192
34 14528.097790584
35 14744.4080115256
36 14659.7705437064
37 14939.4534893357
38 14772.7081404021
39 14611.9606596953
40 14716.4545937257
41 14719.1758010198
42 14789.8548875845
43 14909.4293988261
44 14631.5764589435
45 14787.9854797158
46 15045.7102650775
47 15148.4125719031
48 15289.2579813795
49 15246.0055165749
50 14942.5849649174
};
\addlegendentry{FMRL}
\addplot [thick, line width=1.5, color3, mark=asterisk, mark size=2, mark repeat=3, mark options={solid}]
table {%
1 7797.5
2 7916.16666666667
3 8093.47333333333
4 8009.01933333333
5 8194.62073333333
6 8243.30532666667
7 8145.304794
8 8336.04098126667
9 8325.54354980667
10 8523.65252815933
11 8604.65394201007
12 8535.20188114239
13 8616.42502636149
14 8544.58252372534
15 8570.99093801947
16 8660.09184421752
17 8680.2459931291
18 8658.08806048286
19 8579.19592110124
20 8615.33966232445
21 8783.845696092
22 8749.2611264828
23 8842.27834716786
24 8796.44717911774
25 8828.25246120596
26 8902.7705484187
27 9016.51682691016
28 8934.85847755248
29 8847.8292964639
30 8863.17970015084
31 8955.93173013576
32 8754.93855712218
33 8824.5980347433
34 8659.63823126897
35 8759.46440814207
36 8556.0813006612
37 8503.13983726174
38 8496.25252020224
39 8531.19726818201
40 8530.40087469714
41 8389.85078722743
42 8440.49237517135
43 8440.67980432089
44 8479.85515722213
45 8464.81297483325
46 8556.21834401659
47 8579.9231762816
48 8649.02085865344
49 8768.85210612143
50 8760.09689550929
};
\addlegendentry{RMax-MaxQInit}
\addplot [thick, line width=1.5, color4, mark=x, mark size=2, mark repeat=3, mark options={solid}]
table {%
1 12892.4666666667
2 13126.8766666667
3 13279.8123333333
4 13213.1244333333
5 13575.9386566667
6 13614.7381243333
7 13462.4643119
8 13585.75788071
9 13523.7354259723
10 13779.3985500418
11 13844.0053617043
12 14104.0748255338
13 14613.5606763138
14 14867.7879420157
15 15152.5491478142
16 15514.8975663661
17 15705.7678097295
18 15904.4910287565
19 16034.3852592142
20 16257.5133999594
21 16621.9220599635
22 16795.4365206338
23 17130.6995352371
24 17150.2862483801
25 17376.7309568754
26 17543.7178611878
27 17781.9394084024
28 17791.2821342288
29 17818.0172541393
30 17818.952195392
31 18102.2236425195
32 17895.5479449342
33 18021.2798171074
34 17775.2685020634
35 17978.5149851904
36 17743.3568200047
37 17670.0778046709
38 17682.7233575371
39 17783.4410217834
40 17720.6635862717
41 17570.1105609779
42 17583.1861715468
43 17738.9675543921
44 17796.7874656195
45 17844.3220523909
46 17949.4631804852
47 17981.47019577
48 18111.7298428597
49 18261.3835252404
50 18232.7651727163
};
\addlegendentry{Q-MaxQInit}
\addplot [thick, line width=1.5, color5, mark=pentagon*, mark size=2, mark repeat=3, mark options={solid}]
table {%
1 15601
2 14082.3333333333
3 12279.44
4 11707.7226666667
5 11091.2604
6 9867.66102666666
7 8974.764924
8 8449.1484316
9 8036.04358844
10 7543.689229596
11 6252.8503066364
12 5663.46860930609
13 5080.72508170882
14 5135.9992402046
15 4432.18264951748
16 4205.90771789906
17 4208.57694610915
18 4058.08591816491
19 4159.44732634842
20 4144.14592704691
21 4187.51133434222
22 3730.88686757466
23 3482.1681808172
24 3466.03802940214
25 3532.89755979526
26 3915.25113714907
27 4136.70935676749
28 3837.65842109075
29 3173.055912315
30 2984.4203210835
31 3015.14828897515
32 3187.43679341097
33 2864.14644740321
34 2673.55180266289
35 2691.77328906327
36 2905.04596015694
37 2873.28803080791
38 3111.80922772712
39 2797.78497162107
40 2828.0998077923
41 2847.58982701307
42 2252.04751097843
43 1933.46609321392
44 1926.94948389253
45 2029.67120216994
46 2217.35408195295
47 2310.09200709099
48 2326.48613971522
49 2100.38085907703
50 1928.39277316933
};
\addlegendentry{DelayedQ-MaxQInit}
\end{axis}

\end{tikzpicture}
		\vspace{-0.4em}
		\caption{{MaxQInit for Finite-Model MTRL}}
		\label{sfig:maxqinit_fm}
	\end{subfigure}
\vspace{-0.5em}
\caption{Additional experimental results for other versions of Abs-RL and MaxQInit. }
\label{fig:baseline}
\end{figure}

\subsection{Results of Varying Action Size}
\label{app:action}
Our proposed method can also work when the action space of the tasks are different, which could happen in transfer reinforcement learning (TRL) settings. For example, in a navigating task, the available actions can be simply ``up'', ``down'', ``left'' and ``right'' (as shown in Figure~\ref{sfig:4_action}), but a more difficult task may also allow the actions ``up-left'', ``up-right'', ``down-left'', ``down-right'' (as shown in Figure~\ref{sfig:8_action}).
Intuitively, there is some shared knowledge between these two tasks, and the agent will learn the 8-action task better if it can transfer some knowledge from the 4-action task. However, few existing methods can transfer appropriate knowledge between these two tasks without pre-defined inter-task mappings.
In contrast, our proposed \modname is able to transfer knowledge between these two tasks without any prior knowledge. 

In Figure~\ref{sfig:action_result}, we show the performance of Q-learning, RMax and \modname on the 8-action gridworld, where \modname has already learned from a 4-action task and gathered the \patname information. The results suggest that \modname automatically figures out the relation among the state-action pairs in two tasks with different action spaces.

\begin{figure}[!htbp]
\centering
	\begin{subfigure}[t]{0.32\columnwidth}
		\centering
		\includegraphics[width=0.6\textwidth, bb=0 0 400 400]{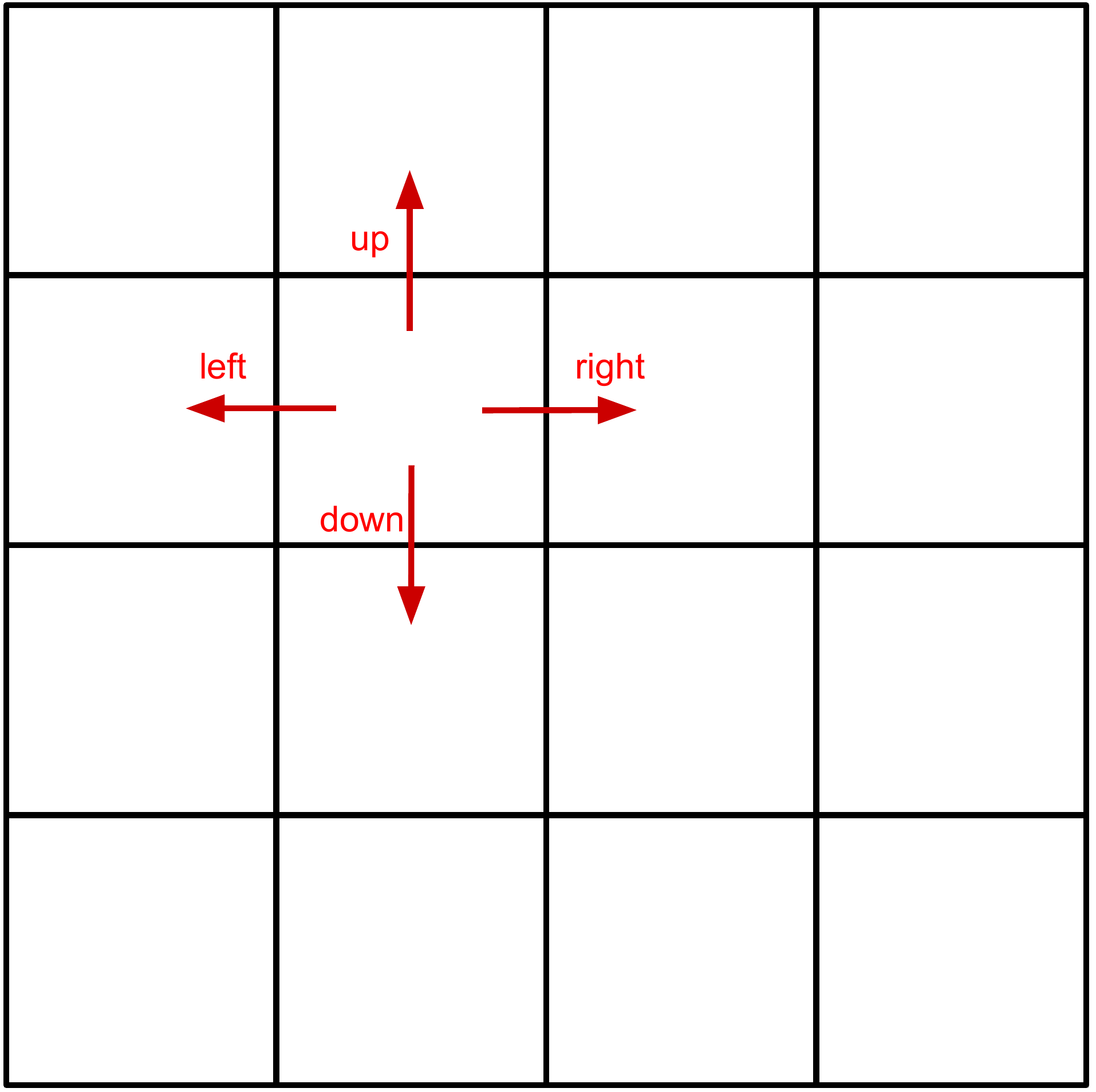}
		\caption{Source task}
		\label{sfig:4_action}
	\end{subfigure}
	\begin{subfigure}[t]{0.32\columnwidth}
		\centering
		\includegraphics[width=0.6\textwidth, bb=0 0 400 400]{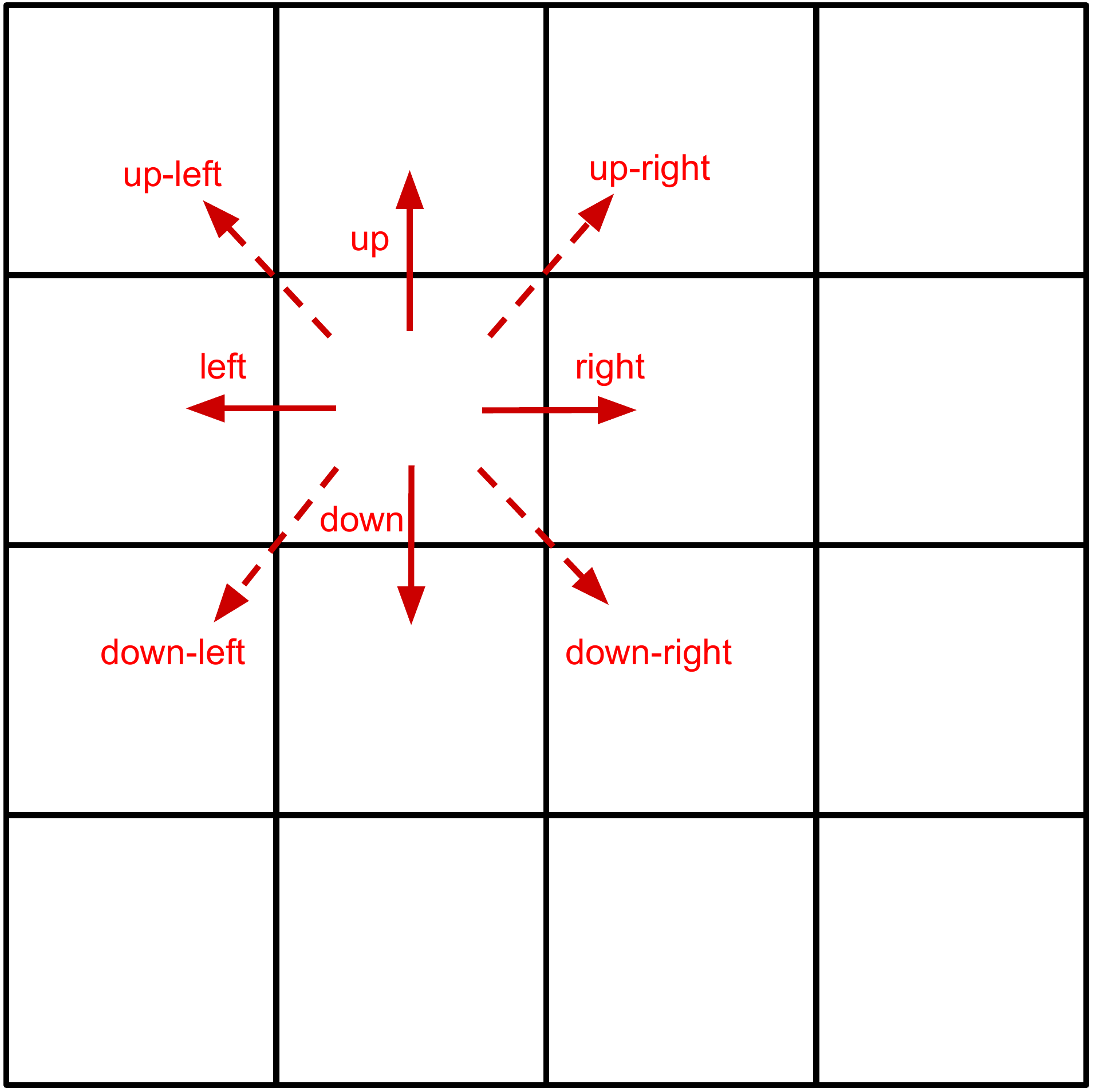}
		\caption{Target task}
		\label{sfig:8_action}
	\end{subfigure}
	\begin{subfigure}[t]{0.32\columnwidth}
		\centering
		\includegraphics[width=0.9\textwidth, bb=0 0 400 400]{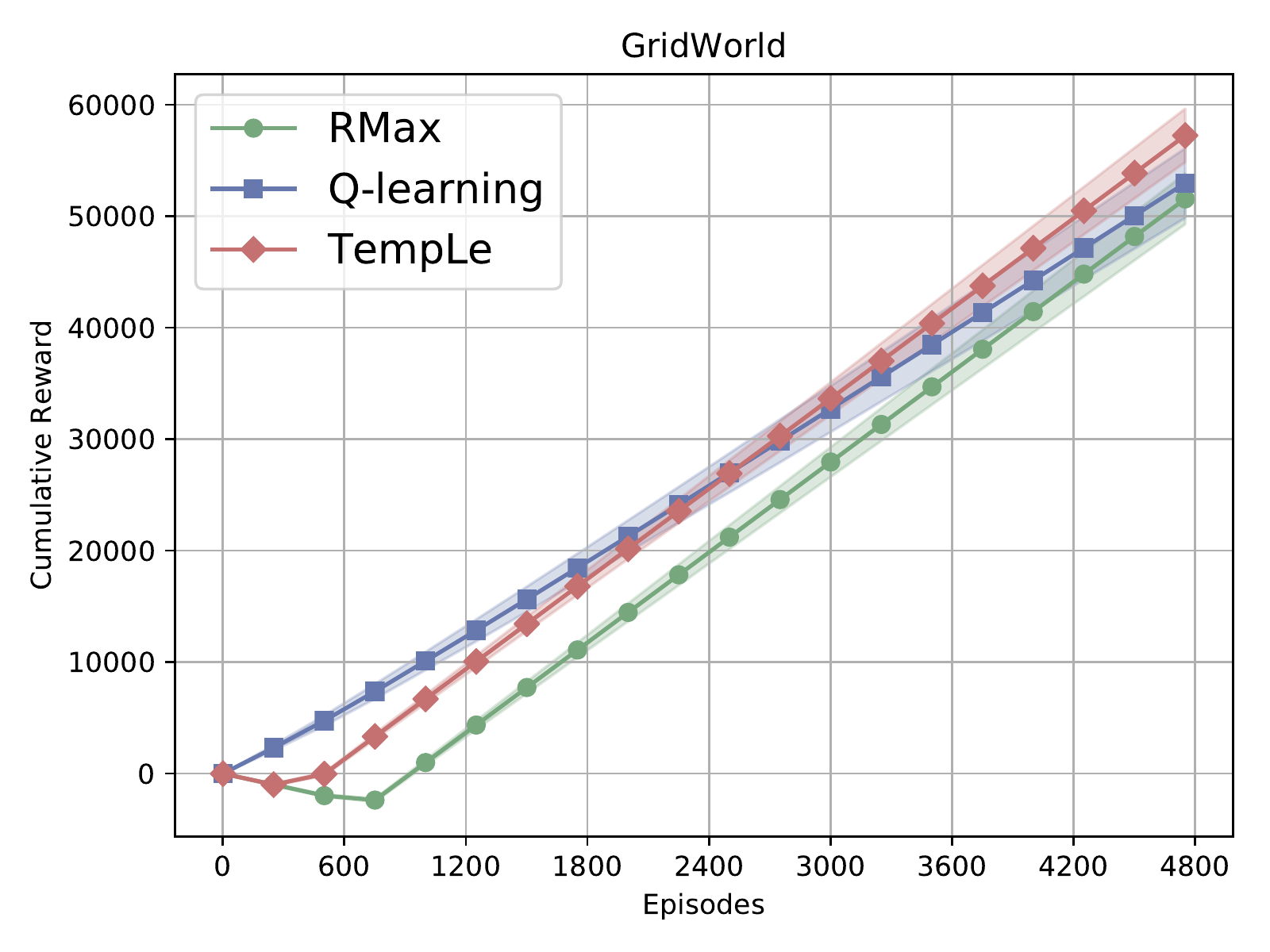}
		\caption{CartPole}
		\label{sfig:action_result}
	\end{subfigure}
  \caption{{Transfer learning with varying action size.}}
  \label{fig:action}
\end{figure}

\subsection{Universal Applicability of \modname}
\label{app:universal}

We also observe that our proposed template learning is universally applicable to many classical stochastic environments. For example, all gridworld-based environments like FourRoom, Taxi, FrozenLake~\cite{1606.01540}, etc. 
In addition, \citet{strehl2004exploration} propose 3 challenging MDPs as Figure~\ref{fig:mdps} shows. It can be seen from these MDP definition that the number of templates is smaller than the number of state-action pairs in all of them. For instance, the \patname $((1,0,\cdots,0),0)$ appears for multiple times in all of the three tasks. Thus, in each of the environments, \modname can transfer knowledge from known state-action pairs to unknown state-action pairs with the same \patname. More interestingly, since these tasks have some common \patnames, if we sequentially learn these three tasks, \modname has the potential to transfer knowledge among them, in spite that they are totally different environments in common sense.

\begin{figure}[!htbp]
    \centering
    \includegraphics[width=0.5\textwidth]{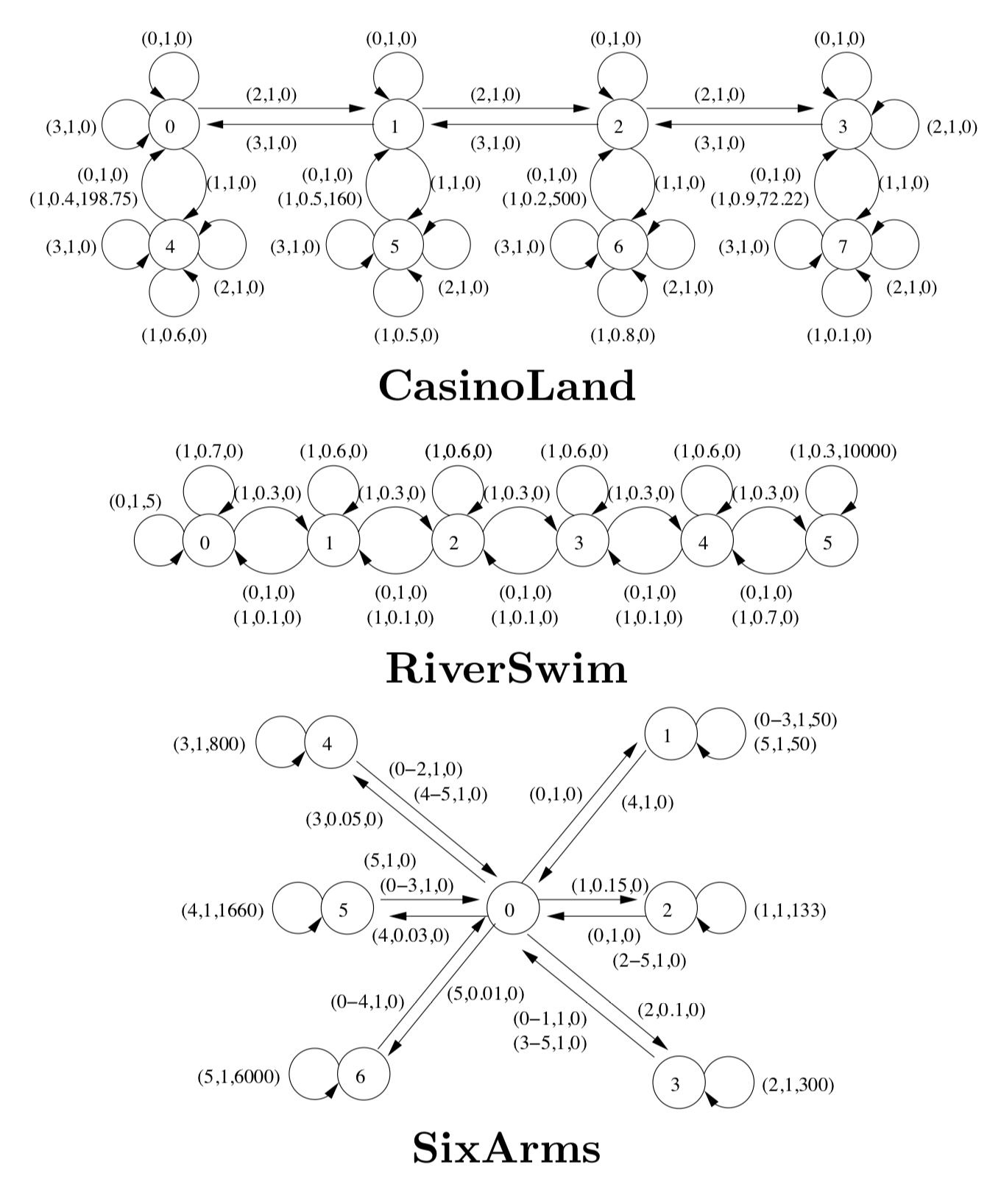}
    \caption{\textbf{Three challenging MDPs~\cite{strehl2004exploration}}: CasinoLand (top), RiverSwim (middle), and SixArms (bottom). Each node in the graph is a state and each edge is labeled by one or more transitions. A transition is of the form $(a,p,r)$ where $a$ is an action, $p$ the probability that action will result in the corresponding transition, and $r$ is the reward for taking the transition.}
    \label{fig:mdps}
\end{figure}

Figure~\ref{sfig:riverswim} Figure~\ref{sfig:fourroom}, and Figure~\ref{sfig:grid_large} respectively show the performance of \modname compared with baselines on RiverSwim\citep{strehl2004exploration}, FourRoom and a large GridWorld, which are well-known hard-to-explore environments. \modname outperforms the single-task learners, because it can aggregate similar information in the environment, which saves samples and facilitates exploration.
\begin{figure}[!htbp]
\centering
	\begin{subfigure}[t]{0.3\columnwidth}
		\centering
\begin{tikzpicture}[scale=0.45]

\definecolor{color0}{rgb}{0.462745098039216,0.654901960784314,0.490196078431373}
\definecolor{color1}{rgb}{0.4,0.470588235294118,0.67843137254902}
\definecolor{color2}{rgb}{0.776470588235294,0.443137254901961,0.443137254901961}

\begin{axis}[
legend cell align={left},
legend style={at={(0.03,0.97)}, anchor=north west, draw=white!80.0!black},
tick align=outside,
tick pos=left,
x grid style={white!69.01960784313725!black},
xlabel={Episodes},
xmajorgrids,
xmin=-474, xmax=9976,
xtick style={color=black},
y grid style={white!69.01960784313725!black},
ylabel={Cumulative Reward},
ymajorgrids,
ymin=-46587.1958867227, ymax=405145.886269202,
ytick style={color=black}
]
\path [draw=color0, fill=color0, opacity=0.25]
(axis cs:1,-31.8149264108718)
--(axis cs:1,5.21492641087188)
--(axis cs:501,15537.1629507665)
--(axis cs:1001,33367.4315157042)
--(axis cs:1501,51222.0394902545)
--(axis cs:2001,69165.3584080663)
--(axis cs:2501,86955.6680108636)
--(axis cs:3001,104967.583394638)
--(axis cs:3501,122851.772646339)
--(axis cs:4001,140749.751756775)
--(axis cs:4501,158803.082067004)
--(axis cs:5001,176743.951417678)
--(axis cs:5501,194592.821332057)
--(axis cs:6001,212728.331755587)
--(axis cs:6501,230639.985791248)
--(axis cs:7001,248539.141165673)
--(axis cs:7501,266357.450376345)
--(axis cs:8001,284226.259619313)
--(axis cs:8501,302124.954747923)
--(axis cs:9001,319931.991403503)
--(axis cs:9501,337724.064962105)
--(axis cs:9501,261561.135037895)
--(axis cs:9501,261561.135037895)
--(axis cs:9001,247508.608596497)
--(axis cs:8501,233532.845252078)
--(axis cs:8001,219382.740380687)
--(axis cs:7501,205349.549623655)
--(axis cs:7001,191286.058834327)
--(axis cs:6501,177109.614208752)
--(axis cs:6001,162921.868244413)
--(axis cs:5501,148599.578667943)
--(axis cs:5001,134661.048582322)
--(axis cs:4501,120468.317932996)
--(axis cs:4001,106311.648243225)
--(axis cs:3501,92208.8273536615)
--(axis cs:3001,78166.0166053622)
--(axis cs:2501,63954.5319891364)
--(axis cs:2001,49908.8415919337)
--(axis cs:1501,35752.3605097455)
--(axis cs:1001,21675.7684842958)
--(axis cs:501,7764.63704923354)
--(axis cs:1,-31.8149264108718)
--cycle;

\path [draw=color1, fill=color1, opacity=0.25]
(axis cs:1,-38.9939457008809)
--(axis cs:1,-32.8060542991189)
--(axis cs:501,-8654.89559009478)
--(axis cs:1001,-17915.1350695443)
--(axis cs:1501,-19336.1260294557)
--(axis cs:2001,-1898.79706850539)
--(axis cs:2501,18458.0638051211)
--(axis cs:3001,38868.2898279432)
--(axis cs:3501,59191.0799497201)
--(axis cs:4001,79720.3841410678)
--(axis cs:4501,100484.919535425)
--(axis cs:5001,121472.476059731)
--(axis cs:5501,142675.139594516)
--(axis cs:6001,163760.487342283)
--(axis cs:6501,185039.618211063)
--(axis cs:7001,206278.927251616)
--(axis cs:7501,227453.747404009)
--(axis cs:8001,248665.270051206)
--(axis cs:8501,269852.307022963)
--(axis cs:9001,291234.429995367)
--(axis cs:9501,312480.632044101)
--(axis cs:9501,252741.367955899)
--(axis cs:9501,252741.367955899)
--(axis cs:9001,235026.970004633)
--(axis cs:8501,217278.692977037)
--(axis cs:8001,199538.529948794)
--(axis cs:7501,181743.852595991)
--(axis cs:7001,164032.872748384)
--(axis cs:6501,146286.781788937)
--(axis cs:6001,128427.312657717)
--(axis cs:5501,110808.060405484)
--(axis cs:5001,93168.7239402686)
--(axis cs:4501,75425.6804645752)
--(axis cs:4001,57808.6158589322)
--(axis cs:3501,40179.5200502799)
--(axis cs:3001,22689.9101720568)
--(axis cs:2501,5224.73619487888)
--(axis cs:2001,-12334.6029314946)
--(axis cs:1501,-26053.8739705443)
--(axis cs:1001,-19104.6649304557)
--(axis cs:501,-9535.30440990521)
--(axis cs:1,-38.9939457008809)
--cycle;

\path [draw=color2, fill=color2, opacity=0.25]
(axis cs:1,-39.0243278598338)
--(axis cs:1,-34.375672140166)
--(axis cs:501,1706.5671452926)
--(axis cs:1001,22824.1139519044)
--(axis cs:1501,44012.4374878759)
--(axis cs:2001,65208.6883804155)
--(axis cs:2501,86337.8790106129)
--(axis cs:3001,107769.798518332)
--(axis cs:3501,129083.698375664)
--(axis cs:4001,150401.473600917)
--(axis cs:4501,171730.497052762)
--(axis cs:5001,193037.472386976)
--(axis cs:5501,214418.369839928)
--(axis cs:6001,235725.378056783)
--(axis cs:6501,257081.587573476)
--(axis cs:7001,278365.333432325)
--(axis cs:7501,299601.387053136)
--(axis cs:8001,320910.08188869)
--(axis cs:8501,342164.54006817)
--(axis cs:9001,363351.501344749)
--(axis cs:9501,384612.564353024)
--(axis cs:9501,295173.035646976)
--(axis cs:9501,295173.035646976)
--(axis cs:9001,278753.898655251)
--(axis cs:8501,262416.65993183)
--(axis cs:8001,246146.31811131)
--(axis cs:7501,229971.412946864)
--(axis cs:7001,213362.866567675)
--(axis cs:6501,196995.812426524)
--(axis cs:6001,180554.021943217)
--(axis cs:5501,164263.430160072)
--(axis cs:5001,147814.727613024)
--(axis cs:4501,131455.502947238)
--(axis cs:4001,114881.326399083)
--(axis cs:3501,98377.3016243357)
--(axis cs:3001,81913.8014816678)
--(axis cs:2501,65225.9209893871)
--(axis cs:2001,48850.1116195845)
--(axis cs:1501,32349.1625121241)
--(axis cs:1001,15910.8860480956)
--(axis cs:501,-636.367145292602)
--(axis cs:1,-39.0243278598338)
--cycle;

\addplot [semithick, color2, mark=diamond*, mark size=3, mark options={solid}]
table {%
1 -36.6999999999999
501 535.1
1001 19367.5
1501 38180.8
2001 57029.4
2501 75781.9
3001 94841.8
3501 113730.5
4001 132641.4
4501 151593
5001 170426.1
5501 189340.9
6001 208139.7
6501 227038.7
7001 245864.1
7501 264786.4
8001 283528.2
8501 302290.6
9001 321052.7
9501 339892.8
};
\addlegendentry{O-TempLe}
\addplot [semithick, color0, mark=*, mark size=3, mark options={solid}]
table {%
1 -13.3
501 11650.9
1001 27521.6
1501 43487.2
2001 59537.1
2501 75455.1
3001 91566.8
3501 107530.3
4001 123530.7
4501 139635.7
5001 155702.5
5501 171596.2
6001 187825.1
6501 203874.8
7001 219912.6
7501 235853.5
8001 251804.5
8501 267828.9
9001 283720.3
9501 299642.6
};
\addlegendentry{Q-learning}
\addplot [semithick, color1, mark=square*, mark size=3, mark options={solid}]
table {%
1 -35.8999999999999
501 -9095.1
1001 -18509.9
1501 -22695
2001 -7116.7
2501 11841.4
3001 30779.1
3501 49685.3
4001 68764.5
4501 87955.3
5001 107320.6
5501 126741.6
6001 146093.9
6501 165663.2
7001 185155.9
7501 204598.8
8001 224101.9
8501 243565.5
9001 263130.7
9501 282611
};
\addlegendentry{RMax}
\end{axis}

\end{tikzpicture}
	\vspace{-0.4em}
	\caption{{$9 \times 9$ FourRoom}}
	\label{sfig:fourroom}
	\end{subfigure}
	\hfill
	\begin{subfigure}[t]{0.3\columnwidth}
		\centering
\begin{tikzpicture}[scale=0.45]

\definecolor{color0}{rgb}{0.462745098039216,0.654901960784314,0.490196078431373}
\definecolor{color1}{rgb}{0.4,0.470588235294118,0.67843137254902}
\definecolor{color2}{rgb}{0.776470588235294,0.443137254901961,0.443137254901961}

\begin{axis}[
legend cell align={left},
legend style={draw=white!80.0!black},
tick align=outside,
tick pos=left,
x grid style={white!69.01960784313725!black},
xlabel={Episodes},
xmajorgrids,
xmin=-236.5, xmax=4988.5,
xtick style={color=black},
y grid style={white!69.01960784313725!black},
ylabel={Cumulative Reward},
ymajorgrids,
ymin=-516333.960890564, ymax=10845050.3099686,
ytick style={color=black},
ytick={-2000000,0,2000000,4000000,6000000,8000000,10000000,12000000},
yticklabels={−0.2,0.0,0.2,0.4,0.6,0.8,1.0,1.2}
]
\path [draw=color0, fill=color0, opacity=0.25]
(axis cs:1,219.348029747877)
--(axis cs:1,239.651970252123)
--(axis cs:251,33571.3324554163)
--(axis cs:501,66104.5985236624)
--(axis cs:751,98736.8339258202)
--(axis cs:1001,131415.650300552)
--(axis cs:1251,164050.937896097)
--(axis cs:1501,196484.076916756)
--(axis cs:1751,229242.589755567)
--(axis cs:2001,261867.395932946)
--(axis cs:2251,294391.872206033)
--(axis cs:2501,326871.856429449)
--(axis cs:2751,359616.642551618)
--(axis cs:3001,392242.81412828)
--(axis cs:3251,425014.47134757)
--(axis cs:3501,457703.392527903)
--(axis cs:3751,490174.989106625)
--(axis cs:4001,522715.369543481)
--(axis cs:4251,555414.392164317)
--(axis cs:4501,588059.660866902)
--(axis cs:4751,620585.638754632)
--(axis cs:4751,619614.361245368)
--(axis cs:4751,619614.361245368)
--(axis cs:4501,587117.339133098)
--(axis cs:4251,554508.607835683)
--(axis cs:4001,521807.630456519)
--(axis cs:3751,489257.010893375)
--(axis cs:3501,456773.607472097)
--(axis cs:3251,424100.52865243)
--(axis cs:3001,391343.18587172)
--(axis cs:2751,358702.357448382)
--(axis cs:2501,325942.143570551)
--(axis cs:2251,293469.127793967)
--(axis cs:2001,260930.604067054)
--(axis cs:1751,228309.410244433)
--(axis cs:1501,195547.923083244)
--(axis cs:1251,163114.062103903)
--(axis cs:1001,130492.349699448)
--(axis cs:751,97799.1660741798)
--(axis cs:501,65183.4014763376)
--(axis cs:251,32661.6675445837)
--(axis cs:1,219.348029747877)
--cycle;

\path [draw=color1, fill=color1, opacity=0.25]
(axis cs:1,92.5968757625672)
--(axis cs:1,105.403124237433)
--(axis cs:251,78726.4579555221)
--(axis cs:501,202136.088357457)
--(axis cs:751,330113.446200504)
--(axis cs:1001,485192.730534055)
--(axis cs:1251,650939.49244035)
--(axis cs:1501,842075.649769909)
--(axis cs:1751,1029144.10629563)
--(axis cs:2001,1294389.51339123)
--(axis cs:2251,1484112.7687525)
--(axis cs:2501,1650375.23380225)
--(axis cs:2751,1872341.13775946)
--(axis cs:3001,2105020.89560822)
--(axis cs:3251,2316371.47392373)
--(axis cs:3501,2512323.79296576)
--(axis cs:3751,2731038.06232075)
--(axis cs:4001,2943413.103648)
--(axis cs:4251,3146005.88185111)
--(axis cs:4501,3324622.28427469)
--(axis cs:4751,3537502.17553332)
--(axis cs:4751,3256583.82446668)
--(axis cs:4751,3256583.82446668)
--(axis cs:4501,3059280.71572531)
--(axis cs:4251,2895255.11814889)
--(axis cs:4001,2698935.896352)
--(axis cs:3751,2478580.93767925)
--(axis cs:3501,2260386.20703424)
--(axis cs:3251,2086979.52607627)
--(axis cs:3001,1874167.10439178)
--(axis cs:2751,1651662.86224054)
--(axis cs:2501,1463543.76619775)
--(axis cs:2251,1290235.2312475)
--(axis cs:2001,1107695.48660877)
--(axis cs:1751,857899.893704371)
--(axis cs:1501,668405.350230091)
--(axis cs:1251,508902.50755965)
--(axis cs:1001,385695.269465945)
--(axis cs:751,270555.553799496)
--(axis cs:501,174271.911642543)
--(axis cs:251,70064.5420444779)
--(axis cs:1,92.5968757625672)
--cycle;

\path [draw=color2, fill=color2, opacity=0.25]
(axis cs:1,92.9750621894396)
--(axis cs:1,103.02493781056)
--(axis cs:251,247519.706386774)
--(axis cs:501,833475.910267647)
--(axis cs:751,1327405.28951826)
--(axis cs:1001,1878587.63322878)
--(axis cs:1251,2437895.43808682)
--(axis cs:1501,2971017.55454781)
--(axis cs:1751,3535600.46306109)
--(axis cs:2001,3999160.87772556)
--(axis cs:2251,4555601.70816467)
--(axis cs:2501,5144742.07816933)
--(axis cs:2751,5723263.205554)
--(axis cs:3001,6275273.08518769)
--(axis cs:3251,6823777.49957867)
--(axis cs:3501,7349563.81756903)
--(axis cs:3751,7940427.39649118)
--(axis cs:4001,8556644.25236906)
--(axis cs:4251,9171582.1828163)
--(axis cs:4501,9761800.77649141)
--(axis cs:4751,10328623.7522023)
--(axis cs:4751,4001264.24779771)
--(axis cs:4751,4001264.24779771)
--(axis cs:4501,3793348.22350859)
--(axis cs:4251,3547736.8171837)
--(axis cs:4001,3323907.74763094)
--(axis cs:3751,3077508.60350882)
--(axis cs:3501,2862990.18243097)
--(axis cs:3251,2661398.50042133)
--(axis cs:3001,2447371.91481231)
--(axis cs:2751,2222149.794446)
--(axis cs:2501,2005591.92183067)
--(axis cs:2251,1794825.29183533)
--(axis cs:2001,1568477.12227444)
--(axis cs:1751,1367996.53693891)
--(axis cs:1501,1139046.44545219)
--(axis cs:1251,901966.561913183)
--(axis cs:1001,715644.366771216)
--(axis cs:751,462665.710481742)
--(axis cs:501,275261.089732353)
--(axis cs:251,86433.2936132258)
--(axis cs:1,92.9750621894396)
--cycle;

\addplot [semithick, color2, mark=diamond*, mark size=3, mark options={solid}]
table {%
1 98
251 166976.5
501 554368.5
751 895035.5
1001 1297116
1251 1669931
1501 2055032
1751 2451798.5
2001 2783819
2251 3175213.5
2501 3575167
2751 3972706.5
3001 4361322.5
3251 4742588
3501 5106277
3751 5508968
4001 5940276
4251 6359659.5
4501 6777574.5
4751 7164944
};
\addlegendentry{O-TempLe}
\addplot [semithick, color0, mark=*, mark size=3, mark options={solid}]
table {%
1 229.5
251 33116.5
501 65644
751 98268
1001 130954
1251 163582.5
1501 196016
1751 228776
2001 261399
2251 293930.5
2501 326407
2751 359159.5
3001 391793
3251 424557.5
3501 457238.5
3751 489716
4001 522261.5
4251 554961.5
4501 587588.5
4751 620100
};
\addlegendentry{Q-learning}
\addplot [semithick, color1, mark=square*, mark size=3, mark options={solid}]
table {%
1 99
251 74395.5
501 188204
751 300334.5
1001 435444
1251 579921
1501 755240.5
1751 943522
2001 1201042.5
2251 1387174
2501 1556959.5
2751 1762002
3001 1989594
3251 2201675.5
3501 2386355
3751 2604809.5
4001 2821174.5
4251 3020630.5
4501 3191951.5
4751 3397043
};
\addlegendentry{RMax}
\end{axis}

\end{tikzpicture}
		\vspace{-0.4em}
		\caption{{RiverSwim}}
		\label{sfig:riverswim}
	\end{subfigure}
	\hfill
	\begin{subfigure}[t]{0.3\columnwidth}
		\centering
\begin{tikzpicture}[scale=0.45]

\definecolor{color0}{rgb}{0.462745098039216,0.654901960784314,0.490196078431373}
\definecolor{color1}{rgb}{0.4,0.470588235294118,0.67843137254902}
\definecolor{color2}{rgb}{0.776470588235294,0.443137254901961,0.443137254901961}

\begin{axis}[
legend cell align={left},
legend style={at={(0.03,0.97)}, anchor=north west, draw=white!80.0!black},
tick align=outside,
tick pos=left,
x grid style={white!69.01960784313725!black},
xlabel={Episodes},
xmajorgrids,
xmin=-1424, xmax=29926,
xtick style={color=black},
y grid style={white!69.01960784313725!black},
ylabel={Cumulative Reward},
ymajorgrids,
ymin=-9006.86771278993, ymax=189143.778531927,
ytick style={color=black}
]
\path [draw=color0, fill=color0, opacity=0.25]
(axis cs:1,0.258380151290434)
--(axis cs:1,1.74161984870957)
--(axis cs:1501,6894.11942265)
--(axis cs:3001,15087.9917768833)
--(axis cs:4501,23499.3050166566)
--(axis cs:6001,32020.3260900473)
--(axis cs:7501,40602.4300044688)
--(axis cs:9001,49111.9544963917)
--(axis cs:10501,57706.540407758)
--(axis cs:12001,66384.2862669988)
--(axis cs:13501,75045.3222534294)
--(axis cs:15001,83634.9842939562)
--(axis cs:16501,92327.5488120668)
--(axis cs:18001,100943.003254672)
--(axis cs:19501,109618.151374651)
--(axis cs:21001,118250.469185849)
--(axis cs:22501,126860.494559434)
--(axis cs:24001,135451.120442754)
--(axis cs:25501,144198.499608353)
--(axis cs:27001,152823.804922854)
--(axis cs:28501,161453.063926228)
--(axis cs:28501,132585.936073772)
--(axis cs:28501,132585.936073772)
--(axis cs:27001,125414.395077146)
--(axis cs:25501,118282.300391647)
--(axis cs:24001,111014.079557246)
--(axis cs:22501,103832.105440566)
--(axis cs:21001,96656.9308141509)
--(axis cs:19501,89396.6486253493)
--(axis cs:18001,82227.5967453279)
--(axis cs:16501,75053.8511879332)
--(axis cs:15001,67822.6157060438)
--(axis cs:13501,60668.0777465706)
--(axis cs:12001,53482.7137330012)
--(axis cs:10501,46326.259592242)
--(axis cs:9001,39226.2455036083)
--(axis cs:7501,32188.5699955312)
--(axis cs:6001,25101.4739099527)
--(axis cs:4501,18037.4949833434)
--(axis cs:3001,11149.4082231167)
--(axis cs:1501,4659.08057735)
--(axis cs:1,0.258380151290434)
--cycle;

\path [draw=color1, fill=color1, opacity=0.25]
(axis cs:1,0.42830094339717)
--(axis cs:1,1.37169905660283)
--(axis cs:1501,666.938657234419)
--(axis cs:3001,1574.44108757193)
--(axis cs:4501,2256.33670427128)
--(axis cs:6001,3088.95657860084)
--(axis cs:7501,3856.02306449744)
--(axis cs:9001,4746.86855705446)
--(axis cs:10501,9491.24026368602)
--(axis cs:12001,18327.2945699097)
--(axis cs:13501,28183.7650317418)
--(axis cs:15001,38137.7505771038)
--(axis cs:16501,48055.3178723589)
--(axis cs:18001,58046.6258392044)
--(axis cs:19501,68002.2669977184)
--(axis cs:21001,77968.3020643998)
--(axis cs:22501,87949.950013723)
--(axis cs:24001,97936.3235787175)
--(axis cs:25501,107893.069423374)
--(axis cs:27001,117894.617977435)
--(axis cs:28501,127950.321542526)
--(axis cs:28501,105163.278457474)
--(axis cs:28501,105163.278457474)
--(axis cs:27001,96505.9820225645)
--(axis cs:25501,87888.9305766257)
--(axis cs:24001,79285.4764212825)
--(axis cs:22501,70643.249986277)
--(axis cs:21001,62035.0979356001)
--(axis cs:19501,53367.9330022815)
--(axis cs:18001,44704.3741607956)
--(axis cs:16501,35988.4821276411)
--(axis cs:15001,27394.4494228962)
--(axis cs:13501,19842.2349682582)
--(axis cs:12001,12241.3054300903)
--(axis cs:10501,6182.75973631398)
--(axis cs:9001,3679.33144294554)
--(axis cs:7501,3008.37693550256)
--(axis cs:6001,2204.64342139916)
--(axis cs:4501,1321.46329572872)
--(axis cs:3001,749.758912428072)
--(axis cs:1501,324.661342765581)
--(axis cs:1,0.42830094339717)
--cycle;

\path [draw=color2, fill=color2, opacity=0.25]
(axis cs:1,-0.0201562118716424)
--(axis cs:1,0.620156211871642)
--(axis cs:1501,1838.51269421161)
--(axis cs:3001,9784.33474701141)
--(axis cs:4501,19806.9569129834)
--(axis cs:6001,29789.0398905782)
--(axis cs:7501,39811.7930113417)
--(axis cs:9001,49754.4780354799)
--(axis cs:10501,59755.816898328)
--(axis cs:12001,69778.8513868225)
--(axis cs:13501,79877.1656735059)
--(axis cs:15001,89936.3802875801)
--(axis cs:16501,100008.313149313)
--(axis cs:18001,110076.415538881)
--(axis cs:19501,120099.092881161)
--(axis cs:21001,130098.206112798)
--(axis cs:22501,140106.20352604)
--(axis cs:24001,150166.219423131)
--(axis cs:25501,160148.941796229)
--(axis cs:27001,170151.002801272)
--(axis cs:28501,180136.930975349)
--(axis cs:28501,152889.669024651)
--(axis cs:28501,152889.669024651)
--(axis cs:27001,144376.997198728)
--(axis cs:25501,135785.458203771)
--(axis cs:24001,127210.580576869)
--(axis cs:22501,118616.79647396)
--(axis cs:21001,110090.993887202)
--(axis cs:19501,101544.107118839)
--(axis cs:18001,92953.384461119)
--(axis cs:16501,84334.2868506873)
--(axis cs:15001,75750.4197124199)
--(axis cs:13501,67158.2343264941)
--(axis cs:12001,58553.5486131775)
--(axis cs:10501,50015.783101672)
--(axis cs:9001,41447.9219645201)
--(axis cs:7501,32912.4069886583)
--(axis cs:6001,24364.1601094218)
--(axis cs:4501,15827.8430870166)
--(axis cs:3001,7222.06525298859)
--(axis cs:1501,1337.28730578839)
--(axis cs:1,-0.0201562118716424)
--cycle;

\addplot [semithick, color2, mark=diamond*, mark size=3, mark options={solid}]
table {%
1 0.3
1501 1587.9
3001 8503.2
4501 17817.4
6001 27076.6
7501 36362.1
9001 45601.2
10501 54885.8
12001 64166.2
13501 73517.7
15001 82843.4
16501 92171.3
18001 101514.9
19501 110821.6
21001 120094.6
22501 129361.5
24001 138688.4
25501 147967.2
27001 157264
28501 166513.3
};
\addlegendentry{O-TempLe}
\addplot [semithick, color0, mark=*, mark size=3, mark options={solid}]
table {%
1 1
1501 5776.6
3001 13118.7
4501 20768.4
6001 28560.9
7501 36395.5
9001 44169.1
10501 52016.4
12001 59933.5
13501 67856.7
15001 75728.8
16501 83690.7
18001 91585.3
19501 99507.4
21001 107453.7
22501 115346.3
24001 123232.6
25501 131240.4
27001 139119.1
28501 147019.5
};
\addlegendentry{Q-learning}
\addplot [semithick, color1, mark=square*, mark size=3, mark options={solid}]
table {%
1 0.9
1501 495.8
3001 1162.1
4501 1788.9
6001 2646.8
7501 3432.2
9001 4213.1
10501 7837
12001 15284.3
13501 24013
15001 32766.1
16501 42021.9
18001 51375.5
19501 60685.1
21001 70001.7
22501 79296.6
24001 88610.9
25501 97891
27001 107200.3
28501 116556.8
};
\addlegendentry{RMax}
\end{axis}

\end{tikzpicture}
		\vspace{-0.4em}
		\caption{{$10 \times 10$ GridWord}}
		\label{sfig:grid_large}
	\end{subfigure}
\vspace{-0.5em}
\caption{Performance of \ourmod on challenging single-task problems, compared with RMax and Q-learning.}
\label{fig:extend}
\end{figure}
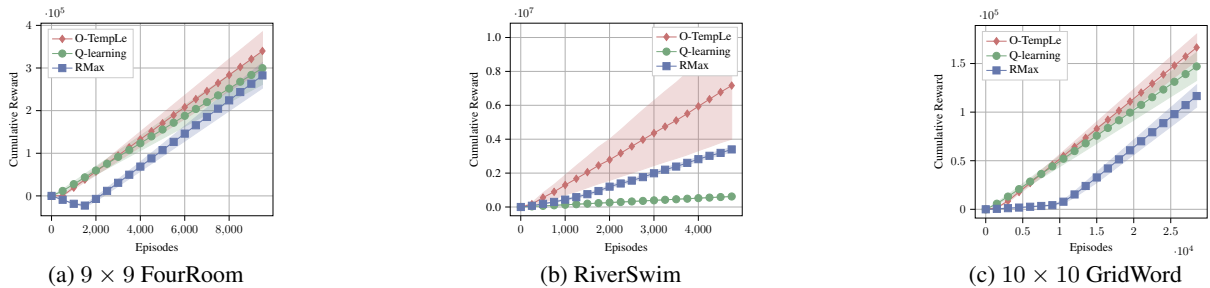

\newpage
\subsection{Discussion: Potential Extension to Deep RL}
\label{app:deep}

Model-based deep RL is an important research area~\cite{kaiser2019model,nagabandi2018neural}, where the learner learns a dynamics model of the environment. 
More specifically, the learner attempts to learn a function $f$ (usually parameterized by a neural network $\theta$) such that $f_{\theta}(s_t)$ approximates $s_{t+1}$, where $s_t$ is the current state and $s_{t+1}$ is the next state. The reward function can be modeled in a similar way, while we only discuss the transition model here for simplicity.


Our proposed \modname can be extended to large-scale MDPs and deep RL to learn the dynamics model more accurately. Below we explain the concrete method and some empirical results. 

\modname is essentially estimating the “relative” transition among states due to the permutation operation. For example, \modname considers the transition from $s_1$ to $s_2$ with probability 0.5 to be similar to the transition from $s_7$ to $s_8$ with probability 0.5, since the relative state difference of them is the same. This is equivalent to predicting a “state shift” in a continuous state space, which is $s_{t+1} - s_t$. In our paper, we focus on discrete state space and model the transition probabilities with discrete distributions.
Similarly, in the continuous case, we can use continuous distributions (e.g. Gaussian) to approximately model the state shift, without doing state counting and ranking. Note that \citet{nagabandi2018neural} also use the relative state shift in their deep RL model, whose experiments have justified the advantages of using relative state shift rather than absolute state difference. But their model is deterministic while we consider stochastic cases. 

In addition to the relative state shift modeling, another key idea of \modname is to cluster the old state-action dynamics and augment the new state-action dynamics. In the continuous case, if we assume the transition probabilities are from a mixture of Gaussian distributions, then a similar cluster and augment method can also be used. 
The extended algorithm works as follows:

(1) use a neural network (NN) to predict the relative state shift: $\hat{\Delta} \approx s_{t+1} - s_t$; \\
(2) approximately model $\Delta$’s using a mixture of Gaussian (other distribution models are also applicable). From the trajectories/history, we compute $\Delta= s_{t+1} - s_t$, cluster them (GEN-TT/TT-UPDATE step of \modname) and use the averaged $\bar{\Delta}$ from each Gaussian subpopulation/cluster to improve the prediction of the NN by minimizing MSE($\hat{\Delta}$, $\bar{\Delta}$); \\
(3) use $\bar{\Delta}$ to augment the accuracy of $\hat{\Delta}$ by identifying it into an existing cluster (AUGMENT step in \modname). As a result, we can learn an accurate prediction model of the environment.

We implemented the above idea on the continuous environments CartPole, LunarLander and Mujoco Hopper. We use a 2-layer MLP with 64 nodes per layer. The model learning methods are summarized as below.
\begin{itemize}
	\item \textbf{Absolute.} Directly predict the absolute next state.
	\item \textbf{Relative.} Predict the relative state shift~\cite{nagabandi2018neural}.
	\item \textbf{Relative+augment (ours).} Predict the relative state shift, and augment the model by clustering.
\end{itemize}

The results are shown in Figure~\ref{fig:deep}, where we can see that our method learns the most accurate model compared with two baselines, because learning relative state shift reduces the variance and the augmentation allows knowledge transferring among state-actions with similar dynamics. 

\begin{figure}[!htbp]
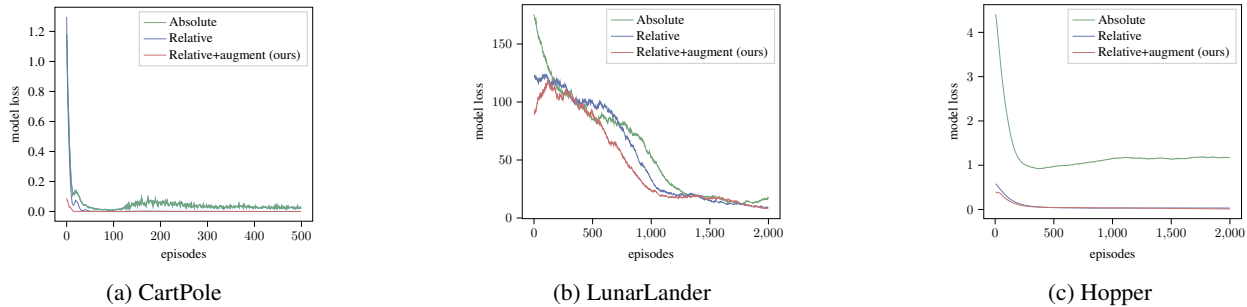

\centering
	\begin{subfigure}[t]{0.3\columnwidth}
		\centering
		\input{\fighome/new/cart.tex}
		\caption{CartPole}
		\label{sfig:cart}
	\end{subfigure}
	\hfill
	\begin{subfigure}[t]{0.3\columnwidth}
		\centering
		\input{\fighome/new/lunar.tex}	
		\caption{LunarLander} 
		\label{sfig:lunar}
	\end{subfigure}
	\hfill
	\begin{subfigure}[t]{0.3\columnwidth}
		\centering
		\input{\fighome/new/hopper.tex}	
		\caption{Hopper} 
		\label{sfig:hopper}
	\end{subfigure}
\caption{{Extending \modname to deep RL.} 
}
\vspace{-0.5em}
\label{fig:deep}
\end{figure}


\end{document}